\newtheorem{claim}{Claim}[section]
\newcommand\expt[2]{\underset{#1}{\E}\left[#2\right]}
\def\1{\bm{1}}
\def\va{{\bm{a}}}
\def\vv{{\bm{v}}}
\def\sqx{{\mathbf{x}}}
\def\sqy{{\mathbf{y}}}
\def\sqz{{\mathbf{z}}}
\def\mW{{\bm{W}}}
\def\mZ{{\bm{Z}}}
\DeclareMathAlphabet{\mathsfit}{\encodingdefault}{\sfdefault}{m}{sl}
\SetMathAlphabet{\mathsfit}{bold}{\encodingdefault}{\sfdefault}{bx}{n}
\def\gL{{\mathcal{L}}}
\def\gR{{\mathcal{R}}}
\def\gY{{\mathcal{Y}}}
\newcommand{\E}{\mathbb{E}}
\newcommand{\R}{\mathbb{R}}
\newcommand{\softmax}{\mathrm{softmax}}
\newcommand{\Var}{\mathrm{Var}}
\DeclareMathOperator{\sign}{sign}
\DeclareMathOperator{\parity}{parity}
\DeclareMathOperator{\ad}{and}
\DeclareMathOperator{\oor}{or}
\DeclareMathOperator{\sa}{SA}
\DeclareMathOperator{\final}{F}
\theoremstyle{plain}
\newtheorem{theorem}{Theorem}[section]
\newtheorem{prop}[theorem]{Proposition}
\newtheorem{lemma}[theorem]{Lemma}
\theoremstyle{definition}
\newtheorem{definition}[theorem]{Definition}
\theoremstyle{remark}
\icmltitlerunning{Transformers with RL or SFT  Provably Learn Sparse Boolean Functions, But Differently}
\begin{document}

\twocolumn[
  \icmltitle{Transformers with RL or SFT  Provably Learn Sparse \\ Boolean Functions, But Differently}



  \icmlsetsymbol{equal}{*}

  \begin{icmlauthorlist}
    \icmlauthor{Bochen Lyu}{equal,a,b}
    \icmlauthor{Yiyang Jia}{equal,c}
    \icmlauthor{Xiaohao Cai}{a}
    \icmlauthor{Zhanxing Zhu}{a}
  \end{icmlauthorlist}
  \icmlaffiliation{a}{School of Electronics and Computer Science, University of Southampton, United Kingdom}
  \icmlaffiliation{b}{DataCanvas, Beijing, China}
  \icmlaffiliation{c}{Independent Researcher}
 
  \icmlcorrespondingauthor{Zhanxing Zhu}{z.zhu@soton.ac.uk}

  \vskip 0.3in
]



\printAffiliationsAndNotice{\icmlEqualContribution}

\begin{abstract}
Transformers can acquire Chain-of-Thought (CoT) capabilities to solve reasoning tasks via fine-tuning. Reinforcement learning (RL) and supervised fine-tuning (SFT) are two primary approaches to this end. In this work, we examine RL with verifiable process rewards and SFT for learning $k$-sparse Boolean functions with a one-layer transformer through intermediate reasoning steps akin to CoT. In particular, we consider Boolean functions that can be recursively decomposed into fixed 2-sparse Boolean functions. We first analyze the learning dynamics of RL fine-tuning with verifiable process rewards and SFT in a unified way, allowing us to identify sufficient conditions under which the transformer provably learns these functions. We then verify that the conditions hold for three examples, including $k$-\texttt{PARITY}, $k$-\texttt{AND}, and $k$-\texttt{OR}, thus demonstrating their learnability via both RL and SFT. Notably, we reveal that RL and SFT exhibit distinct learning behaviors depending on supervision: RL learns the whole CoT chain simultaneously, whereas SFT without teacher forcing learns the CoT step-by-step. Overall, our findings provide insights on the mechanisms underlying RL and SFT and how they differ in triggering the CoT capabilities of transformers, and suggest that the comparison between RL and SFT should consider the intermediate supervision.
\end{abstract}

\section{Introduction}
Large language models~(LLMs), with the transformer architecture being their core building block,  are remarkably successful across a wide range of tasks, in particular reasoning. LLMs excel in solving complex reasoning tasks by iteratively generating intermediate steps~\citep{wei2022chain}---an intriguing approach known as Chain-of-Thought~(CoT). Fine-tuning has been shown to be a powerful method for improving CoT generation in LLMs, which in turn improves the multi-step reasoning performance of LLMs significantly~\citep{zelikman2022star,lightman2024lets}. 

A widely adopted approach for fine-tuning to generate CoT is supervised fine-tuning~(SFT), where the transformers are trained to minimize a loss over pairs of inputs and labeled outputs. Another increasingly prevalent fine-tuning approach is reinforcement learning~(RL)~\citep{deepseekai2025deepseekr1incentivizingreasoningcapability,ouyang2022traininglanguagemodelsfollow,bai2022traininghelpfulharmlessassistant}. Instead of minimizing a loss over labeled CoT data, RL guides transformers to generate CoT to solve complex reasoning tasks by maximizing a reward function via policy gradient  methods~\citep{shao2024deepseekmathpushinglimitsmathematical,deepseekai2025deepseekr1incentivizingreasoningcapability,mnih2016asynchronousmethodsdeepreinforcement,schulman2017proximalpolicyoptimizationalgorithms}, which has shown significant potential for improving the reasoning capabilities of LLMs. 

The remarkable success of fine-tuned transformers has spurred significant interest in understanding their underlying mechanisms. A large number of works have investigated various aspects such as expressivity~\citep{li2024chainthoughtempowerstransformers,merrill2024expressivepowertransformerschain} and estimation error~\citep{hu2024unveilingstatisticalfoundationschainofthought}. For SFT, \citet{kim2025transformers} studied the emergence of CoT by formalizing the mechanism specifically in the bit subset parity problem. They showed that a one-layer transformer provably learns parity in one-update via SFT with teacher-forcing and intermediate supervision akin to CoT, providing promising theoretical insights into reasoning capabilities acquired by SFT. Nevertheless, the theoretical understanding of RL~(with process reward) and of how SFT and RL differently enhance reasoning capabilities of transformers is inadequate or even absent. This lack of theoretical understanding stands in stark contrast to the success of RL fine-tuning and the growing body of empirical observations for the comparison between RL and SFT, such as SFT memorizes but RL generalizes~\citep{chu2025sft}.
\begin{figure*}[h]
    \centering
    \includegraphics[width=0.85\linewidth]{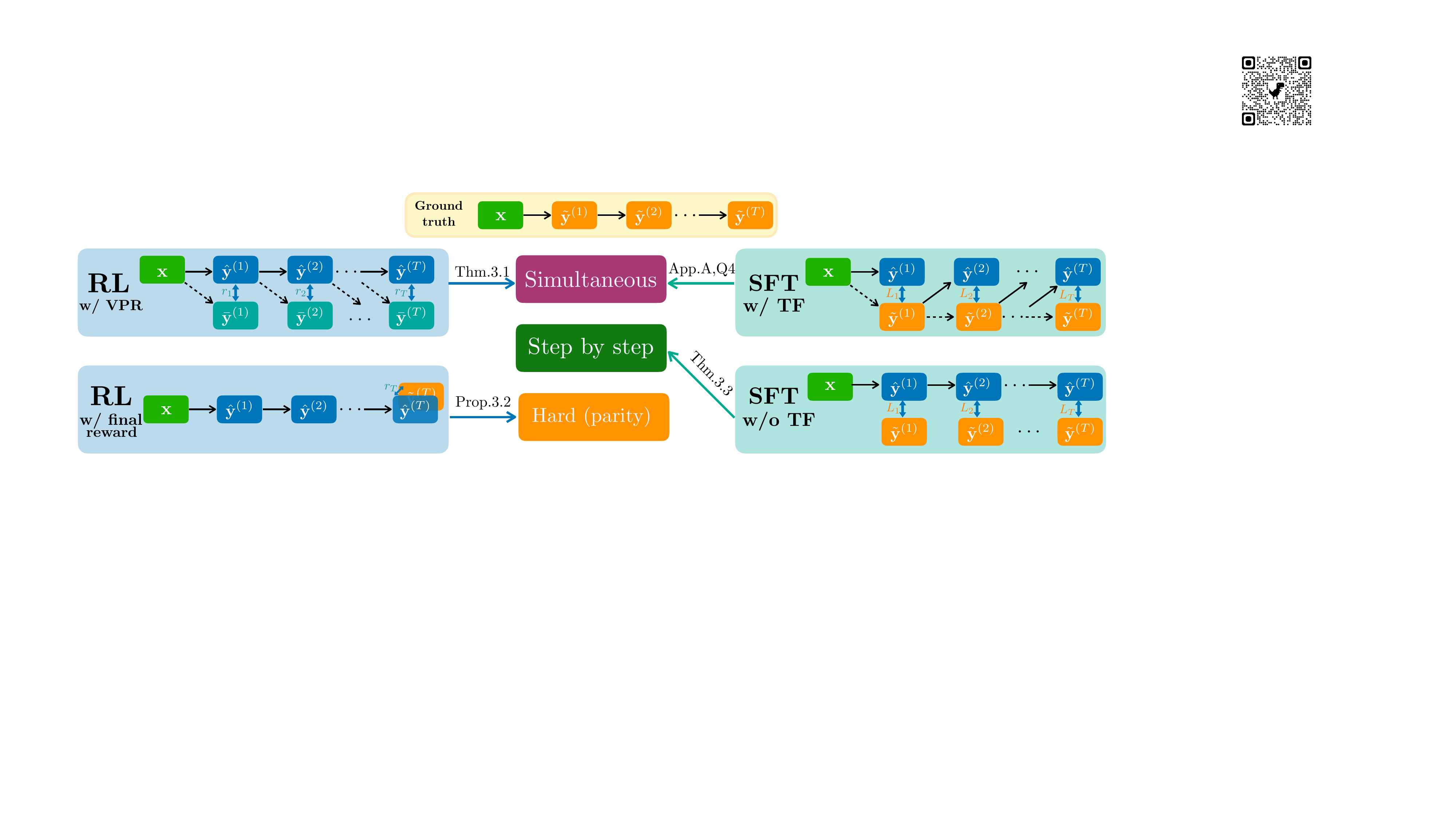}
    \caption{\textbf{Different learning objectives and the resulting learning behaviors under different RL/SFT settings.} The ground-truth CoT chain is shown at the top. In RL with verifiable process rewards (RL w/ VPR), the model samples an autoregressive trajectory $(\hat{\sqy}^{(1)},\ldots,\hat{\sqy}^{(T)})$, and the step-$t$ reward is computed using $\bar{\sqy}^{(t)}$, the correct label conditioned on the previously generated step $\hat{\sqy}^{(t-1)}$. Thus, every step can receive useful feedback along the sampled trajectory, enabling simultaneous learning of the whole CoT chain. In RL with only final reward, supervision is provided only for the final answer; for parity, the resulting policy gradient contains little target-specific information, making learning difficult. In SFT with teacher forcing (SFT w/ TF), every CoT step is trained under the correct preceding steps, so all steps receive clean supervision and can be learned simultaneously. In SFT without teacher forcing (SFT w/o TF), the model instead generates the trajectory autoregressively, while each generated step $\hat{\sqy}^{(t)}$ is compared with the fixed ground-truth label $\tilde{\sqy}^{(t)}$. Consequently, supervision for later steps becomes useful only after earlier generated steps are correct, leading to stepwise learning.
}
    \label{fig:summary_theoretical_results}
\end{figure*}

In this work, we address this gap by focusing on the problem of learning a broad class of functions~\citep{bhattamishra-etal-2023-simplicity}---\emph{$k$-sparse Boolean functions}~(Def.~\ref{def:bool_funcs}) that can be recursively decomposed into fixed 2-sparse Boolean 
functions, which is a specific subclass of $k$-sparse Boolean functions and includes the bit subset parity problem as an example. These functions are hard to learn in an end-to-end manner but can be provably learned with intermediate supervision~\citep{kim2025transformers,wies2023subtaskdecompositionenableslearning,shamir2017distributionspecifichardnesslearningneural,hu2025minimalistsoftmaxattentionprovably} that is akin to CoT, and hence provide suitable case studies for the emergence of reasoning capabilities for transformers through fine-tuning.  

In particular, we examine the learning dynamics of \emph{RL fine-tuning with verifiable process reward~\citep{shao2024deepseekmathpushinglimitsmathematical}} optimized by vanilla policy gradient, and \emph{SFT without teacher-forcing} and augmented data, which is not covered by \citet{kim2025transformers}. 
Furthermore, for theoretical tractability, we analyze a one-layer transformer, consisting of a positional encoding, a self-attention layer, and a feedforward layer. This transformer is iteratively applied to its own output to generate intermediate steps before arriving at the final answer. We use SFT to refer to SFT without teacher forcing, and the extension to SFT with teacher forcing is direct (see Q\ref{faq:q4} of Appendix~\ref{app:faq}).

\textbf{Our contribution.}
\begin{itemize}
    \item For $k$-sparse Boolean functions~(Definition~\ref{def:bool_funcs}) that are recursively decomposable using a fixed 2-sparse Boolean function, we decompose the learning into multi-step reasoning tasks~(Fig.~\ref{fig:bool_funcs} and \ref{fig:serial_cot}). This allows a unifying analysis on a class of problems. 
    \item We present a unifying approach to analyze the corresponding learning dynamics of both RL with verifiable process reward and SFT for diverse $k$-sparse Boolean functions by identifying a novel \emph{critical gradient component}. Our analysis reveals that the separation of this component serves as a sufficient condition for learnability:
    \begin{itemize}
        \item For RL~(optimized by policy gradient), the transformer learns the target function after \emph{a single gradient update} under the separation condition~(Theorem~\ref{thm:condition}), acquiring the entire reasoning chain simultaneously. This indicates that RL with process reward can exhibit similar simultaneous learning as SFT with teacher forcing.
        \item For SFT~(in the absence of both teacher forcing and data augmentation), the learnability is guaranteed under a similar separation condition~(Theorem~\ref{thm:condition_sft}), but the transformer naturally exhibits a distinct step-wise learning behavior: it learns the reasoning chain \emph{step-by-step}, requiring one gradient update per step and thus demanding a total number of updates equal to the chain length.
    \end{itemize}
    \item We apply this unified approach to three basic examples: $k$-\texttt{PARITY}, $k$-\texttt{AND}, and $k$-\texttt{OR}~(Sec.~\ref{sec:specific_bool_funcs}). In each case, we verify that the separation condition of the critical gradient component is satisfied for both RL and SFT, validating the corresponding learnability of these specific functions. 
\end{itemize}
The RL and SFT distinction alone does not determine the learning dynamics (step-by-step vs. simultaneous). In our analysis, the learning behavior also depends on whether each reasoning step receives useful feedback for the state the model is trained on, which is affected by whether supervision is applied only to final answers, under teacher-forced prefixes, or along self-generated trajectories. Empirical comparisons should therefore account for these factors rather than attribute the observed behavior solely to RL or SFT.

\textbf{Comparison with related works. } The work most relevant to ours is \citet{kim2025transformers}~(more related works in App.~\ref{app:related_works}), which studied learning $k$-\texttt{PARITY} through SFT with teacher-forcing and SFT without teacher-forcing but with data augmentation and self-verification filtering. In contrast, we focus on comparing the learning dynamics of RL fine-tuning and SFT by studying a broad class of sparse Boolean functions, providing a unifying analysis. In addition, our model allows token generation via sampling, better aligning with practical autoregressive inference. Furthermore, we analyze RL fine-tuning and SFT without teacher-forcing and without any data augmentation or self-verification, which addresses the mismatch between training and inference. Together, these elements provide a new conceptual perspective.


\textbf{Organization. }We first present the problem setup in Sec.~\ref{sec:setup}. 
We then establish sufficient conditions for the transformer to learn the general $k$-sparse Boolean functions via RL fine-tuning~(Sec.~\ref{sec:general_rl}) and SFT~(Sec.~\ref{sec:general_sft}), thereby giving a unified view of learnability. Finally, we examine these sufficient conditions specifically in $k$-\texttt{PARITY}~(Sec.~\ref{sec:parity}), $k$-\texttt{AND}, and $k$-\texttt{OR}~(Sec.~\ref{sec:other_bool}) in detail, 
\begin{figure*}
    \centering
    \begin{minipage}{0.55\textwidth}
        \centering
        \includegraphics[width=\textwidth]{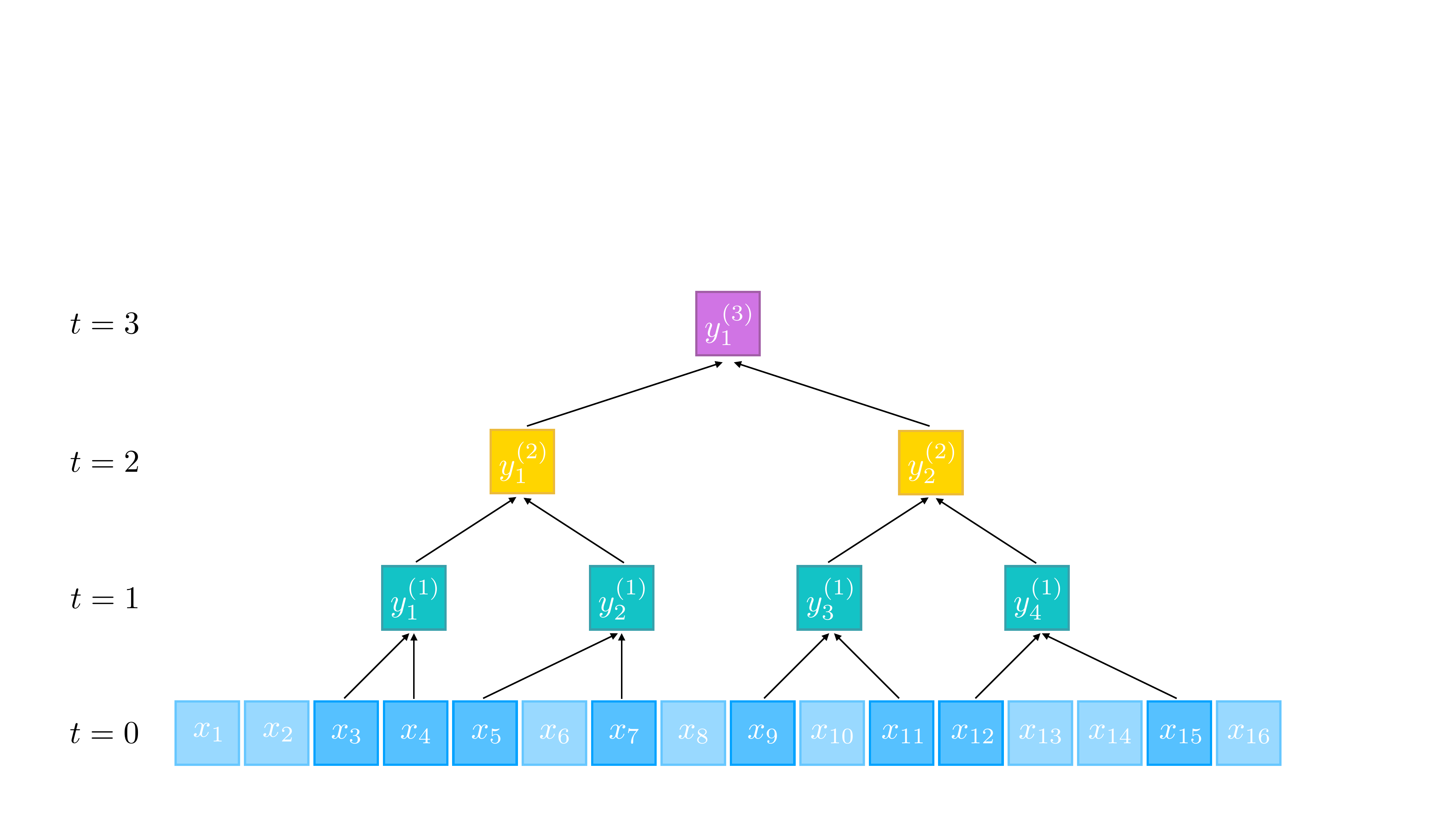}
        \subcaption{Recursive decomposition of solving $\Phi_k(\sqx)$}
        \label{fig:bool_funcs}
    \end{minipage}
    \hfill
        \begin{minipage}{0.44\textwidth}
        \begin{subfigure}{\textwidth}
        \centering
        \includegraphics[width=0.45\textwidth]{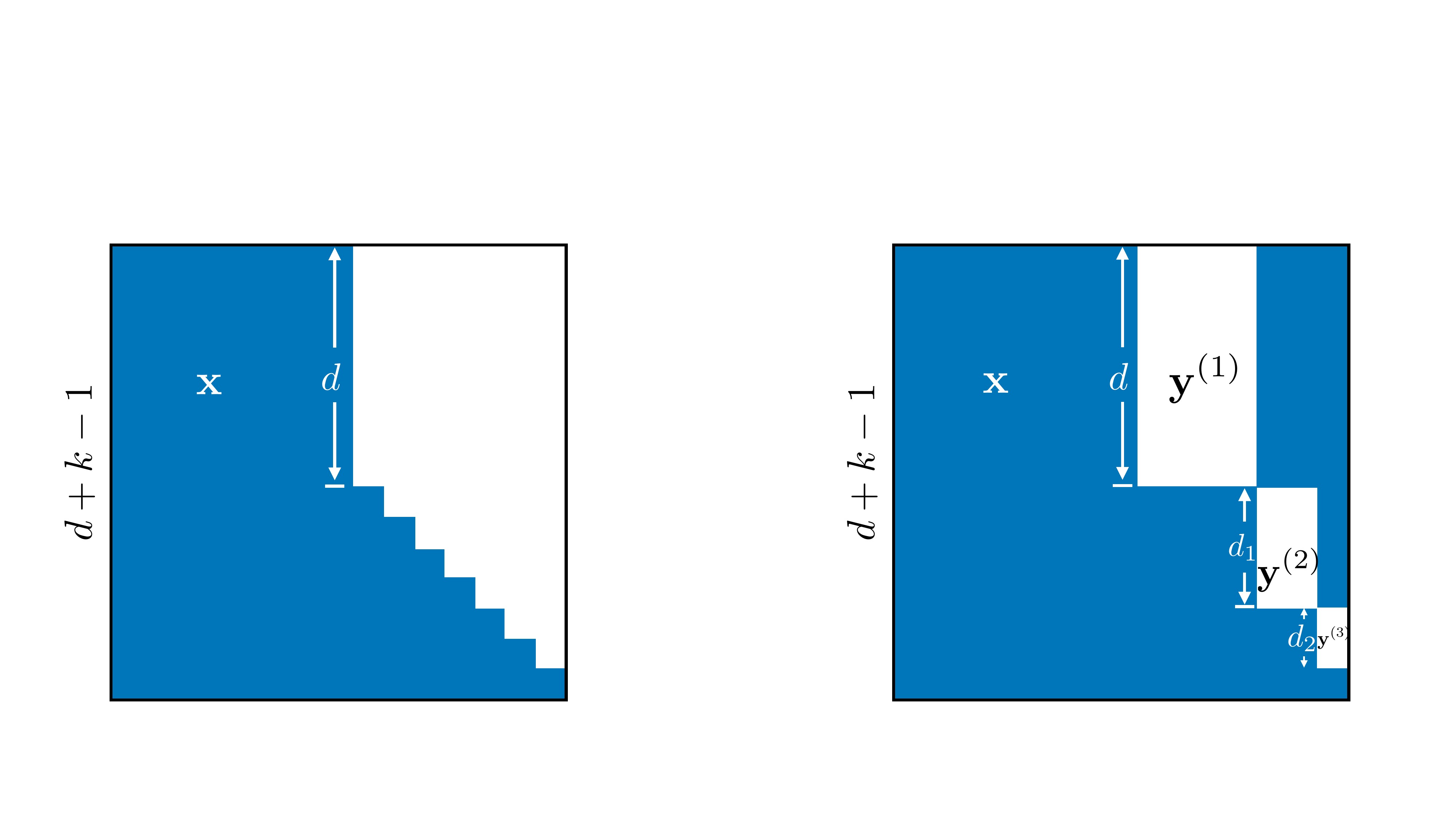}
        \hspace{0.1cm}
        \includegraphics[width=0.45\textwidth]{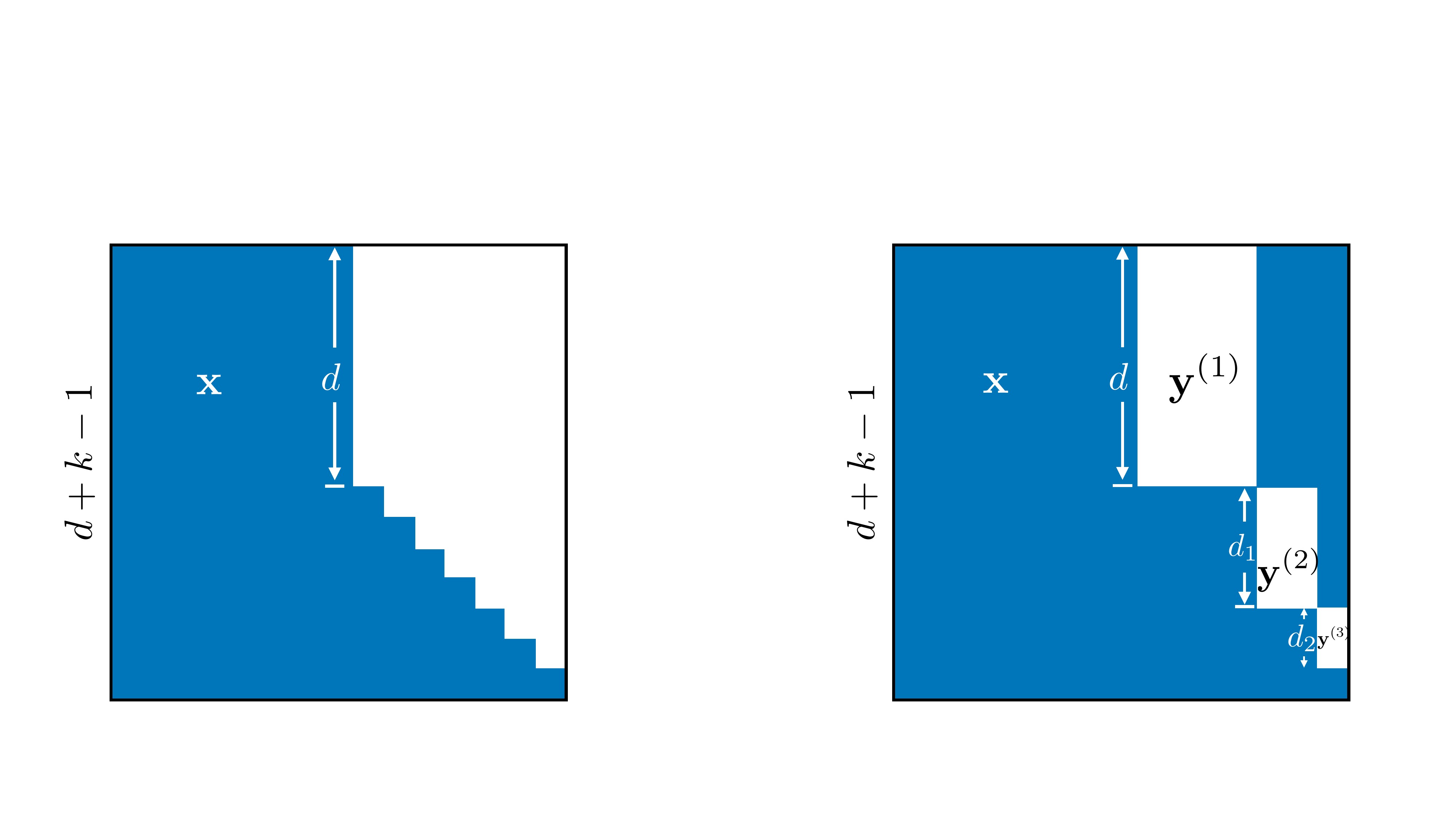}
        \vspace{0.5cm}
        \subcaption{Different masks for $\mW$}
        \label{fig:mask_W}
        \end{subfigure}
    \end{minipage}
    \caption{\textbf{(a)} Recursive decomposition of learning a $k$-sparse Boolean function $\Phi_k(\sqx)$ with a random set $B\subseteq [d]$~(shaded boxes in the lowest level) into solving sub-tasks by following a reasoning chain (bottom to top). Each level of the binary tree corresponds to a step of the reasoning chain, where each node in a level computes a 2-sparse Boolean function $\phi_2(\cdot, \cdot)$ over its two child nodes. \textbf{(b)} Self-attention weight $\mW$ with different masks (blue entries are set as $-\infty$): \textbf{left:} the mask for $\sqx$ and causal mask; \textbf{right:} the mask for $\sqx$, causal mask, and ``pretrained'' mask.}
    \label{fig:combined_layout}
    \vspace{-0.3cm}
\end{figure*}proving that both RL and SFT enable the transformer to learn these functions. We add an FAQ in App.~\ref{app:faq}, and present experiments in App.~\ref{app:exp}.
\section{Setup}
\label{sec:setup}
\subsection{Sparse Boolean Functions and the Decomposition}
\label{sec:bool_funcs}
\begin{definition}[$k$-sparse Boolean functions]
\label{def:bool_funcs}
    For a $d$-bit input sequence $\sqx \in \{+1, - 1\}^{d}$ that follows the uniform distribution, let $B = \{i_1, i_2, \dots, i_k\}\subseteq [d]$ with $2 \leq |B| = k \leq d$, then $\Phi_k: \{+1, - 1\}^{d} \to \{\pm 1\}$ is a $k$-sparse Boolean function if it takes the input $\sqx$ and is determined by coordinates in $B$, i.e., $\Phi_k(\sqx) = \phi_k(x_{i_1}, x_{i_2}, \dots, x_{i_k})$ for some fixed $\phi_k: \{-1, + 1\}^{k} \to \{\pm 1\}$.
\end{definition}
In this paper, we only consider $k$-sparse Boolean functions that can be recursively decomposed into fixed 2-sparse Boolean functions $\phi_2$. Specifically, we will study three such $k$-sparse Boolean functions:
\begin{itemize}
    \item $k$-\texttt{PARITY}~(i.e., the bit subset parity): $\Phi^{\parity}_k(\sqx)$ equals $+ 1$ if the number of $-1$ of $\sqx$ in $B$ is even and equals $ - 1$ otherwise, i.e., $\Phi^{\parity}_k(\sqx) = \prod_{i \in B} x_i$.
    \item $k$-\texttt{AND}: $\Phi^{\ad}_k(\sqx)$ equals $+ 1$ if $x_{i} = 1$ for all coordinates $i \in B$ and equals $ - 1$ otherwise, i.e., $ \Phi^{\ad}_k(\sqx) = 2\prod_{i \in B} \left(\frac{x_i + 1}{2}\right) - 1.$
    \item  $k$-\texttt{OR}: $\Phi^{\oor}_k(\sqx)$ equals $+ 1$ if there is at least one $x_{i} = 1$ for the coordinates $i \in B$  and equals $ - 1$ otherwise, i.e., $\Phi^{\oor}_k(\sqx) = 1 - 2\prod_{i \in B} \left(\frac{1 - x_i}{2}\right).$
\end{itemize}
\paragraph{Recursive Decomposition.}$k$-\texttt{PARITY} is known to be hard to learn with gradient-based methods in an end-to-end manner, as it has been proved that the gradient  contains negligible information of the objective function~\citep{shalevshwartz2017failuresgradientbaseddeeplearning}. To enable efficient learning,  \citet{wies2023subtaskdecompositionenableslearning,kim2025transformers} decomposed $k$-\texttt{PARITY} into sub-tasks that only perform 2-parity computations, allowing the model to learn
in a sequential manner. Inspired by them, we consider $k$-sparse Boolean functions $\Phi_k(\cdot)$ that can be recursively decomposed into fixed 2-sparse Boolean functions $\phi_2(\cdot, \cdot)$. In particular, we assume $k = 2^T$ for an integer $T$, and recursively decompose the calculation of $\Phi_k(\cdot)$ into $T$ intermediate steps. This forms a length-$T$ reasoning chain in which each step $t \in [T]$ performs a series of fixed 2-sparse Boolean function calculations. The recursive decomposition can be expressed by a complete binary tree with height $T$ and $2k - 1$ nodes~(Fig.~\ref{fig:bool_funcs}): each non-leaf node has its value computed by a fixed 2-sparse Boolean function $\phi_2(\cdot, \cdot)$ over its child nodes, and the $t$-th level with $d_t = k / 2^{t}$ nodes accounts for the $t$-th step of the reasoning chain; the lowest level has $k$ nodes $x_{i_j}$ for $i_j\in B$, representing the input data.

Formally, given an input $\sqx \in \{-1, + 1\}^{d}$, a random subset $B \subseteq [d]$ with $|B| = k$, and a $k$-sparse Boolean function $\Phi_k(\sqx)$, if we denote the sequence at the $t$-th level in Fig.~\ref{fig:bool_funcs} as $\sqy^{(t)} = (y^{(t)}_1, \dots, y^{(t)}_{d_t}) \in \{-1, + 1\}^{d_t}$ and the input data as $\sqy^{(0)} = \sqx$, then
\begin{equation}
\label{eq:solve_sub_task}
\begin{aligned}
   \forall t \in [T],\ j \in [d_t]: \ y^{(t)}_j  = \phi_2\left( y^{(t - 1)}_{i_1^{j}}, y^{(t - 1)}_{i_2^{j}}\right),
\end{aligned}
\end{equation}
where $i_1^j$ and $i_2^j$ are two child nodes of the $j$-th node of level $t$ and are named as \emph{relevant positions} as they determine $y_j^{(t)}$. 
The final answer is $\sqy_1^{(T)} = \phi_2(y_{1}^{(T - 1)}, y_{2}^{(T - 1)}) = \Phi_k(\sqx)$.

Finally, given $\Phi_k(\sqx)$, the corresponding subset $B$, and sufficient training samples, the goal of learning on the training samples is to successfully predict $\Phi_k(\sqx)$ for any test $\sqx$.
\paragraph{Extension to serial CoT decomposition.}
While our main analysis in this paper uses the complete binary-tree decomposition, the analytical framework is not restricted to this balanced structure. It also applies to more common \emph{serial CoT-style decompositions}, where the intermediate reasoning steps are generated one by one. In particular, with $B=\{i_1,\ldots,i_k\}$ for $k \geq 2$, consider the $k$-sparse Boolean functions that  can be recursively solved by repeatedly applying the fixed 2-sparse Boolean function in a CoT style:
\begin{equation*}
        y_1 =\phi_2(x_{i_1},x_{i_2}), \ \ 
y_j =\phi_2(y_{j-1},x_{i_{j+1}}),\quad j \geq 2.
\end{equation*}
For example, for $k$-\texttt{PARITY}, $\phi_2(a,b)=ab$, so $y_j=\prod_{r=1}^{j+1}x_{i_r}$, namely the parity of the first $j+1$ relevant bits. This forms a length-$(k - 1)$ reasoning chain in which each step $t \in [k - 1]$ performs a fixed 2-sparse Boolean function calculation~(Fig.~\ref{fig:serial_cot}). Compared to the complete-binary tree decomposition, this serial decomposition does not require $k=2^{T}$ for some integer $T$ and each token $y_{j}$ is a reasoning step, where the relevant positions for generating $y_j$ are its last step $y_{j-1}$ and $x_{i_{j+1}}$ from the input $\sqx$, since all earlier information is summarized by $y_{j-1}$. In App.~\ref{app:serial_cot}, we will show that, for the one-layer transformer introduced in Sec.~\ref{sec:def_transformer} but with a different ``pretrained'' mask, the same gradient separation analysis applies to this serial decomposition. Therefore, the same learning behaviors hold: RL with process rewards and SFT with teacher forcing learn the whole serial CoT chain simultaneously, while SFT without teacher forcing learns step by step under an appropriate filtering rule. Thus, the complete binary tree should be viewed as a tractable decomposition used for our analysis in the main paper, rather than as a fundamental restriction of the framework.
\begin{figure}
    \centering
    \includegraphics[width=0.99\linewidth]{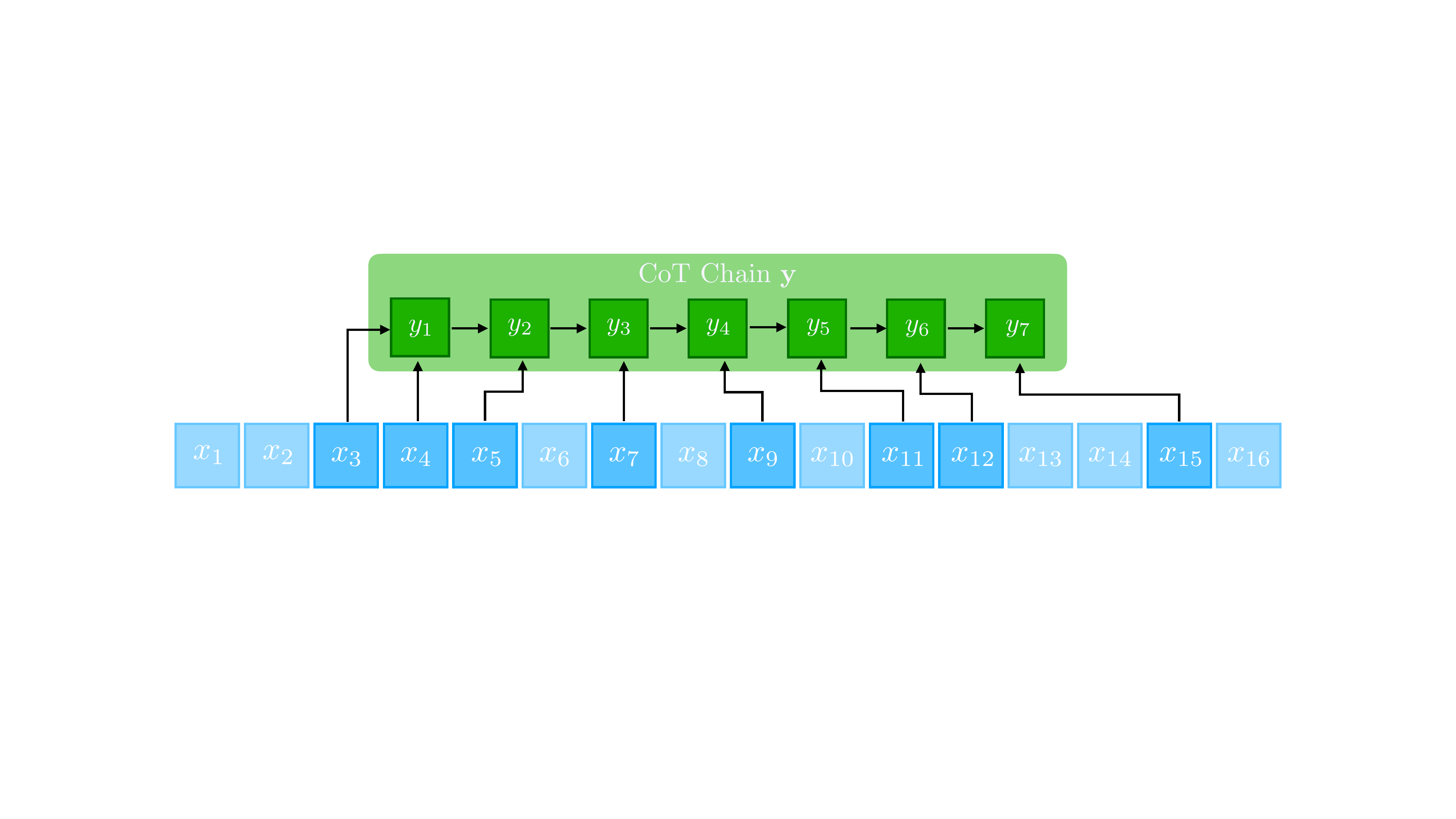}
    \caption{Serial CoT-style decomposition of learning a $k$-sparse Boolean function $\Phi_k(\sqx)$ with a random set $B\subseteq [d]$~(shaded boxes in the lowest level) into solving sub-tasks by following a reasoning chain (green boxes). Each green box corresponds to a step of the reasoning chain that computes a 2-sparse Boolean function $\phi_2(\cdot, \cdot)$ over its relevant bits.}
    \label{fig:serial_cot}
    \vspace{-0.3cm}
\end{figure}
\subsection{``Pretrained" Transformer and Chain-of-Thoughts}
\label{sec:def_transformer} 
For theoretical tractability, we investigate a one-layer transformer, consisting of the positional encoding, a self-attention layer, and a feedforward layer. Following the recursive decomposition in Sec.~\ref{sec:bool_funcs}, we will let the transformer iteratively generate the intermediate steps $\sqy^{(t)}$ with $d_t = k / 2^{t}$ tokens for $t \in [T]$~(akin to CoT) before arriving at the final answer. For convenience, the total number of tokens of the whole reasoning sequence $(\sqx, \sqy^{(1)}, \dots, \sqy^{(t)})$ is denoted by $N_t = \sum_{\tau = 0}^{t} d_{\tau}$ with $d_0 =d $ and $N_{- 1} = 0$. 

We introduce the transformer below, from its input to the forward pass, and then explain the Chain-of-Thought.

\textbf{Input sequence. } Given an input $\sqx$ and the recursive decomposition of the $k$-sparse Boolean function $\Phi_k(\sqx)$~(Fig.~\ref{fig:bool_funcs}), we add tokens $\sqy = ({\color{blue}\sqy^{(1)}}, \dots, {\color{purple}\sqy^{(T)}})$ to $\sqx$ which are all initialized as $\mathbf{0}$ and will be iteratively updated by the transformer to serve as intermediate steps of the reasoning chain. This gives us $\sqz = (\sqx, \sqy) \in \{-1, + 1, 0\}^{d + k - 1}$.

\textbf{Positional encoding. } We concatenate the positional encoding $\mathbf{e}_j\in \R^{d + k - 1}$~(the $j$-th standard basis vector of $\R^{d + k - 1}$) to $z_j$ to form the complete encoding of the input sequence $\mZ \in \R^{(d + k) \times (d + k - 1)}$ (with $\sqx$ and $\sqy$ explicitly written as components)
\begin{equation*}
    \mZ = \begin{pmatrix}
        x_1 & \cdots & x_d & {\color{blue} y_1^{(1)}} &  {\color{blue} y_2^{(1)}} & {\color{blue}\cdots} & {\color{purple} y_1^{(T)}}\\
        \mathbf{e}_1 & \cdots & \mathbf{e}_{d} & \mathbf{e}_{d + 1}& \mathbf{e}_{d + 2}& \cdots & \mathbf{e}_{d + k - 1}
    \end{pmatrix}.
\end{equation*}
\paragraph{Self-Attention.}A single-head self-attention layer (without residual connection) then updates $\mZ$ to 
\begin{equation}
\label{eq:self-attention}
    f^{\sa}(\mZ) = \mW_V\mZ \ \softmax\left[ (\mW_K\mZ )^\top (\mW_Q\mZ)\right] 
\end{equation}
where $\mW_V \in \R^{1 \times (d + k)}$ is the value matrix, $\mW_K, \mW_Q \in \R^{d'\times (d + k)}$ are the key and query matrices, respectively, and $\softmax$ is applied column-wise. In this paper, we simplify the self-attention by merging the key matrix and query matrix as a single matrix $\mW_{KQ}: = \mW_K^\top\mW_Q \in \R^{(d + k)\times (d + k)}$ to focus only on the positional encoding while setting $\mW_V$ to preserve only the $(\sqx, \sqy)$ component, namely $\mW_{KQ} = \begin{pmatrix}
        0 & {\bm 0}_{1\times(d + k - 1)}\\
        {\bm 0 }_{(d + k - 1)\times 1} & \mW
        \end{pmatrix},\ \  \mW_V = \begin{pmatrix}
        1 &  {\bm 0}_{1\times(d + k - 1)}
    \end{pmatrix}.$
This has been a common choice in recent works due to its theoretical tractability~(e.g., \citealp{kim2025transformers,vonoswald2023transformerslearnincontextgradient,zhang2023trainedtransformerslearnlinear,lyu2025a,huang2023incontextconvergencetransformers}). Then the final form of the output of the self-attention is  $[f^{\sa}(\bm Z)]_{j}= \sum_{i}z_i\sigma_{j}^{i}$ where we define the attention score as $\sigma_{j}^i:= \exp(W_{i,j})/ \sum_{m}\exp(W_{m,j})$.
\begin{figure*}
    \centering
    \includegraphics[width=.65\textwidth]{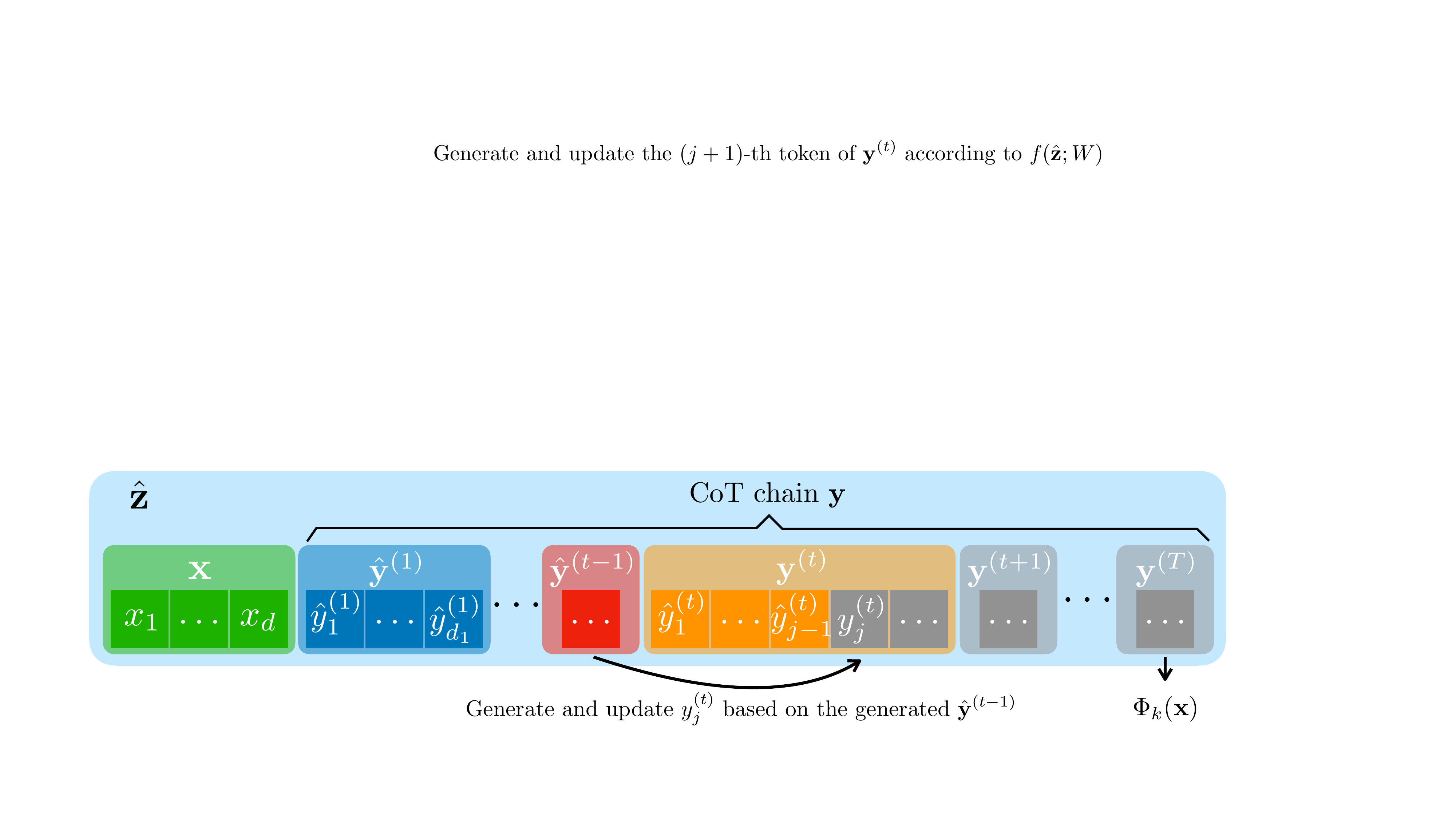}
    \caption{The pretrained transformer iteratively uses its output to solve $\Phi_k(\sqx)$ in a CoT manner.}
    \label{fig:cot}
    \vspace{-0.3cm}
\end{figure*}

\textbf{Feedforward layer. } The feedforward layer includes an activation function $\psi:[-1, 1] \to [0, 1]$ that applies to $f^{\sa}(\mZ)$ element-wise. The form of $\psi$ depends on that of the $2$-sparse Boolean function $\phi_2$. This is because $\psi([f^{\sa}(\mZ)]_j)$ is related to how the token at the position $j$ depends on its prior tokens, and such dependence is determined by $\phi_2$.

\textbf{``Pretrained" transformer. } Taken together, the transformer $f(\cdot; \mW)$ takes an input sequence $\sqz = (\sqx, \sqy)$ and outputs $f(\sqz; \mW) = \psi(f^{\sa}(\mZ))$, where the causal mask $W_{i,j} = -\infty$ for $i \geq j$ is implicitly added and we only consider $j \geq d + 1$~($W_{i, j} = - \infty$ for any $j \leq d$), because only the reasoning chain $\sqy$ in the input $\sqz$ will be iteratively generated by the transformer. Furthermore, since we will consider fine-tuning via RL, the output $f(\sqz; \mW)$ should correspond to the next action for the current state, which is related to an intermediate reasoning step in Fig.~\ref{fig:bool_funcs}. Thus, it is natural for each action of RL to account for a reasoning step $\sqy^{(t)}$ for the reasoning chain $\sqy$. In light of that $\sqy^{(t)}$ only depends on $\sqy^{(t - 1)}$ for any $t \in [T]$~(Fig.~\ref{fig:bool_funcs}) and tokens in the same sequence $\sqy^{(t)}$ are independent of each other, aside from the causal masks, we impose a ``pretrained mask" on $\mW$ that explicitly exploits such dependence to control the error propagation of the reasoning chain as follows~(Fig.~\ref{fig:mask_W}): given $t \in [T], \forall j \in \left[ N_{t - 1} + 1,  N_t \right]$,
\begin{equation*}
    W_{i, j} = -\infty \ \text{ when }  \begin{cases}
        i > d, & \text{ if }t = 1, \\
        i > N_{t - 1} \text{ or } i \leq   N_{t - 2}, & \text{ otherwise.}
    \end{cases}
\end{equation*}
We heuristically view the transformer $f(\cdot; \mW)$ with this mask as our ``pretrained transformer", which is a result of masking rather than direct pretraining. Despite this distinction, the mask serves as a structural prior, which, in essence, abstractly captures the idea that pretraining provides priors for fine-tuning. 

\textbf{Forward pass of the transformer. } $f(\cdot; \mW)$ takes an input sequence $\sqz = (\sqx, \dots, {\color{red}\sqy^{(t - 1)}},  {\color{orange}\sqy^{(t)}}, \dots, {\color{purple}\sqy^{(T)}})$ and gives
\begin{equation}
\label{eq:output_transformer}
\begin{aligned}
    & \left[f(\sqz; \mW)\right]_{N_{t - 1} + l^{(t)}}  =\psi\left( \xi_{l^{(t)}}\right), \\
    & \xi_{l^{(t)}}: = \sum_{i = 1}^{d_{t - 1}} {\color{red}y^{(t - 1)}_i } \sigma_{ N_{t - 1} + l^{(t)}}^{N_{t - 2} + i},  
\end{aligned}
\end{equation}
to update the $l^{(t)}\in [d_{t}]$ token of the reasoning sequence ${\color{orange}\sqy^{(t)}}$, where the attention score $\sigma_{j}^i$ determines the positions that the current token attends to. Thus, $\xi_{l^{(t)}}$ indicates how tokens of $\sqy^{(t - 1)}$ contribute to the calculation of $y_{l^{(t)}}^{(t)}$.

\textbf{Chain-of-Thoughts.} We now describe how the ``pretrained" transformer iteratively exploits its own output to generate intermediate steps to solve the $k$-sparse Boolean function, akin to CoT~(Fig.~\ref{fig:cot}). Specifically, given the input sequence $\sqz = (\sqx, {\color{gray}\sqy^{(1)}, \dots, \sqy^{(t)}, \dots, \sqy^{(T)}})$ with all intermediate reasoning steps ${\color{gray}\sqy^{(t)}}$ initialized as $\mathbf{0}$, starting from $t= 1$ to $T$, the transformer $f(\cdot; \mW)$ iteratively generates the token ${\color{orange}\hat{y}^{(t)}_{l^{(t)}}}$ of ${\color{orange}\hat{\sqy}^{(t)}}$ from $l^{(t)} = 1$ to $l^{(t)} = d_t$ according to Eq.~\eqref{eq:output_transformer}, e.g., by sampling from $f(\hat{\sqz}; \mW)$ if it is a distribution, and applies the generated token to update $\hat{\sqz} = (\sqx, \dots, {\color{red}\hat{\sqy}^{(t - 1)}},  {\color{orange}\hat{y}_1^{(t)}, \dots, \hat{y}_{l^{(t)} - 1}^{(t)}}, {\color{gray}y_{l^{(t)}}^{(t)}, \dots, \sqy^{(T)}})$. This is repeated until all steps of the reasoning sequence $\hat{\sqz} = (\sqx, \dots, {\color{red}\hat{\sqy}^{(t - 1)}},  {\color{orange}\hat{\sqy}^{(t)}}, \dots, {\color{purple}\hat{\sqy}^{(T)}})$ are updated, and ${\color{purple}\hat{\sqy}^{(T)}}$ will be the final answer.

\section{Sufficient Conditions for Provable Learning}
\label{sec:condition_rl_sft}
We now study the learning dynamics of fine-tuning the transformer to learn the $k$-sparse Boolean functions via RL~(Sec.~\ref{sec:general_rl}) and via SFT~(Sec.~\ref{sec:general_sft}). To this end, we will develop a unifying approach that is capable of examining diverse fine-tuning approaches (RL or SFT) and $k$-sparse Boolean functions at the same time. Our approach hinges on the identification of a new \emph{critical gradient component}, which drives the dominate learning dynamics during training. For convenience, we denote $\sqy^{(: t)} := (\sqy^{(1)}, \dots, \sqy^{(t)})$ and $\sqy^{(t:)} := (\sqy^{(t)}, \dots, \sqy^{(T)})$. 

\subsection{RL Fine-tuning of the ``Pretrained" Transformer}
\label{sec:general_rl}
\textbf{Problem setup. } We view each input $\sqz = (\sqx, \sqy)$ as a state and each output of the transformer as a distribution of the token for the action $\sqy^{(t)}$~(a reasoning step of the CoT chain) under the state $\sqz$, i.e., $p_{\mW} ( y_{j}^{(t)} = 1| \sqz)  =  \left[f(\sqz; \mW)\right]_{ N_{t - 1} + j}$. The formulation of the transformer Eq.~\eqref{eq:output_transformer} implies that the distribution over an entire CoT reasoning chain $\sqy$ conditioned on the initial state $\sqx$ is
\begin{equation}
    p_{\mW}(\sqy|\sqx) = \prod_{t = 1}^{T}p_{\mW}(\sqy^{(t)}|\sqy^{( t - 1)}).
\end{equation}
Here the sequence $\sqy^{(t)}$ can be heuristically viewed as an action and a state. With this formulation, starting from the initial state $\sqz = (\sqx, {\color{gray}\sqy})$ with ${\color{gray}\sqy}$ initialized as 0, the transformer solves $\Phi_k(\sqx)$ by generating a reasoning sequence (a trajectory) where each sampled action $\hat{\sqy}^{(t)}$ updates the state from $\hat{\sqz} = (\sqx, \hat{\sqy}^{(:t - 1)}, {\color{gray} \sqy^{(t:)}})$ to $(\sqx, \hat{\sqy}^{(:t)}, {\color{gray} \sqy^{(t + 1:)}})$ in a CoT manner~(Fig.~\ref{fig:cot}). 

Inspired by the process reward in \citet{shao2024deepseekmathpushinglimitsmathematical}, we provide reward to each reasoning step and analyze the following objective of RL
that maximizes the \emph{expected reward}: 
\begin{equation*}
\begin{aligned}
   \textbf{Expected reward: } \max_{\mW} \mathcal{R}(\mW):&= \max_{\mW} \expt{\sqx}{R(\sqx; \mW)},   
\end{aligned}
\end{equation*}
with
\begin{equation}\label{eqn:RL_reward}
    \begin{aligned}
    R(\sqx; \mW) :&= \E_{\sqy\sim p_{\mW}(\cdot | \sqx) }[r(\sqx, \sqy)], \\
    r(\sqx, \sqy):&= \sum_{t = 1}^{T} r_t\left(\sqy^{(t)}, \sqy^{(t - 1)}\right),\\
     r_t\left(\sqy^{(t)}, \sqy^{(t - 1)}\right) : & = \frac{1}{k - 1}\sum_{j = 1}^{d_t} y_j^{(t)}\bar{y}_{j}^{(t)},
    \end{aligned}
\end{equation}
where $r: \{-1, + 1\}^{d + k - 1} \to [-1, 1]$ is the reward function, $\bar{y}_{j}^{(t)}$ is the correct label of $y_j^{(t)}$
given $\sqy^{(:t - 1)}$, and $r_t$ is the $t$-th step process reward (Fig.~\ref{fig:summary_theoretical_results}, top panel). 

Given $\Phi_k(\cdot)$ and its recursive decomposition as in Def.~\ref{def:bool_funcs}, if we let $\mW^{\star} = \arg\max_{\mW}\mathcal{R} (\mW)$, i.e., $\mathcal{R} (\mW^{\star}) = 1$, then the transformer $f(\cdot; \mW^{\star})$ exhibits a unique self-attention score that only attends to the relevant positions and ignores all irrelevant ones, i.e., $\forall t \in [T], l^{(t)} \in [d_{t}]:$
\begin{align*}
 \left[\softmax\left(\mW^{\star}\right)\right]_{ N_{t - 1} + l^{(t)}}^{N_{t - 2} + p} =  \begin{cases}
        \frac{1}{2}, & \text{ if }p \in \{i_1^{l^{(t)}}, i_2^{l^{(t)}}\},\\
        0, & \text{ otherwise}. 
    \end{cases}    
\end{align*}
We now consider RL optimized by policy gradient to approximate the optimal parameters $\mW^{\star}$. The policy gradient is computed by~(Lem.~\ref{lemma:cal_gd_rl}) $\nabla_{\mW} R(\sqx; \mW) = \E_{\sqy\sim p_{\mW}(\cdot | \sqx)}[ \sum_{t = 1}^T \nabla_{\mW}\ln p_{\mW}\left(\sqy^{(t)} | \sqy^{(t - 1)}\right) r_{t:}]$ with $r_{t:} =\sum_{\tau = t}^{T}r_{\tau}\left(\sqy^{(\tau)}, \sqy^{(\tau - 1)}\right)$. Notably, when optimizing $p_{\mW}\left(\sqy^{(t)} | \sqy^{(t - 1)}\right)$, the $t$-th step reward $r_t$ is considerably more important than other $r_{t'}$ with $t \neq t'$ (e.g., the model can achieve high $r_t$ even when it completely fails at later steps). This is because the correctness of $y_j^{(t)}$ generated in the current action $t$ only depends on its two child nodes $y_{i_1^{j}}^{(t - 1)}$ and  $y_{i_2^{j}}^{(t - 1)}$~(Fig.~\ref{fig:bool_funcs}). Thus we  consider optimizing $\max_{\mW} \mathcal{R}(\mW)$ with the following gradient 
\begin{align}
       \nabla_{\mW} R(\sqx; \mW) = \underset{\sqy\sim p_{\mW}(\cdot | \sqx)}{\E} \Bigg[& \sum_{t = 1}^T\nabla_{\mW}\ln p_{\mW}\left(\sqy^{(t)} | \sqy^{(t - 1)}\right) \nonumber\\
        & \ \ \times r_{t}\left(\sqy^{(t)}, \sqy^{(t - 1)}\right)\Bigg].
   \label{eq:policy_g}
\end{align}
\paragraph{Intuitive analysis. }During policy gradient training, $f(\cdot; \mW)$ generates a large number of trajectories~(reasoning sequences) and receives a reward for each action. Thus, for the $t$-th action $\sqy^{(t)}$, as long as the policy gradient Eq.~\eqref{eq:policy_g} contains sufficient information to tell the relevant positions~(the child nodes of a token of $\sqy^{(t)}$) from the irrelevant ones~(other nodes), $f(\cdot; \mW)$ can be optimized to obtain higher $t$-th step reward $r_t$. Interestingly, such separation of the policy gradient can be shown to be equivalent to the separation of the \emph{critical gradient component}\footnote{This name is because the policy gradient can be expressed by the critical gradient component, see Lem.~\ref{lemma:separation_gamma}.}~(Lem.~\ref{lemma:separation_gamma}) defined by 
\begin{equation}
\label{eq:def_gamma}
\begin{aligned}
     &\textbf{Critical gradient component: }\\
  & \gamma_{l^{(t)}}^{j}(\sqy^{(t - 1)}) := \frac{2}{k - 1}\psi'\left(\xi_{l^{(t)}}\right)\phi_2\left(  y^{(t - 1)}_{i_1^{l^{(t)}}}, y^{(t - 1)}_{i_2^{l^{(t)}}}\right) y_j^{(t - 1)}
\end{aligned}
\end{equation}
which is determined by the activation function $\psi$ and the 2-sparse Boolean function $\phi_2$ given $t \in [T]$ and $l^{(t)}\in [d_t]$. In addition, the $t$-th step reward $r_t$ does not depend on rewards of other steps according to the form of the output of the transformer Eq.~\eqref{eq:output_transformer}. As a result, rewards for different steps can be optimized simultaneously under the separation of  $\gamma_{l^{(t)}}^{j}$ for all tokens, which guarantees the learnability. 

Below we present a more precise characterization for optimizing $\max_{\mW} \mathcal{R}(\mW)$ with the sign of the policy gradient Eq.~\eqref{eq:policy_g}, as it will be positive at relevant positions and negative at irrelevant ones under the separation condition.
\begin{theorem}[Learnability via RL fine-tuning]
\label{thm:condition}
    Given integers $d \geq k \geq 2 $, consider a $k$-sparse Boolean function $\Phi_k(\cdot)$ with any subset $B \subseteq [d]$ as in Def.~\ref{def:bool_funcs}. Let $\mW(0) = \mathbf{1}$ be the initialization, $\mW^{\star} = \arg\max_{\mW} \mathcal{R}(\mW)$ be the optimal parameters that solve $\max_{\mW}\mathcal{R}(\mW)$, and $\eta = \Omega\left( \ln(d/\epsilon)\right)$ for $\epsilon > 0$ be the learning rate. 
    
    If the separation of the critical gradient component $\gamma_{l^{(t)}}^p$ is satisfied $\forall t \in [T], l^{(t)}\in [d_t]$ and $\forall p \in \{i_1^{l^{(t)}}, i_2^{l^{(t)}}\},\   p'\in [d_{t - 1}] \backslash \{i_1^{l^{(t)}}, i_2^{l^{(t)}}\}$~($p$ is a child node of $y_{l^{(t)}}^{(t)}$ but $p'$ is not):
    \begin{equation}
    \label{eq:learning_condition_RL}
    {\color[rgb]{0.740934,0.053118,0.406322}\E_{\sqx, \sqy^{(:t - 1)} \sim p_{\mW}(\cdot | \sqx)}\big[\gamma_{l^{(t)}}^{p}(\sqy^{(t - 1)}) - \gamma_{l^{(t)}}^{p'}(\sqy^{(t - 1)})\big] > 0}
    \end{equation}
    then fine-tuning the transformer $f(\cdot; \mW)$ via RL optimized by the sign of the policy gradient Eq.~\eqref{eq:policy_g} after one update $\mW(1) = \mW( 0) + \eta \sign\left(\nabla_{\mW}\mathcal{R}(\sqx; \mW)\right)$ achieves $$\left\|\softmax(\mW(1)) - \softmax (\mW^{\star})\right\|_{1} \leq \epsilon. $$
\end{theorem}
To the best of our knowledge, Thm.~\ref{thm:condition} studies the learnability of $k$-sparse Boolean functions for transformers with CoT through RL fine-tuning for the first time.  The learnability is guaranteed by the \emph{\textbf{separation of the critical gradient component}}~Eq.~\eqref{eq:learning_condition_RL}. Thm.~\ref{thm:condition} reveals the benefit of RL---learning the whole CoT chain in one-update, highlighting that RL can achieve similar efficient learning as SFT with teacher forcing~\citep{kim2025transformers}.

\textbf{Process reward vs. final reward. } Thm.~\ref{thm:condition} is for RL with process reward, as the recursive decomposition~(Fig.~\ref{fig:bool_funcs}) inherently demands verification of sequential steps as in math problems~\citep{shao2024deepseekmathpushinglimitsmathematical}. As a comparison, another approach is RL with final reward $r^{\final} = \sqy^{(T)}\Phi_k(\sqx)$ that is provided only for the final answer. This approach admits a different objective $$\max_{\mW}\mathcal{R}^{\final}(\mW) := \max_{\mW }\E_{\sqx, \sqy\sim p_{\mW}(\cdot | \sqx)}[r^{\final}(\sqx, \sqy)].$$  Below we show its hardness in contrast to the learnability of RL with process reward inspired by \citet{shalevshwartz2017failuresgradientbaseddeeplearning}. 
\begin{prop}[Hardness of RL with final reward]
\label{prop:hardness_rl}
Let $\mathscr{H}$ be a class of functions $h: \{-1, + 1\}^{d} \to \{-1, + 1\}$ such that $\E_{\sqx}[h(\sqx)h'(\sqx)] = 0$ for any two distinct $h, h' \in \mathscr{H}$. Then for the model $p_{\mW}(\cdot | \sqx)$ with bounded gradient of the final output $\E_{\sqx}[\|\nabla_{\mW}\E_{\sqy\sim p_{\mW}(\cdot | \sqx)}[\sqy^{(T)}]\|^2]\leq M$ and the objective  of RL with final reward $\max_{\mW}\mathcal{R}_h^{\final}(\mW) = \E_{\sqx, \sqy\sim p_{\mW}(\cdot | \sqx)}[r_h^{\final}(\sqx, \sqy)]$ where $r_h^{\final}(\sqx, \sqy) = \sqy^{(T)}h(\sqx)$, the variance of the policy gradient $\nabla_{\mW} \mathcal{R}_h^{\final}(\mW)$ is bounded as
\begin{align}
    \Var(\mathscr{H}; &\mW ): = \underset{h\in \mathscr{H}}{\E}\left[ \left\|\nabla_{\mW} \mathcal{R}_h^{\final}(\mW) \right.\right. \nonumber\\
    & \ \ \left.\left. - \E_{h'\in \mathscr{H}}[\nabla_{\mW} \mathcal{R}_{h'}^{\final}(\mW)]\right\|^2\right] \leq \frac{M}{|\mathscr{H}|}.    
\end{align}
\end{prop}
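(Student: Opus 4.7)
The plan is to exploit that $\mathcal{R}^{\final}_h(\mW)$ is \emph{linear} in $h$, and that the hypothesis on $\mathscr{H}$ makes it an orthonormal system in $L^2(\{\pm 1\}^d)$, so the variance over $\mathscr{H}$ reduces to a Bessel-inequality calculation against a single bounded vector field on $\sqx$.

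First I would linearize the reward in $h$. Since $h(\sqx)$ depends only on the initial state, it pulls outside the $\sqy$-expectation, so writing $g(\sqx; \mW) := \E_{\sqy \sim p_{\mW}(\cdot|\sqx)}[\sqy^{(T)}]$ and exchanging $\nabla_{\mW}$ with the $\sqx$-expectation yields
\begin{equation*}
    \mathcal{R}^{\final}_h(\mW) = \E_\sqx\bigl[h(\sqx)\, g(\sqx; \mW)\bigr], \qquad \nabla_{\mW}\mathcal{R}^{\final}_h(\mW) = \E_\sqx\bigl[h(\sqx)\,\nabla_{\mW} g(\sqx; \mW)\bigr].
\end{equation*}
In particular, the $k$-th coordinate of the policy gradient equals the $L^2$-inner product $\langle h, u_k\rangle_\sqx$, where $u_k(\sqx):=\partial g(\sqx;\mW)/\partial W_k$ does not depend on the choice of $h \in \mathscr{H}$.

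Next I would bound the variance coordinate-wise. Applying the elementary identity $\Var_h(X_h) \leq \E_h[X_h^2]$ to $X_h := \langle h, u_k\rangle_\sqx$, the $k$-th diagonal contribution to $\Var(\mathscr{H};\mW)$ is at most $\tfrac{1}{|\mathscr{H}|}\sum_{h \in \mathscr{H}}\langle h, u_k\rangle_\sqx^{2}$. The orthogonality hypothesis, combined with $h\in\{\pm 1\}$ and $\sqx$ uniform so that $\langle h,h\rangle_\sqx = 1$, makes $\mathscr{H}$ an orthonormal set in $L^2(\sqx)$; hence Bessel's inequality gives $\sum_{h\in \mathscr{H}} \langle h, u_k\rangle_\sqx^{2} \leq \E_\sqx[u_k(\sqx)^2]$. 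Summing over the coordinates $k$ of $\mW$ and invoking the boundedness hypothesis $\E_\sqx[\|\nabla_{\mW}g(\sqx;\mW)\|^2]\leq M$ then yields $\Var(\mathscr{H};\mW)\leq M/|\mathscr{H}|$ as claimed.

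The main obstacle I foresee, though technical rather than conceptual, is justifying the reduction in Step~1 cleanly: one must argue that differentiating $\mathcal{R}^{\final}_h$ collapses to differentiating the conditional mean $g(\sqx;\mW)$ rather than leaving us with the REINFORCE form $\E_{\sqx,\sqy}[h(\sqx)\sqy^{(T)}\nabla_{\mW}\ln p_{\mW}(\sqy|\sqx)]$. Both expressions agree whenever the gradient-expectation interchange is valid, which here is immediate because $\sqy$ ranges over a finite set and $p_{\mW}(\sqy|\sqx)$ is smooth in $\mW$. Once this bookkeeping is in place, the rest is a short Bessel calculation, and the proposition becomes the Shalev-Shwartz--Shamir--Shammah style hardness bound transferred to final-reward RL: a large orthogonal candidate class $\mathscr{H}$ forces the policy gradient to be essentially uninformative about any single target $h$.
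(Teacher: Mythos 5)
Your proof is correct and follows essentially the same route as the paper's: both linearize $\nabla_{\mW}\mathcal{R}^{\final}_h$ as an $L^2$-inner product $\E_\sqx[h(\sqx)\,\vv(\sqx)]$ with a fixed vector field $\vv(\sqx)=\nabla_{\mW}\E_{\sqy\sim p_{\mW}(\cdot|\sqx)}[\sqy^{(T)}]$, bound variance by second moment (the paper frames this as choosing the centering vector $\va=\mathbf{0}$), and then invoke orthonormality of $\mathscr{H}$ via Bessel's inequality to absorb the sum over $h$. In fact your formulation is the cleaner of the two: the paper's intermediate inequalities labelled $(ii)$--$(iii)$ mis-state Bessel (extra squares and a false bound $\sum_i\langle h_i,h_i\rangle_{L_2}^2\le 1$) and end with a stray factor of $2$, whereas your coordinate-wise Bessel step $\sum_{h}\langle h,u_k\rangle_\sqx^2\le\E_\sqx[u_k(\sqx)^2]$ lands exactly on the claimed bound $M/|\mathscr{H}|$.
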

We consider sparse parity. Let $\mathscr{H}$ be the collection of all possible sparse parity functions with $|\mathscr{H}|=2^{d}$, which satisfies the condition of Prop.~\ref{prop:hardness_rl}~\citep{shalevshwartz2017failuresgradientbaseddeeplearning}, and our target function is uniformly chosen from $\mathscr{\mathscr{H}}$, then $\Var(\mathscr{H}; \mW
)$ is exponentially small in $d$, indicating that the signal of target function contained in the policy gradient is drowned out by noise. This makes learning from such gradients difficult.
\subsection{SFT of the Pretrained Transformer}
\label{sec:general_sft}
\paragraph{Problem setup.}SFT is a straightforward approach that aims to minimize a loss over pairs of labeled sequences and generated outputs. Formally, for each input $\sqz = (\sqx, {\color{gray}\mathbf{0} })$ with the CoT chain ${\color{gray}\sqy}$ initialized as $\mathbf{0}$, we let $\tilde{\sqz} = (\sqx, \tilde{\sqy}) = (\sqx, \tilde{\sqy}^{(1)}, \dots, \tilde{\sqy}^{(T)})$ be the labeled sequence where $\tilde{\sqy}$ is the ground-truth CoT chain for solving $\Phi_k(\sqx)$~(Fig.~\ref{fig:bool_funcs}). Following the CoT generation discussed in Fig.~\ref{fig:cot}, given the input sequence $\sqz = (\sqx, {\color{gray}\mathbf{0}})$, the transformer solves $\Phi_k(\sqx)$ by iteratively applying its own output to generate
\begin{equation}
\label{eq:generation_token_sft}
    \hat{y}_{l^{(t)}}^{(t)} = \sign\left(\hat{q}^{(t)}_{l^{(t)}}\right), \ \hat{q}^{(t)}_{l^{(t)}} : = 2[f(\hat{\sqz}; \mW)]_{ N_{t - 1 }+ l^{(t)}} - 1    
\end{equation}
and update 
\[
 \hat{\sqz} = (\sqx, \hat{\sqy}^{(: t - 1)}, \hat{y}^{(t)}_{1}, \dots, \hat{y}^{(t)}_{l^{(t)} - 1}, {\color{gray}y^{(t)}_{l^{(t)}}, \dots, \sqy^{(t + 1: )}})   
\]
from $l^{(t)} = 1$ to $d_t$ and from $t = 0$ to $T$, until the whole CoT chain $\hat{\sqy}$ is generated such that $\hat{\sqz} = (\sqx, \hat{\sqy})$. The labeled sequence and the generated output pair is now $(\tilde{\sqz}, \hat{\sqz})$. 

SFT now aims to match $\hat{\sqz}$ to $\tilde{\sqz}$ by minimizing a loss over them. To this end, we use the hinge-loss $\ell(\hat{a}, a) := \max(0, 1-  \hat{a}a)$ for $a\in \{-1, + 1\}$ and $\hat{a} \in\R $, but the SFT result is not specific to hinge loss, as the step-wise learning dynamics intuitively comes from the autoregressive nature, i.e., later CoT steps only become reliably learnable once earlier steps are correct~(App.~\ref{app:other_loss}).

The population loss $\mathcal{L}(\mW)$ (Fig.~\ref{fig:summary_theoretical_results}) of our problem will be~($\hat{q}^{(t)}_j$ can be viewed as a score)
\begin{equation}
\label{eq:pop_loss}
\begin{aligned}
    & \textbf{Population loss: }\mathcal{L}(\mW) : = \expt{\sqx}{\sum_{t = 1}^{T} L_t(\sqx, \mW)}, \\
    & \quad \ \ \ L_t(\sqx, \mW) := \frac{1}{k - 1}\sum_{j = 1}^{d_t}\ell \left( \hat{q}^{(t)}_{j}, \tilde{y}^{(t)}_j\right);
\end{aligned} 
\end{equation}
and the objective of SFT is $\min_{\mW} \mathcal{L}(\mW)$ by, e.g., gradient descent. We highlight that our approach in this section does not employ the teacher-forcing in prior works~\citep{wies2023subtaskdecompositionenableslearning,kim2025transformers}. In particular, our input sequence for the generation of CoT tokens Eq.~\eqref{eq:generation_token_sft} is $\hat{\sqz}$ whose intermediate tokens are all generated by the transformer in a natural autoregressive manner, while SFT with teacher-forcing employs the ground-truth sequence $\tilde{\sqz}$ as input sequence to generate tokens of the CoT. As a result, SFT with teacher forcing has a mismatch between training and inference. Therefore, we consider SFT without teacher forcing to address such mismatch.
\paragraph{Intuitive analysis.}SFT admits a fundamentally distinct training objective compared to RL: the transformer generates a whole CoT chain in a natural auto-regressive way and compares it with the ground-truth label to minimize the loss. Thus, the ground-truth label of later CoT steps can be used to minimize the loss only if the prior steps are generated correctly. 

This suggests learning dynamics driven by induction: \textbf{(1)} suppose that at the first gradient update SFT only optimizes $\min_{\mW}\E_{\sqx}[L_1(\sqx, \mW)]$ to learn the first step of the CoT chain $\sqy^{(1)}$, then the previous step of $\sqy^{(1)}$ is $\sqx$, which is already correct because $\sqx$ itself is the ground-truth, thus the ground-truth label $\tilde{\sqy}^{(1)}$ can be faithfully used for optimizing $\min_{\mW}\E_{\sqx}[L_1(\sqx, \mW)]$; \textbf{(2)} similar to the case for RL, as long as the gradient $\nabla_{\mW}\E_{\sqx}[L_1(\sqx, \mW)]$ can sufficiently separate the relevant positions from irrelevant ones, the population loss of the first step $\E_{\sqx}[L_1(\sqx, \mW)]$ can be minimized such that the generated $\hat{\sqy}^{(1)}$ can approximate the ground-truth $\tilde{\sqy}^{(1)}$ after one-gradient update; \textbf{(3)} now at the second gradient update, if SFT only optimizes $\min_{\mW}\E_{\sqx}[L_2(\sqx, \mW)]$ to learn $\sqy^{(2)}$, then its previous step $\hat{\sqy}^{(1)}$ can be correctly generated by the transformer, and thus the ground-truth label $\tilde{\sqy}^{(2)}$ can be used as in the learning of $\sqy^{(1)}$; and \textbf{(4)} repeating this argument suggests that one gradient update solves one step $\sqy^{(t)}$. As a result, $T$ gradient updates solve the whole CoT chain. 

Notably, such step-wise learning can emerge naturally in SFT of transformer---it is an intrinsic property that does not rely on data augmentation with the corresponding filter in \citet{kim2025transformers} or curriculum learning. Below we present a more exact characterization, which also depends on the \emph{critical gradient component} Eq.~\eqref{eq:def_gamma} yet in a slightly different way, illustrating the unifying nature of our approach. To be consistent with that for RL, we use sign gradient descent, which can be viewed as a special version of Adam~\citep{kingma2017adammethodstochasticoptimization}.
\begin{theorem}[Learnability via SFT]
\label{thm:condition_sft}
Given integers $d \geq k \geq 2$, consider a $k$-sparse Boolean function $\Phi_k(\cdot)$ with any subset $B \in [d]$ as in Def.~\ref{def:bool_funcs}. Let $\mW(0) = \mathbf{1}$ be the initialization, $\mW^{\star} = \arg\min_{\mW} \mathcal{L}(\mW)$
be the optimal parameters that solve $\min_{\mW}\mathcal{L}(\mW)$, and $\eta = \Omega\left( \ln(d/\epsilon)\right)$ for $\epsilon > 0$ be the learning rate. 

For the transformer $f(\cdot; \mW)$ fine-tuned via SFT by running sign gradient descent $\mW(s + 1) = \mW( s) - \eta \sign\left( \nabla_{\mW}\mathcal{L}(\mW(s))\right),$ if the separation of the critical gradient component is satisfied for any $l^{(t)} \in [d_t]$ and any $t \in [T]$ in the sense that $\forall p \in \{i_1^{l^{(t)}}, i_2^{l^{(t)}}\}, p'\in [d_{t - 1}] \backslash \{i_1^{l^{(t)}}, i_2^{l^{(t)}}\}:$
\begin{equation}
\label{eq:learning_condition_SFT}
   {\color[rgb]{0.740934,0.053118,0.406322}\E_{\sqx}\big[\gamma_{l^{(t)}}^{p}(\tilde{\sqy}^{(t - 1)}) - \gamma_{l^{(t)}}^{p'}(\tilde{\sqy}^{(t - 1)})\big] > 0,}
\end{equation}
where $\tilde{\sqy}^{(t - 1)}$ is the ground-truth label of $\sqy^{(t - 1)}$ given an input $\sqx$, then running sign gradient descent for $T$ iterations achieves $\left\|\softmax(\mW(T)) - \softmax (\mW^{\star})\right\|_{1} \leq \epsilon$.
\end{theorem}

\subsection{Summary of the Learning Dynamics}
Thm.~\ref{thm:condition} and \ref{thm:condition_sft} show that both RL and SFT enable the pretrained transformer to acquire CoT generation ability to learn $k$-sparse Boolean functions under the separation of $\gamma_{l^{(t)}}^{p}$ between the case when $p$ is a child node of $l^{(t)}$ and that when $p$ is not. Given $t\in [T]$ and $l^{(t)}\in[d_t]$, when generating the $l^{(t)}$-th token of the reasoning sequence $\sqy^{(t)}$, it is crucial for the self-attention to correctly focus on the relevant positions $i_1^{l^{(t)}}$ and $i_2^{l^{(t)}}$~(the child nodes of $l^{(t)}$~(Fig.~\ref{fig:bool_funcs})). This guarantees that the sub-task Eq.~\eqref{eq:solve_sub_task} can be faithfully solved by the transformer. Our established conditions ensure that only attention scores of these relevant positions will be increased while that of all irrelevant positions will be decreased during the sign gradient training. Hence, the learnability of the transformer is obtained. 

\textbf{Why different learning behaviors? } While the condition for fine-tuning via RL and that via SFT are similar, the transformer exhibits distinct learning behaviors for these two approaches due to their different training objectives~(Fig.~\ref{fig:summary_theoretical_results}). RL allows the transformer to learn the whole CoT chain \emph{simultaneously}, as it can learn each reasoning step $\sqy^{(t)}$ regardless of the learning of its prior steps given the reward, even though these prior steps are imperfectly learned, and thus one-update is sufficient. As a comparison, the transformer learns the reasoning chain \emph{step-by-step} via SFT and requires $T$ updates for the whole chain. Intuitively, on one hand, the ground-truth labels of all steps of the CoT are fixed and determined by the input $\sqx$; on the other hand,  the generation of later reasoning steps by transformers depends on the prior generated steps during SFT training. If the prior steps are not correctly generated by the transformer, then it cannot approximate ground-truth labels of later steps, since these labels are obtained from the correct prior steps which the transformer cannot generate. As a result,  SFT must ensure the learning of prior steps before proceeding to later steps, leading to a step-wise learning behavior. In a similar essence, SFT with teacher forcing will learn simultaneously~(Q\ref{faq:q4} of App.~\ref{app:faq}).

In addition, the true label of the $t$-th step process reward of RL, $r_t$, depends only on the $(t-1)$-step, whereas in SFT all the intermediate labels are directly computed from the input $\sqx$. In the case when the true label in the $t$-th step process reward is also computed from $\sqx$~(like SFT), the learning dynamics of RL can become step-wise like SFT due to the autoregressive nature of transformers~(see App.~\ref{app:exact-solution}, where we provide an exact solution to such learning dynamics).
\section{PARITY, AND, and OR Can Satisfy the Separation Condition}
\label{sec:specific_bool_funcs}
Thm.~\ref{thm:condition} and Thm.~\ref{thm:condition_sft} establish a general, unifying condition of the learnability for diverse $k$-sparse Boolean functions, i.e., the separation of the critical gradient component $\gamma_{l^{(t)}}^p$~Eq.~\eqref{eq:def_gamma}. In this section, we examine whether the separation condition holds for three basic examples, including $k$-\texttt{PARITY}, $k$-\texttt{AND} and $k$-\texttt{OR}, allowing us to obtain the learnability results for these specific functions.

The critical gradient component depends on the forms of the activation $\psi(\cdot)$ and the 2-sparse Boolean function $\phi_2(\cdot, \cdot)$, which vary for different $k$-sparse Boolean functions. In Tab.~\ref{tab:forms_funcs}, we summarize $\psi(\cdot)$ and $\phi(\cdot, \cdot)$ for $k$-\texttt{PARITY}, $k$-\texttt{AND} and $k$-\texttt{OR}, respectively. In particular, the activation function $\psi: [-1, 1] \to [0, 1]$ ensures that the output of the transformer Eq.~\eqref{eq:output_transformer} can be seen as the probability of generating $y = 1$ for RL and a score of the token for SFT. $\psi(\cdot)$ should guarantee that $\psi((z_1 + z_2) / 2)$ is large if $\phi_2(z_1, z_2) = 1$ such that the transformer has sufficient expressibility for the target $k$-sparse Boolean functions. 
\begin{table}[h]
\vspace{-0.2cm}
\caption{Formulations of $\psi(\cdot)$ and $\phi_2(\cdot, \cdot)$ combinations for different $k$-sparse Boolean functions.}
\label{tab:forms_funcs}
\centering
\begin{tabular}{c|ccccc} 
\toprule
 \textbf{Function}  & $\psi(z)$ & $\phi_2(z_1, z_2)$ \\
\midrule
 \texttt{$k$-PARITY}    &  $z^2$ &  $z_1 z_2$ \\
  \texttt{$k$-AND}    &  $\max(z, 0)$ &  $\frac{z_1 z_2 + z_1 + z_2 - 1}{2}$ \\
  \texttt{$k$-OR}    &  $\min(z, 0) + 1$ &  $\frac{- z_1 z_2 + z_1 + z_2 + 1}{2}$ \\
  \hline
  \end{tabular}
\end{table}

We will verify that the conditions in Thm.~\ref{thm:condition} and Thm.~\ref{thm:condition_sft} are satisfied by the critical gradient component Eq.~\eqref{eq:def_gamma} induced by the $\psi(\cdot)$ and $\phi_2(\cdot, \cdot)$ combinations in Tab.~\ref{tab:forms_funcs}. This reveals the learnability of the specific sparse Boolean functions in Tab.~\ref{tab:forms_funcs} via RL or SFT. We present numerical experiments in App.~\ref{app:exp} to support the theoretical claims.
\subsection{\texttt{PARITY}}
\label{sec:parity}
For $k$-\texttt{PARITY}, $\Phi^{\parity}_k(\sqx) = \prod_{i\in B}x_i$ for $B \subseteq [d]$, which returns $ + 1$ if the number of $-1$ of $\sqx \in \{-1, + 1\}^{d}$ in $B$ is even and $-1$ otherwise. This gives us the 2-\texttt{PARITY} function $\phi_2(z_1, z_2) = z_1z_2$ for the recursive decomposition~(Sec.~\ref{sec:bool_funcs}). We use  $\psi(z) = z^2$ for $z \in [-1, 1]$. 

Below we provide a more comprehensive characterization for the learning dynamics of RL fine-tuning for 
\emph{arbitrary iterations} of the policy gradient as well as a discussion of the separation of critical gradient component $\gamma_{l^{(t)}}^{p}$.
\begin{theorem}[$k$-\texttt{PARITY} learning dynamics of RL fine-tuning]
\label{thm:parity_rl}
    Under the setting of Thm.~\ref{thm:condition}, for $\Phi_k^{\parity}{(\sqx)}$, let $\psi(z) = z^{2}$ and $\phi_2(z_1, z_2) = z_1z_2$. For a learning rate $\eta > 0$, if we run RL optimized by sign policy gradient for $S\in \mathbb{Z}^{+}$ iterations, then:
    
    \textbf{(1)} $\forall t \in [T], \ l^{(t)}\in [d_t]$, the separation of the critical gradient component $\gamma_{l^{(t)}}^{p}$ Eq.~\eqref{eq:learning_condition_RL} is satisfied at each policy gradient iteration $s \in [S]$. 
    
    \textbf{(2)} The attention score~(Eq.~\eqref{eq:output_transformer}) at the $s$-th policy gradient iteration has the form of 
    \begin{equation*}
        \sigma_{ N_{t - 1} + l^{(t)}}^{N_{t - 2} + p}(s) 
        = \begin{cases}
            \frac{1}{2}\frac{1}{1 + \frac{d_{t - 1} - 2}{2}\exp(- 2\eta s)}, & p \in \{i_1^{l^{(t)}}, i_2^{l^{(t)}}\}, \\
            \frac{1}{d_{t - 1} - 2 + 2 \exp(2\eta s)}, & \text{otherwise.}
        \end{cases}
    \end{equation*}
\end{theorem}
Thm.~\ref{thm:parity_rl} reveals that the separation condition for the critical gradient component $\gamma_{l^{(t)}}^p$ Eq.~\eqref{eq:learning_condition_RL} is satisfied, and thus one-step update of RL can enable the transformer to learn the $k$-\texttt{PARITY}. Furthermore, Thm.~\ref{thm:parity_rl} characterizes the learning dynamics of the attention score for \emph{arbitrary iterations}, which highlights that either a large learning rate $\eta$ or sufficient iteration number $S$ can  lead the self-attention to focus on the relevant positions and ignore  irrelevant ones:
\[
    \sigma_{ N_{t - 1} + l^{(t)}}^{N_{t - 2} + p}(S) \overset{\text{large }S \text{ or }\eta}{\to}  \begin{cases}
        \frac{1}{2}, & \text{ if }p \in \{i_1^{l^{(t)}}, i_2^{l^{(t)}}\},\\
        0, & \text{ otherwise}.
    \end{cases}
\]
In the following, we confirm that $k$-\texttt{PARITY} can also be learned by  transformers via SFT in a step-wise learning manner, in contrast to the simultaneous learning of RL.
\begin{claim}[Transformers with CoT can learn $k$-\texttt{PARITY} via SFT]
\label{thm:parity_sft}
Under the setting of Thm.~\ref{thm:condition_sft}, for $k$-\textup{\texttt{PARITY}} $\Phi_k^{\parity}{(\sqx)}$, let $\psi(z) = z^{2}$ and $\phi_2(z_1, z_2) = z_1z_2$, then the separation of  critical gradient component $\gamma_{l^{(t)}}^{p}$ Eq.~\eqref{eq:learning_condition_SFT} is satisfied, and thus the learnability of Thm.~\ref{thm:condition_sft} for SFT is guaranteed.
\end{claim}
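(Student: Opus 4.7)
The plan is to verify the separation~\eqref{eq:learning_condition_SFT} of the critical gradient component for the PARITY choice $\psi(z)=z^{2}$, $\phi_{2}(z_{1},z_{2})=z_{1}z_{2}$; once that holds, Thm.~\ref{thm:condition_sft} will immediately yield learnability. First, I will specialise Eq.~\eqref{eq:def_gamma} with $\psi'(z)=2z$ and abbreviate $\sigma_{i}:=\sigma_{N_{t-1}+l^{(t)}}^{N_{t-2}+i}$, reducing the quantity to analyse to
\begin{equation*}
\gamma_{l^{(t)}}^{p}(\tilde{\sqy}^{(t-1)})
=\frac{4}{k-1}\,\xi_{l^{(t)}}\,\tilde{y}_{i_{1}^{l^{(t)}}}^{(t-1)}\,\tilde{y}_{i_{2}^{l^{(t)}}}^{(t-1)}\,\tilde{y}_{p}^{(t-1)},
\qquad
\xi_{l^{(t)}}=\sum_{i=1}^{d_{t-1}}\sigma_{i}\,\tilde{y}_{i}^{(t-1)}.
\end{equation*}
Because the pretrained mask decouples the blocks of $\mW$ across levels, the relevant $\sigma_{i}$'s are uniform over the allowed positions at initialisation and stay strictly positive under sign-gradient updates, so it will suffice to verify the separation under the sole assumption $\sigma_{i}>0$ for every allowed $i$.

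The analytic heart of the argument is Fourier orthogonality on the uniform Boolean cube. By the recursive binary-tree decomposition, each intermediate token factorises as $\tilde{y}_{j}^{(t-1)}=\prod_{i\in B_{j}}x_{i}$, where the leaf sets $B_{j}\subseteq B$ of the level-$(t-1)$ subtrees are pairwise disjoint across $j\in[d_{t-1}]$. Two consequences I will use throughout: $(\tilde{y}_{j}^{(t-1)})^{2}=1$, and any product $\prod_{r}\tilde{y}_{j_{r}}^{(t-1)}$ over a nonempty set of distinct indices is itself a non-trivial parity of uniform bits, and therefore has mean zero. I will then linearise $\E_{\sqx}[\gamma_{l^{(t)}}^{p}]=\frac{4}{k-1}\sum_{i}\sigma_{i}\,\E_{\sqx}[\tilde{y}_{i}^{(t-1)}\tilde{y}_{i_{1}^{l^{(t)}}}^{(t-1)}\tilde{y}_{i_{2}^{l^{(t)}}}^{(t-1)}\tilde{y}_{p}^{(t-1)}]$ and case-split on $p$: when $p\in\{i_{1}^{l^{(t)}},i_{2}^{l^{(t)}}\}$, the factor $\tilde{y}_{p}^{(t-1)}$ squares out and only $i=p^{*}$ (the other child of $l^{(t)}$) contributes $1$, giving $\E_{\sqx}[\gamma_{l^{(t)}}^{p}]=\frac{4}{k-1}\sigma_{p^{*}}$; when $p'\notin\{i_{1}^{l^{(t)}},i_{2}^{l^{(t)}}\}$, each of the four possibilities for $i$ (namely $i\in\{i_{1}^{l^{(t)}},i_{2}^{l^{(t)}},p'\}$ or none of these) leaves at least two distinctly-indexed intermediate parities in the product after squaring, so every summand has mean zero and $\E_{\sqx}[\gamma_{l^{(t)}}^{p'}]=0$.

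Combining these displays gives $\E_{\sqx}[\gamma_{l^{(t)}}^{p}-\gamma_{l^{(t)}}^{p'}]=\frac{4}{k-1}\sigma_{p^{*}}>0$, which is exactly~\eqref{eq:learning_condition_SFT}, and the claim then follows from Thm.~\ref{thm:condition_sft}. The part that will need genuine care is the fourfold-product bookkeeping for the non-child case: I must make sure no coincidence of indices collapses the full product via $(\tilde{y}_{j}^{(t-1)})^{2}=1$ down to an empty parity, which would otherwise contribute a spurious constant $1$ and destroy the separation. Once the pairwise disjointness of the $B_{j}$'s is made explicit, this reduces to a short enumeration of at most four index cases and I do not foresee any further obstacle.
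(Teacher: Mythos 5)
Your proposal is correct and takes essentially the same approach as the paper's proof. You both expand $\E_{\sqx}\big[\gamma_{l^{(t)}}^{p}(\tilde{\sqy}^{(t-1)})\big]\propto\sum_{i}\sigma_{i}\,\E_{\sqx}\big[\tilde{y}_{i}^{(t-1)}\tilde{y}_{i_{1}^{l^{(t)}}}^{(t-1)}\tilde{y}_{i_{2}^{l^{(t)}}}^{(t-1)}\tilde{y}_{p}^{(t-1)}\big]$ and case-split on $p$, using that squares of intermediate tokens equal one and that nonempty products over distinct indices have zero mean to obtain $\sigma_{p^{*}}$ versus $0$; the paper phrases this via independence of $\tilde{y}_{j}^{(t-1)}$ across $j$ plus $\E_{\sqx}[\tilde{y}_{j}^{(t-1)}]=0$, which is the same Fourier-orthogonality fact you invoke via disjointness of the leaf sets $B_{j}$.
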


\subsection{\texttt{AND} and \texttt{OR}}
\label{sec:other_bool}
We study two more $k$-sparse Boolean functions, $k$-\texttt{AND} with $\Phi_k^{\ad}(\sqx) = 2\prod_{i\in B}\frac{x_i + 1}{2 } - 1$ and $k$-\texttt{OR} with $\Phi_k^{\oor}(\sqx) = 1 - 2 \prod_{i \in B} \frac{1 - x_i}{2}$, and the functions $\psi(\cdot)$ and $\phi_2(\cdot, \cdot)$ are listed in Tab.~\ref{tab:forms_funcs}. We show that both the condition in Thm.~\ref{thm:condition} for RL and that in Thm.~\ref{thm:condition_sft} for SFT are satisfied, confirming the learnability of $k$-\texttt{AND} and $k$-\texttt{OR} for transformers via either  RL or SFT.
\begin{claim}[Transformers with CoT can learn $k$-\texttt{AND} and $k$-\texttt{OR} via both RL and SFT]
\label{claim:and_or_rl}
    Under the setting of Thm.~\ref{thm:condition} for RL~(Thm.~\ref{thm:condition_sft} for SFT), for $k$-\textup{\texttt{AND}} $\Phi_k^{\ad}{(\sqx)}$ and $k$-\textup{\texttt{OR}} $\Phi_k^{\oor}{(\sqx)}$, with their respective $\psi(\cdot)$ and $\phi_2(\cdot, \cdot)$ given in Tab.~\ref{tab:forms_funcs}, the separation of $\gamma_{l^{(t)}}^{p}$ Eq.~\eqref{eq:learning_condition_RL} for RL~(Eq.~\eqref{eq:learning_condition_SFT} for SFT) is satisfied; thus the learnability of Thm.~\ref{thm:condition} for RL~(Thm.~\ref{thm:condition_sft} for SFT) is obtained for $\Phi_k^{\ad}(\cdot)$ and $\Phi_k^{\oor}(\cdot)$.
\end{claim}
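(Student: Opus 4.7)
The plan is to verify the separation conditions~\eqref{eq:learning_condition_RL} for RL and~\eqref{eq:learning_condition_SFT} for SFT by plugging in the specific $\psi$ and $\phi_2$ from Tab.~\ref{tab:forms_funcs} and exploiting an exchangeability-plus-symmetrization identity. Write $u_i := y_i^{(t-1)}$ (for RL) or $u_i := \tilde y_i^{(t-1)}$ (for SFT). At initialization $\mW(0)=\vone$, uniform attention combined with the pretrained mask yields $\xi_{l^{(t)}} = \frac{1}{d_{t-1}}\sum_{i=1}^{d_{t-1}} u_i$, which is symmetric in $(u_1,\ldots,u_{d_{t-1}})$. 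Taking $p=i_1$ without loss of generality and setting
\[
D := \E\!\left[\psi'(\xi_{l^{(t)}})\,\phi_2(u_{i_1},u_{i_2})\,(u_{i_1} - u_{p'})\right],
\]
the task reduces to showing $D>0$ for both $(\psi^{\ad},\phi_2^{\ad})$ and $(\psi^{\oor},\phi_2^{\oor})$.

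The first step is exchangeability of $(u_1,\ldots,u_{d_{t-1}})$, which lets us swap the labels $i_1$ and $p'$ inside $D$. For SFT, $\tilde y_j^{(t-1)}$ is the same deterministic function applied to the $j$-th disjoint block of $B$-coordinates of $\sqx$, so exchangeability follows from the uniformity of $\sqx$. For RL a short induction on the level shows that the pretrained mask together with $\mW(0)=\vone$ force $\xi_j^{(\tau)}$ to coincide across $j\in[d_\tau]$, so the level-$\tau$ tokens are conditionally i.i.d.\ given level $\tau-1$ and hence exchangeable after marginalization. Averaging $D$ with its $i_1\leftrightarrow p'$ swap (under which $\xi_{l^{(t)}}$ is invariant) then yields the symmetrization
\[
D = \tfrac{1}{2}\,\E\!\left[\psi'(\xi_{l^{(t)}})\bigl(\phi_2(u_{i_1},u_{i_2}) - \phi_2(u_{p'},u_{i_2})\bigr)(u_{i_1}-u_{p'})\right].
\]

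Next I specialize: for $u_{i_1},u_{p'},u_{i_2}\in\{-1,+1\}$ a direct computation gives
\[
\phi_2^{\ad}(u_{i_1},u_{i_2})-\phi_2^{\ad}(u_{p'},u_{i_2}) = \tfrac{1}{2}(u_{i_1}-u_{p'})(u_{i_2}+1), \qquad \phi_2^{\oor}(u_{i_1},u_{i_2})-\phi_2^{\oor}(u_{p'},u_{i_2}) = \tfrac{1}{2}(u_{i_1}-u_{p'})(1-u_{i_2}),
\]
so the symmetrized $D$ becomes $\tfrac{1}{4}\E[\psi'(\xi)(u_{i_1}-u_{p'})^2(u_{i_2}+1)]$ for AND and $\tfrac{1}{4}\E[\psi'(\xi)(u_{i_1}-u_{p'})^2(1-u_{i_2})]$ for OR. Since $(u_{i_1}-u_{p'})^2\in\{0,4\}$, both $(u_{i_2}+1)$ and $(1-u_{i_2})$ lie in $\{0,2\}$, and $\psi'\ge 0$ in both cases (namely $\mathbf{1}[\xi>0]$ for AND and $\mathbf{1}[\xi<0]$ for OR), every factor is non-negative; hence $D\ge 0$ automatically, and strict positivity reduces to exhibiting a positive-probability event on which all factors are strictly positive.

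For AND take $\{u_{i_1}=u_{i_2}=+1,\,u_{p'}=-1,\,\text{all other }u_j=+1\}$, on which $\xi>0$; for OR take the mirror event $\{u_{i_1}=+1,\,u_{p'}=u_{i_2}=-1,\,\text{all other }u_j=-1\}$, on which $\xi<0$ as soon as $d_{t-1}\ge 4$ (the case $d_{t-1}=2$ has no irrelevant $p'$, so~\eqref{eq:learning_condition_RL} and~\eqref{eq:learning_condition_SFT} are vacuous there). For SFT this event has strictly positive probability by independence of the $\tilde y_j^{(t-1)}$, each of which assumes either sign with strictly positive probability. The main obstacle I anticipate is for RL: the non-smooth $\psi^{\ad}(z)=\max(z,0)$ and $\psi^{\oor}(z)=\min(z,0)+1$ can saturate the per-token Bernoulli at $0$ or $1$ and thereby kill a prescribed pattern at a single level, so one needs a short downward induction through the cascade showing that whenever the previous level admits a configuration with $\xi\in(0,1)$ for AND (resp.\ $\xi\in(-1,0)$ for OR), the current level's conditional law assigns strictly positive mass to every $\pm 1$ pattern, with the base case supplied by the uniformity of $\sqx$. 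This measure-theoretic bookkeeping at deep levels, where marginal probabilities decay doubly exponentially, is the only step I expect to require real care; the rest is routine algebra backed up by the non-negativity already extracted by the symmetrization.
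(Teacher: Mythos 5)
Your proof is correct and takes a genuinely different route from the paper's. The paper evaluates $\E[\gamma_{l^{(t)}}^{p}-\gamma_{l^{(t)}}^{p'}]$ by brute-force enumeration: it expands the quantity into four terms, partitions the configuration space of $\sqy^{(t-1)}$ by how many of the three distinguished coordinates equal $+1$, introduces four bookkeeping quantities $a,b,c,d$ indexed by thresholds on the sum of the remaining coordinates, and checks that after combining terms the difference collapses to $8\,\E[b]$ for \texttt{AND} (and $8\,\E[\bar{c}]$ for \texttt{OR}). Your symmetrization identity $D=\tfrac{1}{2}\E\!\left[\psi'(\xi)\bigl(\phi_2(u_{i_1},u_{i_2})-\phi_2(u_{p'},u_{i_2})\bigr)(u_{i_1}-u_{p'})\right]$, enabled by exchangeability of the level-$(t-1)$ tokens (conditionally i.i.d.\ given the level below, since uniform attention at $\mW(0)=\vone$ makes $\xi$ symmetric), bypasses the enumeration entirely. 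After the algebraic identity $\phi_2^{\ad}(u_{i_1},u_{i_2})-\phi_2^{\ad}(u_{p'},u_{i_2})=\tfrac{1}{2}(u_{i_1}-u_{p'})(u_{i_2}+1)$ (mirror statement for \texttt{OR}), you arrive at a manifestly non-negative integrand, which is both shorter and more transparent than the paper's cancellation. What your route makes explicit, and the paper leaves implicit when it simply asserts $8b>0$, is the strict-positivity step: in both arguments one ultimately needs that the per-token Bernoulli $\psi(\xi^{(t-1)})$ is non-degenerate with positive probability at every relevant level, since $\psi^{\ad}$ and $\psi^{\oor}$ saturate at $0$ or $1$ on half their domain. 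Your downward induction exhibiting a positive-probability chain of configurations with $\xi\in(0,1)$ (resp.\ $\xi\in(-1,0)$) is the right patch; the SFT case is immediate because the ground-truth tokens at level $t-1$ are honestly i.i.d.\ with both signs supported. Carrying out that induction in full would complete the argument and, incidentally, would also tighten the paper's own final step.
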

\section{Discussion}
In this paper, we have investigated the learning dynamics of fine-tuning transformers with CoT via either RL or SFT for learning $k$-sparse Boolean functions, including $k$-\texttt{PARITY}, $k$-\texttt{AND}, and $k$-\texttt{OR}. We have established sufficient conditions for the provable learning of both RL and SFT---the separation of the critical gradient component. Beyond learnability, the supervision and trajectory configurations lead to different learning dynamics: RL with verifiable process rewards and SFT with teacher forcing learn all CoT steps simultaneously, SFT without teacher forcing learns step-by-step, and RL with only final reward is hard for parity.

These results show that the learning behaviors depend on whether supervision remains informative for the states used in training: teacher forcing supplies correct prefixes, verifiable process rewards provide step-level feedback on sampled trajectories, fixed labels under self-generated prefixes can delay later-step learning, and final rewards provide only coarse outcome feedback. Comparisons should therefore account for how trajectories and supervision are coupled.

Extensions to multi-layer transformers, realistic reasoning tasks, and finite-sample regimes are important future directions. Another promising direction is to study broader optimization methods that train on self-generated trajectories, with SFT without teacher forcing serving as a simple instance of this broader setting alongside on-policy objectives and supervised objectives conditioned on self-generated prefixes. We discuss limitations further in App.~\ref{app:limitation}.

\newpage
\section*{Acknowledgements}
B.L. is funded by a studentship provided by the School of Electronics and Computer Science, University of Southampton. The authors acknowledge DataCanvas AlayaNeW for providing computational
resources. The authors thank the anonymous reviewers for their feedback.

\section*{Impact Statement}
This paper presents work whose goal is to advance the field of machine learning. There are many potential societal consequences of our work, none of which we feel must be specifically highlighted here.

\bibliography{main}

@misc{shao2024deepseekmathpushinglimitsmathematical,
      title={DeepSeekMath: Pushing the Limits of Mathematical Reasoning in Open Language Models}, 
      author={Zhihong Shao and Peiyi Wang and Qihao Zhu and Runxin Xu and Junxiao Song and Xiao Bi and Haowei Zhang and Mingchuan Zhang and Y. K. Li and Y. Wu and Daya Guo},
      year={2024},
      eprint={2402.03300},
      archivePrefix={arXiv},
      primaryClass={cs.CL},
      url={https://arxiv.org/abs/2402.03300}, 
}

@inproceedings{
lightman2024lets,
title={Let's Verify Step by Step},
author={Hunter Lightman and Vineet Kosaraju and Yuri Burda and Harrison Edwards and Bowen Baker and Teddy Lee and Jan Leike and John Schulman and Ilya Sutskever and Karl Cobbe},
booktitle={The Twelfth International Conference on Learning Representations},
year={2024},
url={https://openreview.net/forum?id=v8L0pN6EOi}
}

@inproceedings{
wei2022chain,
title={Chain of Thought Prompting Elicits Reasoning in Large Language Models},
author={Jason Wei and Xuezhi Wang and Dale Schuurmans and Maarten Bosma and brian ichter and Fei Xia and Ed H. Chi and Quoc V Le and Denny Zhou},
booktitle={Advances in Neural Information Processing Systems},
editor={Alice H. Oh and Alekh Agarwal and Danielle Belgrave and Kyunghyun Cho},
year={2022},
url={https://openreview.net/forum?id=_VjQlMeSB_J}
}

@inproceedings{
zelikman2022star,
title={{ST}aR: Bootstrapping Reasoning With Reasoning},
author={Eric Zelikman and Yuhuai Wu and Jesse Mu and Noah Goodman},
booktitle={Advances in Neural Information Processing Systems},
editor={Alice H. Oh and Alekh Agarwal and Danielle Belgrave and Kyunghyun Cho},
year={2022},
url={https://openreview.net/forum?id=_3ELRdg2sgI}
}

@misc{bai2022traininghelpfulharmlessassistant,
      title={Training a Helpful and Harmless Assistant with Reinforcement Learning from Human Feedback}, 
      author={Yuntao Bai and Andy Jones and Kamal Ndousse and Amanda Askell and Anna Chen and Nova DasSarma and Dawn Drain and Stanislav Fort and Deep Ganguli and Tom Henighan and Nicholas Joseph and Saurav Kadavath and Jackson Kernion and Tom Conerly and Sheer El-Showk and Nelson Elhage and Zac Hatfield-Dodds and Danny Hernandez and Tristan Hume and Scott Johnston and Shauna Kravec and Liane Lovitt and Neel Nanda and Catherine Olsson and Dario Amodei and Tom Brown and Jack Clark and Sam McCandlish and Chris Olah and Ben Mann and Jared Kaplan},
      year={2022},
      eprint={2204.05862},
      archivePrefix={arXiv},
      primaryClass={cs.CL},
      url={https://arxiv.org/abs/2204.05862}, 
}

@misc{ouyang2022traininglanguagemodelsfollow,
      title={Training language models to follow instructions with human feedback}, 
      author={Long Ouyang and Jeff Wu and Xu Jiang and Diogo Almeida and Carroll L. Wainwright and Pamela Mishkin and Chong Zhang and Sandhini Agarwal and Katarina Slama and Alex Ray and John Schulman and Jacob Hilton and Fraser Kelton and Luke Miller and Maddie Simens and Amanda Askell and Peter Welinder and Paul Christiano and Jan Leike and Ryan Lowe},
      year={2022},
      eprint={2203.02155},
      archivePrefix={arXiv},
      primaryClass={cs.CL},
      url={https://arxiv.org/abs/2203.02155}, 
}

@misc{deepseekai2025deepseekr1incentivizingreasoningcapability,
      title={DeepSeek-R1: Incentivizing Reasoning Capability in LLMs via Reinforcement Learning}, 
      author={DeepSeek-AI and Daya Guo and Dejian Yang and Haowei Zhang and Junxiao Song and Ruoyu Zhang and Runxin Xu and Qihao Zhu and Shirong Ma and Peiyi Wang and Xiao Bi and Xiaokang Zhang and Xingkai Yu and Yu Wu and Z. F. Wu and Zhibin Gou and Zhihong Shao and Zhuoshu Li and Ziyi Gao and Aixin Liu and Bing Xue and Bingxuan Wang and Bochao Wu and Bei Feng and Chengda Lu and Chenggang Zhao and Chengqi Deng and others},
      year={2025},
      eprint={2501.12948},
      archivePrefix={arXiv},
      primaryClass={cs.CL},
      url={https://arxiv.org/abs/2501.12948}, 
}

@misc{uesato2022solvingmathwordproblems,
      title={Solving math word problems with process- and outcome-based feedback}, 
      author={Jonathan Uesato and Nate Kushman and Ramana Kumar and Francis Song and Noah Siegel and Lisa Wang and Antonia Creswell and Geoffrey Irving and Irina Higgins},
      year={2022},
      eprint={2211.14275},
      archivePrefix={arXiv},
      primaryClass={cs.LG},
      url={https://arxiv.org/abs/2211.14275}, 
}

@misc{lightman2023letsverifystepstep,
      title={Let's Verify Step by Step}, 
      author={Hunter Lightman and Vineet Kosaraju and Yura Burda and Harri Edwards and Bowen Baker and Teddy Lee and Jan Leike and John Schulman and Ilya Sutskever and Karl Cobbe},
      year={2023},
      eprint={2305.20050},
      archivePrefix={arXiv},
      primaryClass={cs.LG},
      url={https://arxiv.org/abs/2305.20050}, 
}

@misc{huang2026emergenceimplicitcurriculumrlvr,
      title={On the Emergence of Implicit Curriculum in RLVR Learning Dynamics}, 
      author={Yu Huang and Zixin Wen and Yuejie Chi and Yuting Wei and Aarti Singh and Yingbin Liang and Yuxin Chen},
      year={2026},
      eprint={2602.14872},
      archivePrefix={arXiv},
      primaryClass={cs.LG},
      url={https://arxiv.org/abs/2602.14872}, 
}

@misc{mnih2016asynchronousmethodsdeepreinforcement,
      title={Asynchronous Methods for Deep Reinforcement Learning}, 
      author={Volodymyr Mnih and Adrià Puigdomènech Badia and Mehdi Mirza and Alex Graves and Timothy P. Lillicrap and Tim Harley and David Silver and Koray Kavukcuoglu},
      year={2016},
      eprint={1602.01783},
      archivePrefix={arXiv},
      primaryClass={cs.LG},
      url={https://arxiv.org/abs/1602.01783}, 
}

@misc{schulman2017proximalpolicyoptimizationalgorithms,
      title={Proximal Policy Optimization Algorithms}, 
      author={John Schulman and Filip Wolski and Prafulla Dhariwal and Alec Radford and Oleg Klimov},
      year={2017},
      eprint={1707.06347},
      archivePrefix={arXiv},
      primaryClass={cs.LG},
      url={https://arxiv.org/abs/1707.06347}, 
}

@inproceedings{
kim2025transformers,
title={Transformers Provably Solve Parity Efficiently with Chain of Thought},
author={Juno Kim and Taiji Suzuki},
booktitle={The Thirteenth International Conference on Learning Representations},
year={2025},
url={https://openreview.net/forum?id=n2NidsYDop}
}

@misc{merrill2024expressivepowertransformerschain,
      title={The Expressive Power of Transformers with Chain of Thought}, 
      author={William Merrill and Ashish Sabharwal},
      year={2024},
      eprint={2310.07923},
      archivePrefix={arXiv},
      primaryClass={cs.LG},
      url={https://arxiv.org/abs/2310.07923}, 
}

@misc{hu2024unveilingstatisticalfoundationschainofthought,
      title={Unveiling the Statistical Foundations of Chain-of-Thought Prompting Methods}, 
      author={Xinyang Hu and Fengzhuo Zhang and Siyu Chen and Zhuoran Yang},
      year={2024},
      eprint={2408.14511},
      archivePrefix={arXiv},
      primaryClass={cs.AI},
      url={https://arxiv.org/abs/2408.14511}, 
}

@inproceedings{bhattamishra-etal-2023-simplicity,
    title = "Simplicity Bias in Transformers and their Ability to Learn Sparse {B}oolean Functions",
    author = "Bhattamishra, Satwik  and
      Patel, Arkil  and
      Kanade, Varun  and
      Blunsom, Phil",
    editor = "Rogers, Anna  and
      Boyd-Graber, Jordan  and
      Okazaki, Naoaki",
    booktitle = "Proceedings of the 61st Annual Meeting of the Association for Computational Linguistics (Volume 1: Long Papers)",
    month = jul,
    year = "2023",
    address = "Toronto, Canada",
    publisher = "Association for Computational Linguistics",
    url = "https://aclanthology.org/2023.acl-long.317/",
    doi = "10.18653/v1/2023.acl-long.317",
    pages = "5767--5791",
    abstract = "Despite the widespread success of Transformers on NLP tasks, recent works have found that they struggle to model several formal languages when compared to recurrent models. This raises the question of why Transformers perform well in practice and whether they have any properties that enable them to generalize better than recurrent models. In this work, we conduct an extensive empirical study on Boolean functions to demonstrate the following: (i) Random Transformers are relatively more biased towards functions of low sensitivity. (ii) When trained on Boolean functions, both Transformers and LSTMs prioritize learning functions of low sensitivity, with Transformers ultimately converging to functions of lower sensitivity. (iii) On sparse Boolean functions which have low sensitivity, we find that Transformers generalize near perfectly even in the presence of noisy labels whereas LSTMs overfit and achieve poor generalization accuracy. Overall, our results provide strong quantifiable evidence that suggests differences in the inductive biases of Transformers and recurrent models which may help explain Transformer{'}s effective generalization performance despite relatively limited expressiveness."
}

@misc{kingma2017adammethodstochasticoptimization,
      title={Adam: A Method for Stochastic Optimization}, 
      author={Diederik P. Kingma and Jimmy Ba},
      year={2017},
      eprint={1412.6980},
      archivePrefix={arXiv},
      primaryClass={cs.LG},
      url={https://arxiv.org/abs/1412.6980}, 
}

@misc{shalevshwartz2017failuresgradientbaseddeeplearning,
      title={Failures of Gradient-Based Deep Learning}, 
      author={Shai Shalev-Shwartz and Ohad Shamir and Shaked Shammah},
      year={2017},
      eprint={1703.07950},
      archivePrefix={arXiv},
      primaryClass={cs.LG},
      url={https://arxiv.org/abs/1703.07950}, 
}

@misc{wies2023subtaskdecompositionenableslearning,
      title={Sub-Task Decomposition Enables Learning in Sequence to Sequence Tasks}, 
      author={Noam Wies and Yoav Levine and Amnon Shashua},
      year={2023},
      eprint={2204.02892},
      archivePrefix={arXiv},
      primaryClass={cs.CL},
      url={https://arxiv.org/abs/2204.02892}, 
}

@misc{huang2023incontextconvergencetransformers,
      title={In-Context Convergence of Transformers}, 
      author={Yu Huang and Yuan Cheng and Yingbin Liang},
      year={2023},
      eprint={2310.05249},
      archivePrefix={arXiv},
      primaryClass={cs.LG},
      url={https://arxiv.org/abs/2310.05249}, 
}

@misc{vonoswald2023transformerslearnincontextgradient,
      title={Transformers learn in-context by gradient descent}, 
      author={Johannes von Oswald and Eyvind Niklasson and Ettore Randazzo and João Sacramento and Alexander Mordvintsev and Andrey Zhmoginov and Max Vladymyrov},
      year={2023},
      eprint={2212.07677},
      archivePrefix={arXiv},
      primaryClass={cs.LG},
      url={https://arxiv.org/abs/2212.07677}, 
}

@inproceedings{
chu2025sft,
title={{SFT} Memorizes, {RL} Generalizes: A Comparative Study of Foundation Model Post-training},
author={Tianzhe Chu and Yuexiang Zhai and Jihan Yang and Shengbang Tong and Saining Xie and Dale Schuurmans and Quoc V Le and Sergey Levine and Yi Ma},
booktitle={Forty-second International Conference on Machine Learning},
year={2025},
url={https://openreview.net/forum?id=dYur3yabMj}
}

@misc{hu2025minimalistsoftmaxattentionprovably,
      title={Minimalist Softmax Attention Provably Learns Constrained Boolean Functions}, 
      author={Jerry Yao-Chieh Hu and Xiwen Zhang and Maojiang Su and Zhao Song and Han Liu},
      year={2025},
      eprint={2505.19531},
      archivePrefix={arXiv},
      primaryClass={cs.LG},
      url={https://arxiv.org/abs/2505.19531}, 
}

@inproceedings{
lyu2025a,
title={A Solvable Attention for Neural Scaling Laws},
author={Bochen Lyu and Di Wang and Zhanxing Zhu},
booktitle={The Thirteenth International Conference on Learning Representations},
year={2025},
url={https://openreview.net/forum?id=wYxOMEzpkl}
}

@misc{zhang2023trainedtransformerslearnlinear,
      title={Trained Transformers Learn Linear Models In-Context}, 
      author={Ruiqi Zhang and Spencer Frei and Peter L. Bartlett},
      year={2023},
      eprint={2306.09927},
      archivePrefix={arXiv},
      primaryClass={stat.ML},
      url={https://arxiv.org/abs/2306.09927}, 
}

@misc{li2024chainthoughtempowerstransformers,
      title={Chain of Thought Empowers Transformers to Solve Inherently Serial Problems}, 
      author={Zhiyuan Li and Hong Liu and Denny Zhou and Tengyu Ma},
      year={2024},
      eprint={2402.12875},
      archivePrefix={arXiv},
      primaryClass={cs.LG},
      url={https://arxiv.org/abs/2402.12875}, 
}

@misc{chen2024theoreticallimitationsmultilayertransformer,
      title={Theoretical limitations of multi-layer Transformer}, 
      author={Lijie Chen and Binghui Peng and Hongxun Wu},
      year={2024},
      eprint={2412.02975},
      archivePrefix={arXiv},
      primaryClass={cs.LG},
      url={https://arxiv.org/abs/2412.02975}, 
}

@misc{barceló2025ehrenfeuchthausslerrankchainthought,
      title={Ehrenfeucht-Haussler Rank and Chain of Thought}, 
      author={Pablo Barceló and Alexander Kozachinskiy and Tomasz Steifer},
      year={2025},
      eprint={2501.12997},
      archivePrefix={arXiv},
      primaryClass={cs.LG},
      url={https://arxiv.org/abs/2501.12997}, 
}

@misc{amiri2025lowerboundschainofthoughtreasoning,
      title={Lower Bounds for Chain-of-Thought Reasoning in Hard-Attention Transformers}, 
      author={Alireza Amiri and Xinting Huang and Mark Rofin and Michael Hahn},
      year={2025},
      eprint={2502.02393},
      archivePrefix={arXiv},
      primaryClass={cs.LG},
      url={https://arxiv.org/abs/2502.02393}, 
}

@misc{huang2025transformerslearnregularlanguage,
      title={How Transformers Learn Regular Language Recognition: A Theoretical Study on Training Dynamics and Implicit Bias}, 
      author={Ruiquan Huang and Yingbin Liang and Jing Yang},
      year={2025},
      eprint={2505.00926},
      archivePrefix={arXiv},
      primaryClass={cs.LG},
      url={https://arxiv.org/abs/2505.00926}, 
}

@misc{yang2025multiheadtransformersprovablylearn,
      title={Multi-head Transformers Provably Learn Symbolic Multi-step Reasoning via Gradient Descent}, 
      author={Tong Yang and Yu Huang and Yingbin Liang and Yuejie Chi},
      year={2025},
      eprint={2508.08222},
      archivePrefix={arXiv},
      primaryClass={cs.LG},
      url={https://arxiv.org/abs/2508.08222}, 
}

@misc{vasudeva2025transformerslearnlowsensitivity,
      title={Transformers Learn Low Sensitivity Functions: Investigations and Implications}, 
      author={Bhavya Vasudeva and Deqing Fu and Tianyi Zhou and Elliott Kau and Youqi Huang and Vatsal Sharan},
      year={2025},
      eprint={2403.06925},
      archivePrefix={arXiv},
      primaryClass={cs.LG},
      url={https://arxiv.org/abs/2403.06925}, 
}

@misc{hahn2024sensitivefunctionshardtransformers,
      title={Why are Sensitive Functions Hard for Transformers?}, 
      author={Michael Hahn and Mark Rofin},
      year={2024},
      eprint={2402.09963},
      archivePrefix={arXiv},
      primaryClass={cs.LG},
      url={https://arxiv.org/abs/2402.09963}, 
}

@misc{yang2024incontextlearningrepresentationscontextual,
      title={In-Context Learning with Representations: Contextual Generalization of Trained Transformers}, 
      author={Tong Yang and Yu Huang and Yingbin Liang and Yuejie Chi},
      year={2024},
      eprint={2408.10147},
      archivePrefix={arXiv},
      primaryClass={cs.LG},
      url={https://arxiv.org/abs/2408.10147}, 
}

@misc{chen2024trainingdynamicsmultiheadsoftmax,
      title={Training Dynamics of Multi-Head Softmax Attention for In-Context Learning: Emergence, Convergence, and Optimality}, 
      author={Siyu Chen and Heejune Sheen and Tianhao Wang and Zhuoran Yang},
      year={2024},
      eprint={2402.19442},
      archivePrefix={arXiv},
      primaryClass={cs.LG},
      url={https://arxiv.org/abs/2402.19442}, 
}

@misc{shamir2017distributionspecifichardnesslearningneural,
      title={Distribution-Specific Hardness of Learning Neural Networks}, 
      author={Ohad Shamir},
      year={2017},
      eprint={1609.01037},
      archivePrefix={arXiv},
      primaryClass={cs.LG},
      url={https://arxiv.org/abs/1609.01037}, 
}

@misc{wang2025learningcompositionalfunctionstransformers,
      title={Learning Compositional Functions with Transformers from Easy-to-Hard Data}, 
      author={Zixuan Wang and Eshaan Nichani and Alberto Bietti and Alex Damian and Daniel Hsu and Jason D. Lee and Denny Wu},
      year={2025},
      eprint={2505.23683},
      archivePrefix={arXiv},
      primaryClass={cs.LG},
      url={https://arxiv.org/abs/2505.23683}, 
}
\bibliographystyle{icml2026}

\newpage
\appendix
\onecolumn
\section*{Appendix}
The appendix is organized as follows:
\begin{itemize}
    \item App.~\ref{app:limitation} discusses limitations of this work and important future directions.
    \item App.~\ref{app:faq} gives an FAQ for this paper.
    \item App.~\ref{app:related_works} discusses additional related works.
    \item App.~\ref{app:exp} provides all numerical experiments to support the theoretical claims.
    \item App.~\ref{app:condition_rl} presents the proofs for fine-tuning via RL for learning general $k$-sparse Boolean functions~(Sec.~\ref{sec:general_rl}):
    \begin{itemize}
        \item App.~\ref{app:form_policy_g} is for the formulation of the policy gradient.
        \item App.~\ref{app:build_separation_gamma} is for the separation of the critical gradient component.
        \item App.~\ref{app:proof_rl_thm} proves Thm.~\ref{thm:condition} (learnability of RL).
        \item App.~\ref{sec:hardness_rl} proves Prop.~\ref{prop:hardness_rl} (the hardness of RL with only final reward). 
    \end{itemize}
    \item App.~\ref{app:condition_sft} presents the proofs for SFT~(Sec.~\ref{sec:general_sft}).
    \item App.~\ref{app:specific_bool_funcs} presents the proofs for Sec.~\ref{sec:specific_bool_funcs}:
    \begin{itemize}
        \item App.~\ref{app:parity} is for $k$-\texttt{PARITY}.
        \item APP.~\ref{app:and} is for $k$-\texttt{AND}.
        \item App.~\ref{app:or} is for $k$-\texttt{OR}
    \end{itemize}
    \item App.~\ref{app:exact-solution} discusses one additional case, RL CoT with  SFT-like ground truth labels, and provides its exact learning dynamics. 
    \item App.~\ref{app:serial_cot} builds the extension of the recursive decomposition in Fig.~\ref{fig:bool_funcs} to a more general serial CoT setting.
\end{itemize}
For convenience, we summarize our notations in Table~\ref{sample-table}.
\begin{table}[h]
  \caption{\textbf{Notations}}
  \label{sample-table}
  \begin{center}
        \begin{tabular}{lcccr}
          \toprule
            $\sqx$ & input sequence\\
            $\sqy$ & intermediate reasoning sequence\\
            $\sqy^{(t)}$ & $t$-th step reasoning sequence\\
            $y^{(t)}_{j}$ & $j$-th token of $\sqy^{(t)}$\\
            $\sqy^{(:t)}$ & $(\sqy^{(1)}, \dots, \sqy^{(t)})$\\
            $\sqy^{(t:)}$ & $(\sqy^{(t)}, \dots, \sqy^{(T)})$\\
            $\hat{y}_j^{(t)}, \hat{\sqy}$ & generated reasoning sequence\\
            $\sqz$ & $(\sqx, \sqy)$\\
            $\mZ$ & $\sqz$ with positional encoding\\
            $\Phi_k$ & $k$-sparse Boolean function\\
            $\phi_2$ & 2-sparse Boolean function\\
            $\psi$ & activation function\\
            $[T]$ & integers between 1 and T\\
            $d_t$ & number of nodes in the $t$-th level\\
            $N_t$ & total number of tokens in $(\sqx, \sqy^{(:t)})$\\
            $\sigma_j^i$ & attention score\\
            $\gR$ & expected reward of RL\\
            $R(\sqx; \mW)$ & expected reward given $\sqx$\\
            $r_t$ & $t$-th step reward\\
            $r_{t:}$ & $\sum_{\tau = t}^T r_{\tau}(\sqy^{(\tau)}, \sqy^{(\tau - 1)})$\\
            $\mW^{\star}$ & optimal parameters (for RL or SFT)\\
            $\gamma_{l^{(t)}}^{j}$ & critical gradient component\\
            $\gL$ & population loss of SFT\\
            $L_t$ & $t$-th step loss\\
            $\eta$ & learning rate\\
            $\gY^{(:t)}$ & trajectory space of $\sqy^{(:t)}$\\
          \bottomrule
        \end{tabular}
  \end{center}
  \vskip -0.1in
\end{table}

\section{Limitation and Future Directions}
\label{app:limitation}
As we focus on the learning dynamics in a theoretically tractable setting, our analysis relies on simplifying assumptions, including a one-layer transformer, special structured attention masks, and special Boolean functions that can be decomposed into fixed 2-sparse Boolean functions. The class of such Boolean functions is clearly only a subset of all Boolean functions. A natural future direction is the extension to more realistic architectures, such as multi-layer and/or multi-head transformers. 

Another important direction is broader empirical validation beyond the special Boolean functions considered in this paper. Experiments on more realistic reasoning tasks would help clarify how the proposed mechanism relates to empirical observations in general post-training. 

Finally, our theory mainly provides sufficient conditions for learnability; failure should occur when the separation condition fails, so that the training signal no longer consistently favors relevant positions over irrelevant ones, such as RL with only final reward~(Prop.~\ref{prop:hardness_rl}). A more systematic analysis of finite-sample effects, noisy optimization, and imperfect rewards is an important future direction.

\section{FAQ}
\label{app:faq}
\begin{enumerate}
    \item \textbf{Q:} What is the main problem setup of this paper?

    \textbf{A:} This paper studies learning $k$-sparse Boolean functions that can be recursively decomposed as fixed 2-sparse Boolean functions~(Sec.~\ref{def:bool_funcs}) with a one-layer transformer~(Sec.~\ref{sec:def_transformer}). Such Boolean functions are only a subset of all possible Boolean functions. Furthermore, the contributions are mostly theoretical, i.e., a theoretical characterization of learning dynamics in a tractable setting.

    \item \textbf{Q: }Is the analytical framework restricted to the complete-binary tree decomposition in Fig.~\ref{fig:bool_funcs}?

    \textbf{A: }No. The analytical framework, i.e., the separation of the critical gradient component, can be extended beyond the complete binary-tree topology to a more common serial CoT setting~(App.~\ref{app:serial_cot}). In this serial CoT setting, the problem has the structure~(Fig.~\ref{fig:serial_cot})
    $$y_1=\phi_2(x_{i_1},x_{i_2}),\ y_{j}=\phi_2(y_{j-1},x_{i_{j+1}}),\ j = 2, \ldots, k - 1,$$ where $\sqx$ is the input, $\sqy$ is a serial CoT, and we do not require $k = 2^{T}$ for some integer $T$. For the transformer, we can replace the current tree mask with a more flexible and weaker serial mask that, when generating $y_j$, allows the model to attend to all input bits of $\sqx$ but only to the most recent CoT step $y_{j-1}$. This matches the serial recursion and no longer strictly enforces the exact binary tree in Fig.~\ref{fig:bool_funcs}. In this setting, the same proof strategy can be adapted, and the same qualitative learning dynamics can be recovered. Please refer to App.~\ref{app:serial_cot} for more details.
    
    \item \textbf{Q:} Does this paper study both RL with final~(outcome-based) reward and process reward?

    \textbf{A: }This paper mainly focuses on RL with process reward and provides the corresponding learnability result~(Thm.~\ref{thm:condition}). For RL with final reward, this paper only provides a negative result for learning $k$-\texttt{PARITY}~(Prop.~\ref{prop:hardness_rl}).
    \item \label{faq:q4}\textbf{Q: }Does this paper study both SFT with teacher forcing and without teacher forcing?

    \textbf{A: }This paper mainly focuses on SFT without teacher forcing to address the mismatch between the training and inference~(Thm.~\ref{thm:condition_sft}), i.e., at training the model only has correct prior steps, while at inference it must condition on its own generated steps. 
    
    But the results can be easily generalized to SFT with teacher forcing: under teacher forcing, each CoT step is trained with the correct previous steps; therefore, the same separation condition in Thm.~\ref{thm:condition_sft} is satisfied for all CoT steps simultaneously, so all steps can receive useful gradient in the same update and thus SFT with teacher forcing learns all steps simultaneously as RL with process reward.
    \item \textbf{Q: }Beyond the comparison of the learning dynamics of RL with process reward and that of SFT (with or without) teacher forcing, what are the additional insights for RL vs. SFT?

    \textbf{A: }We suggest that the comparison might not be a pure RL-vs.-SFT: changing teacher forcing affects learning dynamics of SFT, and changing supervision reward affects learning dynamics of RL. Thus, the empirical comparisons between RL and SFT might conflate optimization method with these additional factors (e.g., reward design and teacher forcing), rather than isolating RL vs. SFT alone. Therefore, one should control for whether supervision is outcome- or process-based and whether training uses perfect prefixes or model-generated prefixes when comparing RL and SFT. 

    \item \textbf{Q: }What are the connections to more realistic and general cases?

    \textbf{A: }Our core intuition is qualitatively consistent with prior empirical observations on process supervision. Specifically, our theory shows that process reward can provide informative learning signal across the reasoning chain, and thus RL does not need to wait until earlier steps are fully learned before improving later ones. Empirically, \citet{uesato2022solvingmathwordproblems} studied process- and outcome-based rewards for mathematical reasoning and showed that low reasoning-trajectory error requires process-based feedback. \citet{lightman2023letsverifystepstep} explained this advantage through precise credit assignment: process supervision specifies the exact location of any errors that occur. These observations  are qualitatively consistent with the same underlying intuition, i.e., process reward provides a step-level learning signal over multiple parts of the chain.
    
    \item \textbf{Q: }What are the main stylized assumptions?

    \textbf{A: }We use a one-layer transformer where: (1) we merge the key and query matrices, $\bm W_K$ and $\bm W_{Q}$, into a single matrix $\bm W_{KQ}$, which is a common choice in this line of works; (2) the merged matrix $\bm W_{KQ}$ has a special formulation~(Page 4, below Eq.~\eqref{eq:self-attention}) that focuses on the position of the sequence, so that the self-attention is mainly selecting the most important positions from the sequence when generating a token; (3) the feedforward layer only has a non-linear activation function, which is determined by the target $k$-sparse Boolean function $\Phi(\cdot)$~(Tab.~\ref{tab:forms_funcs}); and (4) we apply a ``pretrained'' mask for the transformer which serves as a structural prior and, in essence, abstractly captures the idea that pretraining provides priors for fine-tuning~(Fig.~\ref{fig:mask_W}).

    \item \textbf{Q: }What is the key part of the theory?

    \textbf{A: }the key part of our theory is not the binary-tree structure, but the fact that the gradient signal can distinguish relevant nodes from irrelevant ones, which can be captured through the separation of the critical gradient component---the most informative part of gradient signal. For example, in the extension to a more common serial CoT setting, the topology changes but the relevant nodes of each step are still distinguishable from irrelevant ones through the gradient signal, and thus the learnability results can also be established. In this sense, the separation condition is the formal way of expressing that the informative part of the gradient is stronger on the relevant nodes than on the irrelevant ones.
\end{enumerate}
\section{Related Works}
\label{app:related_works}
\paragraph{Transformers with CoT.}Recently, the success of CoT reasoning in transformer models has attracted a lot of attention. Along this direction, a series of existing works have investigated the improvement of transformer expressiveness by providing CoT~\citep{merrill2024expressivepowertransformerschain,chen2024theoreticallimitationsmultilayertransformer,li2024chainthoughtempowerstransformers}, while some other works aimed to reveal the inherent limitations~\citep{barceló2025ehrenfeuchthausslerrankchainthought,amiri2025lowerboundschainofthoughtreasoning}. One crucial aspect of analyzing the transformers with CoT is optimization dynamics~\citep{huang2025transformerslearnregularlanguage,kim2025transformers}, which typically focused on the single-head transformer. A recent work~\citep{yang2025multiheadtransformersprovablylearn} generalized the analysis to multi-head transformers and showed that a one-layer transformer can learn symbolic multi-step reasoning aided by sufficient intermediate reasoning steps of CoT. More recently, \citet{huang2026emergenceimplicitcurriculumrlvr} analyzed the learning dynamics of reinforcement learning with verifiable rewards (RLVR) on compositional reasoning tasks, showing that an implicit curriculum learning mechanism.

\paragraph{Learning Boolean functions with transformers.}(Sparse) Boolean functions have been shown to be fundamentally hard for transformers to learn in an end-to-end manner. This can be attributed to the ``simplicity bias" that encourages transformers to prefer low-degree functions~\citep{vasudeva2025transformerslearnlowsensitivity,hahn2024sensitivefunctionshardtransformers}. However, when providing a “scratchpad”~(CoT) as intermediate supervision, the problem is decomposed into easier sub-tasks and a one-layer transformer can learn the sparse \texttt{PARITY} in one gradient update via teacher forcing~\citep{kim2025transformers}. For other sparse Boolean functions \texttt{AND} and \texttt{OR}, \citet{hu2025minimalistsoftmaxattentionprovably} showed a similar result: a single-head softmax-attention cannot solve these sparse Boolean functions without any additional supervision, while it is capable of learning them via teacher-forcing. Our work also focuses on the learnability of transformers for sparse Boolean functions. Compared to prior works, we take the first step to investigate the underlying mechanism of fine-tuning via RL, which has not been covered by prior works, and we relax several constraints of SFT such as teacher-forcing, allowing us to reveal a distinction between the learning behavior of RL and that of SFT. \citet{wang2025learningcompositionalfunctionstransformers} studied the learnability of $k$-fold compositional functions by transformers, focusing on how data difficulty affects training. They demonstrated that gradient-based learning (in an SFT manner) with curriculum or data mixture can enable efficient learning. The curriculum learning in \citet{wang2025learningcompositionalfunctionstransformers} is also stepwise, similar to our conclusion that SFT learns stepwise. The difference is that our stepwise learning behavior can emerge naturally in SFT from transformers, an intrinsic property that does not rely on external curriculum learning. 

Overall, we make three novel theoretical contributions: \textbf{(1)} We provide a general, unifying analysis by introducing the new critical gradient component, which captures how the gradient signal can distinguish between relevant and irrelevant positions; \textbf{(2)} While \citet{kim2025transformers} showed the simultaneous learning of SFT with teacher-forcing, we reveal that RL with process reward can also learn the entire CoT chain simultaneously. As the mechanism of SFT with teacher forcing and that of RL are fundamentally distinct, our analysis yields a rigorous theoretical insight: two distinct mechanisms can lead to similar efficient learning; and \textbf{(3)} \citet{kim2025transformers} showed that SFT without teacher-forcing but with data augmentation and self-verification filtering exhibits a step-wise learning behavior. As a comparison, we remove these strategies and show that the step-wise learning can naturally arise from training transformers with CoT, i.e., it can be an intrinsic property of SFT.

\paragraph{Learning dynamics of transformers.}Learning dynamics is a fundamental aspect for deep learning. With the increasing importance of transformers, many existing works have investigated the learning dynamics of
transformers across a wide range of tasks, especially the special in-context learning~\citep{yang2024incontextlearningrepresentationscontextual,chen2024trainingdynamicsmultiheadsoftmax,huang2023incontextconvergencetransformers,zhang2023trainedtransformerslearnlinear} and the corresponding neural scaling laws~\citep{lyu2025a}. In this work, we also analyze the learning dynamics of transformers, while our focus is fine-tuning via RL or SFT rather than in-context learning, which stands at the core of our characterization of the corresponding learnability.
\section{Numerical Experiments}
\label{app:exp}
\begin{figure}[htbp]
    \centering
    \begin{subfigure}{0.49\linewidth}
        \centering
        \includegraphics[width=0.48\linewidth]{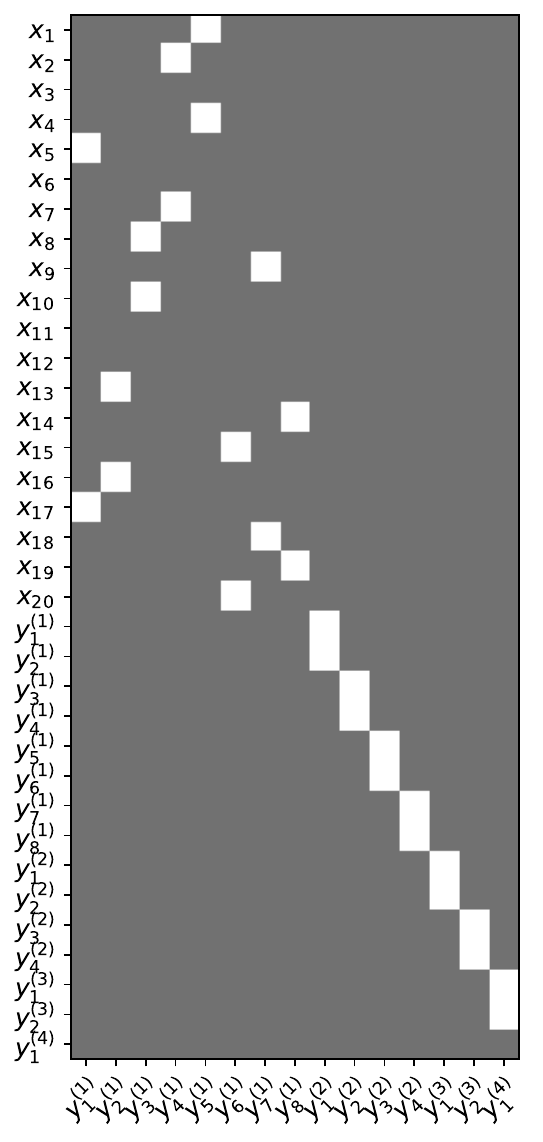}
        \caption{Ground-truth attention score}
        \label{fig:ground_truth_W}
    \end{subfigure}
    \begin{subfigure}{0.49\linewidth}
        \centering
        \includegraphics[width=0.48\linewidth]{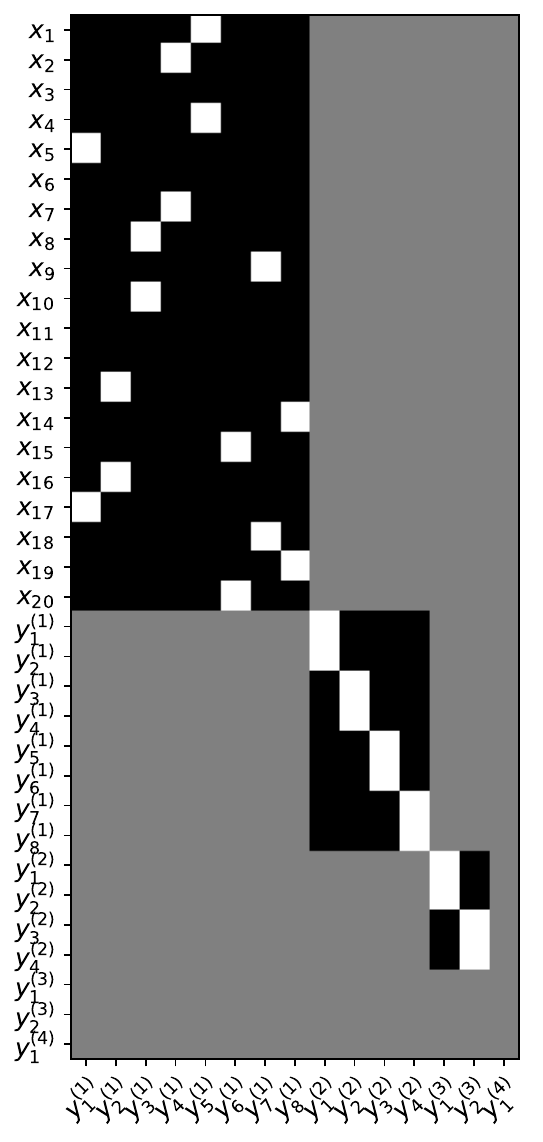}
        \caption{Sign of policy gradient}
        \label{fig:policy_g}
    \end{subfigure}
    \caption{\textbf{(a)} The ground truth $(\sigma^{\star})_{N_{t - 1} + l^{(t)}}^{N_{t - 2} + p}$. Each white box is $0.5$ and each gray box is $0$.  \textbf{(b)} $\sign(\nabla_{\mW}\gR(\mW))$ at $\mW(0) = \mathbf{1}$. Each white box has value $+1$ and each black box has value $-1$. Gray boxes have value 0 coming from causal mask and pretrained mask.}
\end{figure}
\begin{figure}[htbp]
    \centering
    \begin{subfigure}{0.3\textwidth}
        \centering
        \includegraphics[width=\linewidth]{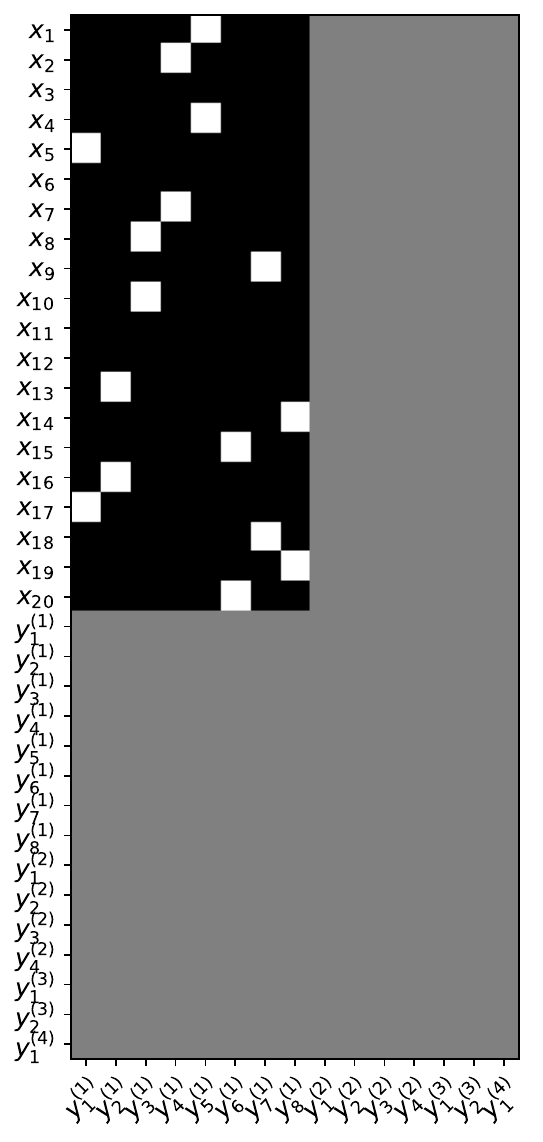}
        \caption{$s = 0$}
    \end{subfigure}
    \hfill
    \begin{subfigure}{0.3\textwidth}
        \centering
        \includegraphics[width=\linewidth]{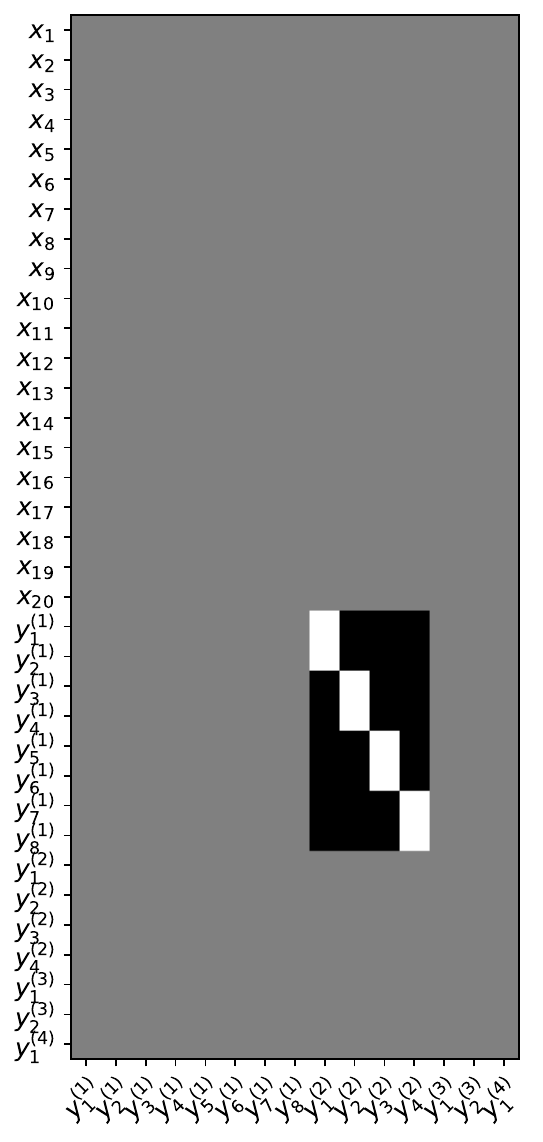}
        \caption{$s = 1$}
    \end{subfigure}
    \hfill
    \begin{subfigure}{0.3\textwidth}
        \centering
        \includegraphics[width=\linewidth]{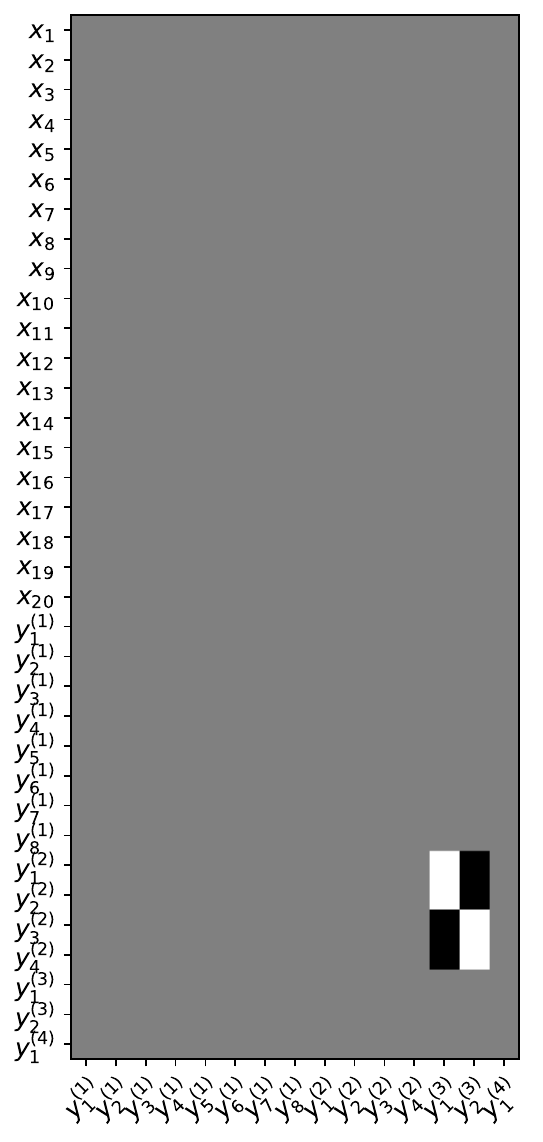}
        \caption{$s = 2$}
    \end{subfigure}
    \caption{ $\sign(- \nabla_{\mW}\gL(\mW))$ computed by $\mW(s)$ for different updating step $s$. Each white box has value $+1$, each black box has value $-1$, and grey boxes are 0.}
    \label{fig:sft}
\end{figure}
We conduct straightforward numerical experiments to empirically verify our theoretical claims. We specifically consider $k$-\texttt{PARITY}. To learn it, we use a one-layer transformer $f(\cdot; \mW)$ (with $\mW=\mathbf{1}$ at initialization) as specified in Sec.~\ref{sec:def_transformer}. The input data $\sqx \in \{-1, + 1\}^{d}$ and we construct a random subset $B \subseteq [d]$ with $|B| = k$ for the $k$-\texttt{PARITY}. We let $d = 20$ and $k = 16$; hence the CoT chain will have 4 steps, namely $\sqy = (\sqy^{(1)}, \sqy^{(2)}, \sqy^{(3)}, \sqy^{(4)})$. We uniformly sample $50,000$ samples of $\sqx$ to be our training dataset, where we also build the corresponding ground-truth reasoning sequence $\tilde{\sqy}$ for SFT as discussed in Sec.~\ref{sec:general_sft}. In Fig.~\ref{fig:ground_truth_W}, we plot the ground-truth attention score that can build $\tilde{\sqy}$, where each white box in a column $y^{(t)}_{j}$ denotes one child node of $y^{(t)}_{j}$ such that the attention score has value $0.5$ at each white box and $0$ at gray boxes. We omit the attention scores for $W_{i,j}$ with $j \leq d$ as we will not generate $\sqx \in \{-1, + 1\}^{d}$.

\paragraph{RL fine-tuning.}We fine-tune $f(\cdot; \mW)$ via RL optimized by the sign of the policy gradient following Sec.~\ref{sec:general_rl}. We plot $\sign(\nabla_{\mW}\mathcal{R}(\mW))$ at $\mW(0)$ in Fig.~\ref{fig:policy_g}. Compared to Fig.~\ref{fig:ground_truth_W}, we can clearly see that $\nabla_{\mW}\gR(\mW)$ has positive values in all relevant positions and negative values in irrelevant positions, i.e., a distinct separation that enables the transformer to learn $k$-\texttt{PARITY} after one update.

\paragraph{SFT.}In contrast to the one-update learning of RL, SFT exhibits a step-wise learning behavior, as shown in Fig.~\ref{fig:sft}. We use $s$ to count the iteration number. In particular, the gradient $\nabla_{\mW}\gL(\mW)$ only has nonzero values for the first reasoning step $\sqy^{(1)}$ at $s = 0$, where it has positive values at relevant positions of each token of $\sqy^{(1)}$ and negative values at irrelevant positions. Hence SFT can and only can learn the first CoT step at the first update of sign gradient descent. Similarly, the transformer can and only can learn $\sqy^{(2)}$ at $s = 1$, etc. 

\section{Proofs of Section~\ref{sec:general_rl}}
\label{app:condition_rl}

\subsection{Formulation of the Policy Gradient}
\label{app:form_policy_g}
\paragraph{Additional notation.}For the whole CoT reasoning sequence $\sqy$, we denote its trajectory space by $\gY^{(:T)}$, and the trajectory space of $\sqy^{(:t)}$ by $\gY^{(:t)}$.
\begin{lemma}[Formulation of the policy gradient]
\label{lemma:cal_gd_rl}
The gradient for the expected reward 
\begin{equation}
    \gR(\mW) : = \expt{\sqx}{R(\sqx, \mW)} : = \expt{\sqx}{\expt{\sqy\sim p_{\mW}(\cdot | \sqx)}{\sum_{t = 1}^{T}r_t\left( \sqy^{(t)}, \sqy^{(t - 1)}\right)}}
\end{equation}
is computed by 
\begin{equation}
    \nabla_{\mW} R(\sqx; \mW) = \expt{\sqy\sim p_{\mW}(\cdot | \sqx) }{\sum_{t = 1}^T \nabla_{\mW}\ln p_{\mW}\left(\sqy^{(t)} | \sqy^{(t - 1)}\right)\sum_{\tau = t}^{T}r_{\tau}\left(\sqy^{(\tau)}, \sqy^{(\tau - 1)}\right)}.
\end{equation}
\end{lemma}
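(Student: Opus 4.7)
The plan is to derive the claimed identity in two stages: first apply the log-derivative (REINFORCE) trick to the trajectory expectation to obtain a raw gradient involving the full return $\sum_{\tau=1}^{T} r_\tau$, and then use a causality argument based on the autoregressive factorization of $p_{\mW}(\sqy|\sqx)$ to drop the ``past reward'' cross-terms, leaving only the rewards-to-go $\sum_{\tau=t}^{T} r_\tau$ multiplying each score factor $\nabla_{\mW}\ln p_{\mW}(\sqy^{(t)}|\sqy^{(t-1)})$.

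First I would write $R(\sqx;\mW) = \sum_{\sqy \in \gY^{(:T)}} p_{\mW}(\sqy|\sqx)\, r(\sqx,\sqy)$ and differentiate under the (finite) sum. Using the factorization $p_{\mW}(\sqy|\sqx) = \prod_{t=1}^{T} p_{\mW}(\sqy^{(t)}|\sqy^{(t-1)})$ stated just before the lemma, together with $\nabla p = p\,\nabla \ln p$ and $\ln\prod = \sum\ln$, this yields
\begin{equation*}
\nabla_{\mW} R(\sqx;\mW) = \expt{\sqy\sim p_{\mW}(\cdot|\sqx)}{\Bigl(\sum_{t=1}^{T}\nabla_{\mW}\ln p_{\mW}(\sqy^{(t)}|\sqy^{(t-1)})\Bigr)\Bigl(\sum_{\tau=1}^{T} r_\tau(\sqy^{(\tau)},\sqy^{(\tau-1)})\Bigr)}.
\end{equation*}
Expanding the product gives a double sum over $(t,\tau) \in [T]\times[T]$, and it remains to show that every term with $\tau < t$ vanishes.

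For the causality step, fix $\tau < t$ and use the tower property, conditioning on the prefix $\sqy^{(:t-1)}$. Since $r_\tau(\sqy^{(\tau)},\sqy^{(\tau-1)})$ depends only on $\sqy^{(:\tau)} \subseteq \sqy^{(:t-1)}$, it factors out of the inner conditional expectation, leaving
\begin{equation*}
\expt{\sqy^{(t)} \sim p_{\mW}(\cdot|\sqy^{(t-1)})}{\nabla_{\mW}\ln p_{\mW}(\sqy^{(t)}|\sqy^{(t-1)})} = \nabla_{\mW}\!\sum_{\sqy^{(t)}} p_{\mW}(\sqy^{(t)}|\sqy^{(t-1)}) = \nabla_{\mW}(1) = 0,
\end{equation*}
where the outer sums over $\sqy^{(\tau)}$ for $\tau > t$ (which appear in $\nabla_{\mW}\ln p_{\mW}(\sqy^{(t)}|\sqy^{(t-1)})$'s marginalization, but do not affect the score itself) contribute trivially by Fubini. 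The terms with $\tau \geq t$ are retained unchanged, giving exactly the rewards-to-go expression in the lemma statement.

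I do not expect any serious obstacle; the argument is a textbook REINFORCE-with-causality derivation and every step is elementary provided one is careful that (i) the state spaces are finite so differentiation and summation commute, and (ii) the Markov-like structure $p_{\mW}(\sqy^{(t)}|\sqy^{(:t-1)}) = p_{\mW}(\sqy^{(t)}|\sqy^{(t-1)})$ noted in Section~\ref{sec:general_rl} is invoked when justifying the conditional factorization. The only mildly delicate point is the bookkeeping in the tower step: one must condition on exactly $\sqy^{(:t-1)}$ (so that $r_\tau$ is measurable with respect to it for $\tau<t$) and then take the inner expectation only over $\sqy^{(t)}$ before marginalizing over $\sqy^{(t+1:)}$, for which independence of the score from the future is not needed since we are only evaluating the inner score's mean to zero.
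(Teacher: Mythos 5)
Your proposal is correct and follows essentially the same route as the paper's own proof: apply the log-derivative trick to the full return, then argue that the cross-terms with $\tau < t$ vanish because $r_\tau$ is measurable with respect to $\sqy^{(:t-1)}$ and the conditional mean of the score $\nabla_{\mW}\ln p_{\mW}(\sqy^{(t)}|\sqy^{(t-1)})$ given $\sqy^{(:t-1)}$ is zero (marginalizing out $\sqy^{(t+1:)}$ trivially and using $\nabla_{\mW}\sum_{\sqy^{(t)}} p_{\mW}(\sqy^{(t)}|\sqy^{(t-1)}) = \nabla_{\mW}1 = 0$). The only cosmetic difference is that the paper groups the rewards into a past block $r_{:t-1}$ and a to-go block $r_{t:}$ and kills the first block in one shot, whereas you expand into the double sum over $(t,\tau)$ and kill each $\tau < t$ term individually; these are the same argument with different bookkeeping.
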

\begin{proof}
According to the definition of $R(\sqx; \mW)$, we can write its gradient as
\begin{equation}
\label{eq:lemma_1_step1}
    \begin{aligned}
        &\ \nabla_{\mW} R(\sqx; \mW) \\
        & = \sum_{\sqy\in \gY^{(:T)}} \nabla_{\mW} p_{\mW}\left(\sqy |\sqx\right)\sum_{\tau = 1}^{T}r_{\tau}\left( \sqy^{(\tau)}, \sqy^{(\tau - 1)}\right) \\
       &  = \sum_{t = 1}^{T}\sum_{\sqy\in \gY^{(:T)}} p_{\mW}\left(\sqy |\sqx\right) \nabla_{\mW} \ln p_{\mW}\left(\sqy^{(t)} | \sqy^{(t - 1)}\right)\left( r_{:t - 1}(\sqy^{(:t - 1)}) + r_{t: }(\sqy^{(t - 1:)})\right),
    \end{aligned}
\end{equation}
where  we apply 
\[
    \nabla_{\mW} p_{\mW}\left(\sqy |\sqx\right) = p_{\mW}\left(\sqy |\sqx\right)\nabla_{\mW}\ln p_{\mW}\left(\sqy |\sqx\right)
\]
and $p_{\mW}\left(\sqy |\sqx\right) = \prod_{t = 1}^{T}p_{\mW}\left(\sqy^{(t)} | \sqy^{(t - 1)}\right)$ with $\sqy^{(0)} = \sqx$ in the second equality, and we define 
\begin{equation}
\begin{aligned}
    r_{:t}(\sqy^{(:t)}) &: = \sum_{\tau = 1}^{t} r_{\tau}\left( \sqy^{(\tau)}, \sqy^{(\tau - 1)}\right),\\
    r_{t: }(\sqy^{(t - 1:)}) & : =\sum_{\tau = t}^{T} r_{\tau}\left( \sqy^{(\tau)}, \sqy^{(\tau - 1)}\right).    
\end{aligned}
\end{equation}
For any $t \in [T]$, we can derive
\begin{equation}
\begin{aligned}
    &\quad  \sum_{\sqy\in \gY^{(:T)}} p_{\mW}\left(\sqy |\sqx\right) \nabla_{\mW} \ln p_{\mW}\left(\sqy^{(t)} | \sqy^{(t - 1)}\right)r_{:t - 1}(\sqy^{(:t - 1)}) \\
    & =  \sum_{\sqy^{(: t)}\in \gY^{(:t)}} p_{\mW}\left(\sqy^{(: t)} |\sqx\right) \nabla_{\mW} \ln p_{\mW}\left(\sqy^{(t)} | \sqy^{(t - 1)}\right)r_{:t - 1}(\sqy^{(:t - 1)}) \left[ \sum_{\sqy^{(t + 1: )}\in \gY^{(:t + 1)}} p_{\mW}\left(\sqy^{(t + 1: )} |\sqy^{(: t)}\right)\right]\\
    & =  \sum_{\sqy^{(: t - 1)}\in \gY^{(:t - 1)}} p_{\mW}\left(\sqy^{(: t - 1)} |\sqx\right)r_{:t - 1}(\sqy^{(:t - 1)})\sum_{\sqy^{(t)}\in \gY^{(t)}} p_{\mW}\left(\sqy^{(t)} |\sqy^{(t - 1)}\right)\nabla_{\mW} \ln p_{\mW}\left(\sqy^{(t)} | \sqy^{(t - 1)}\right)\\
    & = \sum_{\sqy^{(: t - 1)}\in \gY^{(:t - 1)}} p_{\mW}\left(\sqy^{(: t - 1)} |\sqx\right)r_{:t - 1}(\sqy^{(:t - 1)})\nabla_{\mW} \left[ \sum_{\sqy^{(t)}\in \gY^{(t)}} p_{\mW}\left(\sqy^{(t)} |\sqy^{(t - 1)}\right)\right] = 0,
\end{aligned}
\end{equation}
where we apply  $\sum_{\sqy^{(t + 1: )}} p_{\mW}\left(\sqy^{(t + 1: )} |\sqy^{(: t)}\right) = 1$ in the second equality. Thus, Eq.~\eqref{eq:lemma_1_step1} only has the term $r_{t:}$ and the claim of the lemma is established.
\end{proof}

\subsection{Separation of the Critical Gradient Component}
\label{app:build_separation_gamma}
For convenience, we recall that the critical gradient component is defined by
\begin{equation}
\textbf{Critical gradient component: }\gamma_{l^{(t)}}^{j}(\sqy^{(t - 1)}): = \frac{2}{k - 1}\psi'\left(\xi_{l^{(t)}}\right)\phi_2\left(  y^{(t - 1)}_{i_1^{l^{(t)}}}, y^{(t - 1)}_{i_2^{l^{(t)}}}\right) y_j^{(t - 1)}.
\end{equation}
Below we discuss the relation between the policy gradient Eq.~\eqref{eq:policy_g}, and the critical gradient component and the equivalence between their separations.
\begin{lemma}
\label{lemma:separation_gamma}
$\forall t \in [T], l^{(t)}\in [d_t]$, let $i_1^{l^{(t)}}$ and $i_2^{l^{(t)}}$ be two child nodes of the $l^{(t)}$-th node of $\sqy^{(t)}$, then the policy gradient $\E_{\sqx}\left[\nabla_{\mW}R(\sqx; \mW)\right]$ Eq.~\eqref{eq:policy_g} can be expressed by the critical gradient component as ($p \in [d_{t - 1}]$)
\begin{equation}
\label{eq:claim1_lemma2}
\begin{aligned}
    & \quad \partial_{W_{N_{t - 2} + p,N_{t - 1} + l^{(t)}}} R(\sqx; \mW)\\
    & =  \E_{\sqy^{(: t- 1)}\sim p_{\mW}(\cdot | \sqx)}\Big[\gamma_{l^{(t)}}^{p}(\sqy^{(t - 1)}) - \sum_{i = 1}^{d_{t - 1}}\sigma^{N_{t - 2} + i}_{N_{t - 1} + l^{(t)}}\gamma_{l^{(t)}}^{i}(\sqy^{(t - 1)})\Big]\sigma^{N_{t - 2} + p}_{N_{t - 1} + l^{(t)}}.
\end{aligned}
\end{equation}
Furthermore, if $\mW = c\mathbf{1}$ for an arbitrary constant $c$, the separation of the policy gradient $\forall p \in \{i_1^{l^{(t)}}, i_2^{l^{(t)}}\},\   p'\in [d_{t - 1}] \backslash \{i_1^{l^{(t)}}, i_2^{l^{(t)}}\}$:
\begin{equation}
    \E_{\sqx}\big[\partial_{W_{N_{t - 2} + p, N_{t - 1} + l^{(t)}}}R(\sqx; \mW) - \partial_{W_{N_{t - 2} + p', N_{t - 1} + l^{(t)}}}R(\sqx; \mW) \big] > 0
\end{equation}
is equivalent to the separation of the critical gradient component $\forall p \in \{i_1^{l^{(t)}}, i_2^{l^{(t)}}\},\   p'\in [d_{t - 1}] \backslash \{i_1^{l^{(t)}}, i_2^{l^{(t)}}\}:$
\begin{equation}
   \E_{\sqx, \sqy^{(:t - 1)} \sim p_{\mW}(\cdot | \sqx)}\big[\gamma_{l^{(t)}}^{p}(\sqy^{(t - 1)}) - \gamma_{l^{(t)}}^{p'}(\sqy^{(t - 1)})\big] > 0.
\end{equation}
\end{lemma}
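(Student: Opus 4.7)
The plan is to prove the two claims by direct computation, exploiting the mask structure imposed on $\mW$ together with the standard score-function identity. For the first claim (Eq.~\eqref{eq:claim1_lemma2}), I would start from the immediate-reward policy gradient Eq.~\eqref{eq:policy_g} and observe that, thanks to the pretrained/causal masks, the entry $W_{N_{t-2}+p,\,N_{t-1}+l^{(t)}}$ affects the transformer output only at position $N_{t-1}+l^{(t)}$, i.e.\ only through $\xi_{l^{(t)}}$. Consequently in the sum over $\tau \in [T]$ only the $\tau=t$ term survives, and within $r_t$ only the $j=l^{(t)}$ summand is nonzero after differentiation. Invoking the score-function identity
\[
\E_{\sqy^{(t)}\sim p_\mW(\cdot|\sqy^{(t-1)})}\!\big[\nabla_\mW \ln p_\mW(\sqy^{(t)}|\sqy^{(t-1)})\, r_t\big] \;=\; \nabla_\mW\, \E_{\sqy^{(t)}\sim p_\mW(\cdot|\sqy^{(t-1)})}[r_t],
\]
and using $\E[y_{l^{(t)}}^{(t)}\mid \sqy^{(t-1)}]=2\psi(\xi_{l^{(t)}})-1$ together with $\bar y^{(t)}_{l^{(t)}} = \phi_2(y^{(t-1)}_{i_1^{l^{(t)}}},\, y^{(t-1)}_{i_2^{l^{(t)}}})$, the calculation reduces to differentiating $\tfrac{1}{k-1}(2\psi(\xi_{l^{(t)}})-1)\bar y_{l^{(t)}}^{(t)}$ in $W_{N_{t-2}+p,\,N_{t-1}+l^{(t)}}$.

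The only real computation is $\partial \xi_{l^{(t)}}/\partial W_{N_{t-2}+p,\,N_{t-1}+l^{(t)}}$. Using the standard softmax-Jacobian identity $\partial \sigma^{i}_{j}/\partial W_{i',j}=\sigma^{i}_{j}(\delta_{i,i'}-\sigma^{i'}_{j})$ applied to $\xi_{l^{(t)}}=\sum_{i=1}^{d_{t-1}} y_i^{(t-1)}\sigma^{N_{t-2}+i}_{N_{t-1}+l^{(t)}}$, this partial equals $\sigma^{N_{t-2}+p}_{N_{t-1}+l^{(t)}}\big(y_p^{(t-1)}-\sum_i y_i^{(t-1)}\sigma^{N_{t-2}+i}_{N_{t-1}+l^{(t)}}\big)$. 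Multiplying by the prefactor $\tfrac{2}{k-1}\psi'(\xi_{l^{(t)}})\bar y^{(t)}_{l^{(t)}}$ and recognising the definition of $\gamma^{j}_{l^{(t)}}(\sqy^{(t-1)})$ in Eq.~\eqref{eq:def_gamma}, the bracketed term becomes exactly $\gamma^{p}_{l^{(t)}}(\sqy^{(t-1)})-\sum_i \sigma^{N_{t-2}+i}_{N_{t-1}+l^{(t)}}\gamma^{i}_{l^{(t)}}(\sqy^{(t-1)})$. Finally, I take the outer expectation over $\sqy^{(:t-1)}\sim p_\mW(\cdot|\sqx)$, noting that the factor $\sigma^{N_{t-2}+p}_{N_{t-1}+l^{(t)}}$ depends only on $\mW$ and can be pulled outside, which yields Eq.~\eqref{eq:claim1_lemma2}.

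For the second claim, the key observation is that at $\mW=c\mathbf{1}$ the pretrained+causal mask structure makes the relevant row of $\mW$ uniform over the $d_{t-1}$ admissible positions $\{N_{t-2}+1,\ldots,N_{t-2}+d_{t-1}\}$, so all attention scores $\sigma^{N_{t-2}+i}_{N_{t-1}+l^{(t)}}$ are equal to $1/d_{t-1}$. In particular $\sigma^{N_{t-2}+p}_{N_{t-1}+l^{(t)}}=\sigma^{N_{t-2}+p'}_{N_{t-1}+l^{(t)}}$, so when I subtract the two expressions obtained from Eq.~\eqref{eq:claim1_lemma2}, the common ``baseline'' term $\sum_i \sigma^{N_{t-2}+i}_{N_{t-1}+l^{(t)}}\gamma^i_{l^{(t)}}$ cancels, and the difference of policy-gradient entries reduces to the positive constant $\sigma^{N_{t-2}+p}_{N_{t-1}+l^{(t)}}=1/d_{t-1}$ times $\E_{\sqx,\sqy^{(:t-1)}}[\gamma^p_{l^{(t)}}-\gamma^{p'}_{l^{(t)}}]$; the sign equivalence of the two separations follows immediately.

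The conceptual steps are straightforward, and I do not anticipate a hard obstacle; the main thing to be careful about is the index bookkeeping: verifying that the masks indeed restrict the ``live'' indices to $\{N_{t-2}+1,\ldots,N_{t-2}+d_{t-1}\}$ for the column $N_{t-1}+l^{(t)}$, so that the softmax normalisation in the Jacobian identity runs exactly over that range (and not over all $d+k$ positions). This is what ensures both that $\sigma^{N_{t-2}+i}$ sums to $1$ over $i\in[d_{t-1}]$ and that the uniform-attention argument at $\mW=c\mathbf{1}$ gives $1/d_{t-1}$, which is needed for the cancellation in the second claim.
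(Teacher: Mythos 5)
Your proposal is correct and follows essentially the same route as the paper: reduce to the softmax Jacobian applied to $\xi_{l^{(t)}}$, recognize the critical gradient component $\gamma^{j}_{l^{(t)}}$, and for the second claim use that $\mW = c\mathbf{1}$ forces uniform attention so the baseline term cancels when you subtract. The only (minor) difference is organizational: the paper substitutes the gradient of $p_{\mW}(y^{(t)}_{l^{(t)}}\mid\sqy^{(t-1)})$ first and then carries out an explicit combinatorial sum over $\sqy^{(t)}$ using the token-wise factorization of $p_{\mW}(\sqy^{(t)}\mid\sqy^{(t-1)})$ to produce the factor $\tfrac{2}{k-1}\bar{y}^{(t)}_{l^{(t)}}$, whereas you push the derivative across $\E_{\sqy^{(t)}}[\,\cdot\mid\sqy^{(t-1)}]$ and differentiate the closed form $\tfrac{1}{k-1}\sum_j \bar{y}_j(2\psi(\xi_j)-1)$; these are algebraically equivalent and yours is a touch shorter. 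One small remark: your index bookkeeping is right --- the masked softmax normalizes over the $d_{t-1}$ live rows, so the uniform attention score is $1/d_{t-1}$; the paper's proof of this lemma momentarily writes $1/d_{t-2}$ at this step, which is a typo (the constant is positive either way, so the conclusion is unaffected, and the paper uses $d_{t-1}$ consistently elsewhere).
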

\begin{proof}
To prove this claim, we start from analyzing the detailed formulation of the policy gradient with all components written explicitly. Specifically, given $t\in [T]$, only the components with $p \in [d_{t - 1}]$ and $l^{(t)}\in [d_{t }]$,
\begin{equation}
     W_{N_{t - 2}+p, N_{t - 1} + l^{(t)}} \neq - \infty
\end{equation}
given the pretrained mask discussed in Sec.~\ref{sec:def_transformer}. As a result, we only need to analyze the policy gradient for these components. According to
\begin{align}
\label{eq:grad_R_total}
    & \quad \partial_{W_{N_{t - 2}+p, N_{t - 1} + l^{(t)}}} R (\sqx; \mW)\nonumber\\
    &  \overset{(i)}{=} \sum_{\sqy \in \gY^{(:t)}}  p_{\mW}(\sqy^{(: t - 1)} | \sqx) p_{\mW}(\sqy^{(t + 1:)} | \sqy^{(t)})  \partial_{W_{N_{t - 2}+p, N_{t - 1} + l^{(t)}}}  p_{\mW}(\sqy^{(t)} | \sqy^{(t - 1)}) r_t (\sqy^{(t)}, \sqy^{(t - 1)})\nonumber\\
    & =  \sum_{\sqy^{(:t - 1)}\in \gY^{(:t - 1)}} p_{\mW}(\sqy^{(: t - 1)} | \sqx) \sum_{\sqy^{(t:)}\in \gY^{(t:)}} \partial_{W_{N_{t - 2}+p, N_{t - 1} + l^{(t)}}}  p_{\mW}(\sqy^{(t)} | \sqy^{(t - 1)}) r_t (\sqy^{(t)}, \sqy^{(t - 1)})p_{\mW}(\sqy^{(t + 1:)} | \sqy^{(t)})\nonumber\\
    & = \sum_{\sqy^{(:t - 1)}\in \gY^{(:t - 1)}} p_{\mW}(\sqy^{(: t - 1)} | \sqx) \nonumber\\
    & \quad \times \left[ \sum_{\sqy^{(t)}\in \gY^{(t)}} \partial_{W_{N_{t - 2}+p, N_{t - 1} + l^{(t)}}} p_{\mW}(\sqy^{(t)} | \sqy^{(t - 1)}) r_t (\sqy^{(t)}, \sqy^{(t - 1)})\sum_{\sqy^{(t + 1 :)}\in \gY^{(t + 1: )}}  p_{\mW}(\sqy^{(t + 1:)} | \sqy^{(t)})\right]\nonumber\\
    & \overset{(ii)}{=}  \E_{\sqy^{(: t- 1)} \sim p_{\mW}(\cdot | \sqx)} \left[ \sum_{\sqy^{(t)}\in \gY^{(t)}}\partial_{W_{N_{t - 2}+p, N_{t - 1} + l^{(t)}}}  p_{\mW}(\sqy^{(t)} | \sqy^{(t - 1)}) r_t (\sqy^{(t)}, \sqy^{(t - 1)})\right], 
\end{align}
where $(i)$ is a result of the definition of $\nabla_\mW R(\sqx; \mW)$, the fact that each $\sqy^{(t)}$ only depends on $\sqy^{(t - 1)}$ due to the pretrained mask 
\[
    p_{\mW}(\sqy | \sqx) = p_{\mW}(\sqy^{(:t - 1)} | \sqx)p_{\mW}(\sqy^{(t: )} | \sqx, \sqy^{(:t - 1)}) = p_{\mW}(\sqy^{(:t - 1)} | \sqx)p_{\mW}(\sqy^{(t: )} | \sqy^{(t - 1)}),
\]
and that $W_{N_{t - 2} + i,N_{t - 1} + l^{(t)}}$ only determines $p_{\mW}(\sqy^{(t)} | \sqy^{(t - 1)})$;
$(ii)$ is because the law of total probability. Now, as the formulation of the transformer implies Eq.~\eqref{eq:output_transformer} that
\begin{equation}
    p_{\mW}(y^{(t)}_{l^{(t)}} = 1 | \sqy^{(t - 1)}) = \left[f(\sqx, \sqy; \mW)\right]_{ N_{t - 1} + l^{(t)}} =\psi\left( \xi_{l^{(t)}}\right)
\end{equation}
with $\xi_{l^{(t)}}: = \sum_{i = 1}^{d_{t - 1}} y^{(t - 1)}_i\sigma_{N_{t - 1} + l^{(t)}}^{N_{t - 2} + i},$ we need the gradient of the attention score $\sigma_{i}^{j}$, which is computed as
\begin{equation}
\label{eq:softmax_g}
    \partial_{W_{i,j}} \sigma_n^{m} = \delta_{jn}(\delta_{im } - \sigma_j^i)\sigma_n^m.
\end{equation}
This gives us~(with some straightforward algebra)
\begin{equation}
\label{eq:output_g}
\begin{aligned}
   & \quad  \partial_{W_{N_{t - 2}+p, N_{t - 1} + l^{(t)}}}  p_{\mW}(y^{(t)}_{l^{(t)}} = 1 | \sqy^{(t - 1)}) \\
   & = \psi'(\xi_{l^{(t)}}) \sum_{i = 1}^{d_{t - 1}} y^{(t - 1)}_i \partial_{W_{N_{t - 2}+p, N_{t - 1} + l^{(t)}}} \sigma_{N_{t - 1} + l^{(t)}}^{N_{t - 2} + i} \\
    & = \psi'(\xi_{l^{(t)}}) \sum_{i = 1}^{d_{t - 1}} y^{(t - 1)}_i \left( \delta_{ip} - \sigma_{N_{t - 1} + l^{(t)}}^{N_{t - 2}+p}\right)\sigma_{N_{t - 1} + l^{(t)}}^{N_{t - 2}+ i }\\
    & = \psi'(\xi_{l^{(t)}}) \left( y^{(t - 1)}_p - \sum_{i = 1}^{d_{t - 1}} y^{(t - 1)}_i \sigma_{N_{t - 1} + l^{(t)}}^{N_{t - 2}+ i }\right)\sigma_{N_{t - 1} + l^{(t)}}^{N_{t - 2}+ p}.
\end{aligned}
\end{equation}
On the other hand, by using the fact that $p_{\mW}(y^{(t)}_{l^{(t)}} =  - 1 | \sqy^{(t - 1)}) = 1 - p_{\mW}(y^{(t)}_{l^{(t)}} = 1 | \sqy^{(t - 1)})$, we are able to conclude that
\begin{equation}
\begin{aligned}
    & \quad \partial_{W_{N_{t - 2}+p, N_{t - 1} + l^{(t)}}}  p_{\mW}(y^{(t)}_{l^{(t)}} = - 1 | \sqy^{(t - 1)}) \\
    & = - \psi'(\xi_{l^{(t)}}) \left( y^{(t - 1)}_p - \sum_{i = 1}^{d_{t - 1}} y^{(t - 1)}_i \sigma_{N_{t - 1} + l^{(t)}}^{N_{t - 2}+ i }\right)\sigma_{N_{t - 1} + l^{(t)}}^{N_{t - 2}+ p}.
\end{aligned}
\end{equation}
Thus, we can summarize the gradient as
\begin{equation}
\label{eq:prob_g}
\begin{aligned}
    & \quad \partial_{W_{N_{t - 2}+p, N_{t - 1} + l^{(t)}}}  p_{\mW}(y^{(t)}_{l^{(t)}}| \sqy^{(t - 1)})\\
    & = \psi'(\xi_{l^{(t)}}) \left( y^{(t - 1)}_p - \sum_{i = 1}^{d_{t - 1}} y^{(t - 1)}_i \sigma_{N_{t - 1} + l^{(t)}}^{N_{t - 2}+ i }\right)\sigma_{N_{t - 1} + l^{(t)}}^{N_{t - 2}+ p} y_{l^{(t)}}^{(t)}.~
\end{aligned}
\end{equation}
\paragraph{Expressing the policy gradient with the critical gradient component.}We now express the policy gradient using the critical gradient component to prove the first claim of the lemma. Applying Eq.~\eqref{eq:prob_g}, we are able to rewrite Eq.~\eqref{eq:grad_R_total} as~(recall that all tokens $y^{(t)}_{j}$ for $j\in [d_{t - 1}]$ of the sequence $\sqy^{(t)}$ are independent of each other hence $p_{\mW}(\sqy^{(t)} | \sqy^{(t - 1)}) = \prod_{l = 1}^{d_t}p_{\mW}(y^{(t)}_l | \sqy^{(t - 1)})$):
\begin{align}
\label{eq:inter_cal_grad_prob}
    &\ \sum_{\sqy^{(t)}\in\gY^{(t)}}\partial_{W_{N_{t - 2}+p, N_{t - 1} + l^{(t)}}}  p_{\mW}(\sqy^{(t)} | \sqy^{(t - 1)}) r_t (\sqy^{(t)}, \sqy^{(t - 1)}) \nonumber\\
    = & \  \left[\psi' \left(\xi_{l^{(t)}}\right)\right] \left( y_p^{(t - 1)} - \sum_{i = 1}^{d_{t - 1}}y_i^{(t - 1)}\sigma^{N_{t - 2} + i}_{N_{t - 1}+ l^{(t)}}\right)\sigma^{N_{t - 2} + p}_{N_{t - 1} + l^{(t)}} \sum_{\sqy^{(t)}\in\gY^{(t)}} \frac{p_{\mW}(\sqy^{(t)}| \sqy^{(t - 1)}) }{p_{\mW}(y^{(t)}_{l^{(t)}}| \sqy^{(t - 1)})}r_t (\sqy^{(t)}, \sqy^{(t - 1)}) y_{l^{(t)}}^{(t)}\nonumber\\
    \overset{(i)}{=} &\ \frac{2}{k - 1}\left[\psi' \left(\xi_{l^{(t)}}\right)\right] \left( y_p^{(t - 1)} - \sum_{i = 1}^{d_{t - 1}}y_i^{(t - 1)}\sigma^{N_{t - 2} + i}_{N_{t - 1}+ l^{(t)}}\right)\sigma^{N_{t - 2} + p}_{N_{t - 1} + l^{(t)}} \bar{y}_{l^{(t)}}^{(t)}\nonumber\\
     \overset{(ii)}{=} & \ \frac{2}{k - 1} \left[\psi' \left(\xi_{l^{(t)}}\right) \phi\left(y_{i_1^{l^{(t)}}}^{(t - 1)}, y_{i_2^{l^{(t)}}}^{(t - 1)}\right) \right] \left( y_p^{(t - 1)} - \sum_{i = 1}^{d_{t - 1}}y_i^{(t - 1)}\sigma^{N_{t - 2} + i}_{N_{t - 1}+ l^{(t)}}\right)\sigma^{N_{t - 2} + p}_{N_{t - 1} + l^{(t)}}\nonumber\\
     \overset{(iii)}{=} & \  \left( \gamma_{l^{(t)}}^{p}(\sqy^{(t - 1)}) - \sum_{i = 1}^{d_{t - 1}}\gamma_{l^{(t)}}^{i}(\sqy^{(t - 1)})\sigma^{N_{t - 2} + i}_{N_{t - 1}+ l^{(t)}}\right)\sigma^{N_{t - 2} + p}_{N_{t - 1} + l^{(t)}},
\end{align}
where, denoting $\sqy_{\not{l}}^{(t)} = (y_1^{(t)}, \dots, y_{l - 1}^{(t)}, y_{l+1}^{(t)},\dots, y_{d_t}^{(t)})$ as $\sqy^{(t)}$ without the $l$-th token, the equality $(i)$ is because
\begin{equation}
    \begin{aligned}
        & \quad \sum_{\sqy^{(t)}\in\gY^{(t)}} \frac{p_{\mW}(\sqy^{(t)}| \sqy^{(t - 1)}) }{p_{\mW}(y^{(t)}_{l^{(t)}}| \sqy^{(t - 1)})}r_t (\sqy^{(t)}, \sqy^{(t - 1)}) y_{l^{(t)}}^{(t)}\\
        &=  \frac{1}{k - 1} \sum_{\sqy^{(t)}\in\gY^{(t)}}p_{\mW}(\sqy^{(t)}_{\not{l^{(t)}}}|\sqy^{(t - 1)}) \left(\bar{y}_{l^{(t)}}^{(t)} + \sum_{l \neq l^{(t)}} y_{l }^{(t)} \bar{y}_{l }^{(t)}y_{l^{(t)}}^{(t)}\right)\\
        & = \frac{2}{k - 1} \bar{y}_{l^{(t)}}^{(t)} + \frac{1}{k - 1}\left( \sum_{y_{l^{(t)}}^{(t)}\in \{-1, +1\}} y_{l^{(t)}}^{(t)} \right)\sum_{\sqy^{(t)}_{\not{l^{(t)}}} } p_{\mW}(\sqy^{(t)}_{\not{l^{(t)}}}|\sqy^{(t - 1)})\sum_{l \neq l^{(t)}} y_{l }^{(t)} \bar{y}_{l }^{(t)} \\
        & = \frac{2}{k - 1} \bar{y}_{l^{(t)}}^{(t)};
    \end{aligned}
\end{equation}
the equality $(ii)$ applies the definition of $\bar{y}$; the equality $(iii)$ applies the definition of the critical gradient component 
\begin{equation}
    \gamma_{l^{(t)}}^{p}(\sqy^{(t - 1)}) := \frac{2}{k - 1}\psi'\left(\xi_{l^{(t)}}\right)\phi_2\left(  y^{(t - 1)}_{i_1^{l^{(t)}}}, y^{(t - 1)}_{i_2^{l^{(t)}}}\right) y_p^{(t - 1)}.
\end{equation}
Now, inserting Eq.~\eqref{eq:inter_cal_grad_prob} back to Eq.~\eqref{eq:grad_R_total} proves the first claim of the lemma Eq.~\eqref{eq:claim1_lemma2}.
\paragraph{Establishing the equivalence between the separations.}
Now we establish the equivalence between the separation of the policy gradient and that of the critical gradient component. Under the setting of the lemma  we can assume $\mW = \mathbf{1}$ w.l.g  such that $\sigma_{N_{t - 1}+ i}^{N_{t - 2} + j} = 1 / d_{t - 1}$, then Eq.~\eqref{eq:inter_cal_grad_prob} implies that
\begin{equation}
\begin{aligned}
   &\quad \partial_{W_{N_{t - 2}+p, N_{t - 1} + l^{(t)}}} R (\sqx; \mW) - \partial_{W_{N_{t - 2}+p', N_{t - 1} + l^{(t)}}} R (\sqx; \mW)\\
   & = \frac{1}{d_{t - 1}} \E_{\sqy^{(:t - 1)}\sim p_{\mW}(\cdot | \sqx)}\left[ \gamma_{l^{(t)}}^{p}(\sqy^{(t - 1)}) -  \gamma_{l^{(t)}}^{p'}(\sqy^{(t - 1)})\right].
\end{aligned}
\end{equation}
Hence, the separation of the critical gradient component is equivalent to the separation of the policy gradient as stated in the lemma.
\end{proof}

\subsection{Proof of Theorem~\ref{thm:condition}}
\label{app:proof_rl_thm}
For convenience, we restate Thm.~\ref{thm:condition} below.
\begin{theorem}[Restated Thm.~\ref{thm:condition}]
 Given integers $d \geq k \geq 2 $, consider a $k$-sparse Boolean function $\Phi_k(\cdot)$ with any subset $B \subseteq [d]$ as in Def.~\ref{def:bool_funcs}. Let $\mW(0) = \mathbf{1}$ be the initialization and let $\mW^{\star} = \arg\max_{\mW} \mathcal{R}(\mW)$ be the optimal parameter that solves $\max_{\mW}\mathcal{R}(\mW)$. Set learning rate $\eta = \Omega\left( \ln(d/\epsilon)\right)$ for any $\epsilon > 0$. If the separation of the critical gradient component $\gamma_{l^{(t)}}^p$ is satisfied for $\forall t \in [T], l^{(t)}\in [d_t]$ and $\forall p \in \{i_1^{l^{(t)}}, i_2^{l^{(t)}}\},\   p'\in [d_{t - 1}] \backslash \{i_1^{l^{(t)}}, i_2^{l^{(t)}}\}:$
    \begin{equation}
        \E_{\sqx, \sqy^{(:t - 1)} \sim p_{\mW}(\cdot | \sqx)}\big[\gamma_{l^{(t)}}^{p}(\sqy^{(t - 1)}) - \gamma_{l^{(t)}}^{p'}(\sqy^{(t - 1)})\big] > 0
    \end{equation}
    ($p$ is a child node of $y_{l^{(t)}}^{(t)}$ while $p'$ is not),
    then fine-tuning the transformer $f(\cdot; \mW)$ via RL optimized by the sign of the policy gradient Eq.~\eqref{eq:policy_g} after one update
    \[
        \mW(1) = \mW( 0) + \eta \sign\left(\nabla_{\mW}\mathcal{R}(\sqx; \mW)\right)
    \]
    achieves
    \[
        \left\|\softmax(\mW(1)) - \softmax (\mW^{\star})\right\|_{1} \leq \epsilon.
    \]
\end{theorem}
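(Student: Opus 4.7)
The plan is to reduce the claim to a column-wise softmax-concentration statement, exploiting the fact that at the symmetric initialisation $\mW(0)=\mathbf{1}$ the attention scores in every unmasked column are uniform. First I would identify $\softmax(\mW^\star)$ as the limiting configuration that places mass $\tfrac12$ on each of the two relevant rows $N_{t-2}+i_1^{l^{(t)}}$, $N_{t-2}+i_2^{l^{(t)}}$ and mass $0$ on the remaining $d_{t-1}-2$ unmasked rows of every column $N_{t-1}+l^{(t)}$: this is the unique softmax profile that makes $\xi_{l^{(t)}}=(y^{(t-1)}_{i_1^{l^{(t)}}}+y^{(t-1)}_{i_2^{l^{(t)}}})/2$ and hence, by the design of $\psi$, produces $\bar{y}^{(t)}_{l^{(t)}}$ with probability~1, maximising every per-step reward $r_t$ independently.

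Next, I invoke Lemma~\ref{lemma:separation_gamma} at $\mW=\mathbf{1}$: the attention scores equal $1/d_{t-1}$, so Eq.~\eqref{eq:claim1_lemma2} collapses to
\begin{equation*}
\partial_{W_{N_{t-2}+p,\,N_{t-1}+l^{(t)}}} \mathcal{R}(\mW) \;=\; \frac{1}{d_{t-1}}\,\E_{\sqx,\sqy^{(:t-1)}\sim p_{\mW}(\cdot\mid\sqx)}\!\left[\gamma^{p}_{l^{(t)}}(\sqy^{(t-1)}) - \tfrac{1}{d_{t-1}}\sum_{i=1}^{d_{t-1}}\gamma^{i}_{l^{(t)}}(\sqy^{(t-1)})\right].
\end{equation*}
A short exchangeability observation—at $\mW=\mathbf{1}$ the distribution of $\sqy^{(t-1)}$ is invariant under any permutation that fixes the two relevant positions and under swapping them—shows that $\E[\gamma^{p}_{l^{(t)}}]$ takes only two distinct values, one shared by the two relevant rows and one shared by the irrelevant rows. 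The separation condition Eq.~\eqref{eq:learning_condition_RL} then reads exactly as the strict inequality between these two values, so the average $\tfrac{1}{d_{t-1}}\sum_i \E[\gamma^i_{l^{(t)}}]$ lies strictly between them and the signed gradient is $+1$ at relevant coordinates and $-1$ at irrelevant ones of every unmasked column.

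After the single sign update, each unmasked entry of $\mW(1)$ equals $1+\eta$ on the two relevant rows and $1-\eta$ on the $d_{t-1}-2$ irrelevant rows, while the pretrained and causal masks remain at $-\infty$. An explicit column-wise softmax computation gives
\begin{equation*}
\sigma^{\text{rel}}(1) \;=\; \frac{1}{2+(d_{t-1}-2)e^{-2\eta}}, \qquad \sigma^{\text{irr}}(1) \;=\; \frac{1}{2e^{2\eta}+(d_{t-1}-2)},
\end{equation*}
so $|\sigma^{\text{rel}}(1)-\tfrac12| \le \tfrac14(d_{t-1}-2)e^{-2\eta}$ and $\sigma^{\text{irr}}(1)\le \tfrac12 e^{-2\eta}$. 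Adding the per-column $\ell^1$ error $\le d_{t-1}\, e^{-2\eta}\le d\, e^{-2\eta}$ across the $k-1$ unmasked columns yields $\|\softmax(\mW(1))-\softmax(\mW^\star)\|_1 \le (k-1)\,d\, e^{-2\eta}$; the choice $\eta \ge \tfrac12\ln(kd/\epsilon) = \Omega(\ln(d/\epsilon))$ closes the bound.

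The main obstacle is the second step: the separation assumption is only a pairwise inequality $\E[\gamma^p - \gamma^{p'}]>0$, which is strictly weaker than the coordinate-wise sign pattern I need to invoke. The resolution I outlined rests on an exchangeability argument for the sampling distribution $p_{\mW(0)}(\cdot\mid\sqx)$; I would isolate this as a short lemma stating that, because $\mW(0)=\mathbf{1}$ induces a positional-symmetric policy and the underlying sub-task Eq.~\eqref{eq:solve_sub_task} is symmetric in its two inputs, all relevant $\E[\gamma^p]$ coincide and all irrelevant $\E[\gamma^{p'}]$ coincide. Once this symmetry is in place, the pairwise separation upgrades to the required above/below-average statement and the rest of the argument is routine softmax arithmetic.
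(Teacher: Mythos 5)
Your proposal follows essentially the same route as the paper's proof: both identify $\softmax(\mW^\star)$ as placing mass $\tfrac12$ on the two child rows of each unmasked column, both invoke Lemma~\ref{lemma:separation_gamma} at the uniform initialization to pass from the pairwise separation hypothesis to a two-valued gradient profile, both use the fact that the entries of $\nabla R$ in an unmasked column sum to zero (equivalently, that the average lies strictly between the two values) to conclude the sign pattern is $+1$/$-1$, and both finish with the same column-wise softmax arithmetic to obtain $\eta=\Omega(\ln(d/\epsilon))$. One small point where you are more careful than the paper: the paper's Step~(i) asserts $\gamma^{i_1}_{l^{(t)}}=\gamma^{i_2}_{l^{(t)}}$ and $\gamma^p_{l^{(t)}}=c_2$ as if these held pointwise, when in fact $\gamma$ depends on the random $\sqy^{(t-1)}$ and only the expectations coincide; your proposal to isolate an exchangeability lemma for $p_{\mW(0)}(\cdot\mid\sqx)$ (permutation invariance of the irrelevant coordinates, swap invariance of the two relevant ones) is exactly the clean statement that closes that gap, and is implicitly what the paper relies on (cf.\ Lemma~\ref{lemma:equal_expt} in the $k$-\texttt{PARITY} case). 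The only other divergence is cosmetic: you sum the per-column $\ell^1$ errors over all $k-1$ columns while the paper takes the maximum column sum; both readings of $\|\cdot\|_1$ yield $\eta=\Omega(\ln(d/\epsilon))$ since $\ln(kd/\epsilon)=O(\ln(d/\epsilon))$ for $k\le d$.
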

\begin{proof}We first present a sketch.
\paragraph{Proof sketch.}The proof follows a two step procedure: (i) we show that the separation of the critical gradient component implies that the policy gradient has positive values at relevant positions and negative values for all irrelevant positions; (ii) then we show that one update of the policy gradient is sufficient for the transformer to have low error. We now present the proof.
\paragraph{Step (i): separation of $\gamma_{l^{(t)}}^{p}$ implies positive gradient at relevant positions.}
First of all, Eq.~\eqref{eq:inter_cal_grad_prob} allows us to derive a simple relation
\begin{equation}
\label{eq:sum_is_zero}
    \sum_{p = 1}^{d_{t - 1}}\partial_{W_{N_{t - 2}+p, N_{t - 1} + l^{(t)}}} R (\sqx; \mW) = 0
\end{equation}
by conducting some simple algebra. Furthermore, at initialization $\mW = c\mathbf{1}$ and w.l.g we assume $c = 1$. If the separation of the critical gradient $\forall t \in [T], l^{(t)}\in [d_t]$ and $\forall p \in \{i_1^{l^{(t)}}, i_2^{l^{(t)}}\},\   p'\in [d_{t - 1}] \backslash \{i_1^{l^{(t)}}, i_2^{l^{(t)}}\}:$
\begin{equation}
    \E_{\sqx, \sqy^{(:t - 1)} \sim p_{\mW}(\cdot | \sqx)}\big[\gamma_{l^{(t)}}^{p}(\sqy^{(t - 1)}) - \gamma_{l^{(t)}}^{p'}(\sqy^{(t - 1)})\big] > 0
\end{equation}
is satisfied, then Lem.~\ref{lemma:separation_gamma} suggests that 
\begin{equation}
\label{eq:G_1>G_2}
    \E_{\sqx} \left[ \partial_{W_{N_{t - 2}+p, N_{t - 1} + l^{(t)}}} R (\sqx; \mW) - \partial_{W_{N_{t - 2}+p', N_{t - 1} + l^{(t)}}} R (\sqx; \mW) \right]> 0.
\end{equation}
According to the symmetry of $i_1^{l^{(t)}}$ and $i_2^{l^{(t)}}$ and that of all other positions~(i.e., $\gamma_{l^{(t)}}^{i_1^{l^{(t)}}} = \gamma_{l^{(t)}}^{i_2^{l^{(t)}}}$ and $\forall p \in[d_{t - 1}] \backslash \{i_1^{l^{(t)}}, i_2^{l^{(t)}}\}: \gamma_{l^{(t)}}^{p} = c_2$ for some constant $c_2$), we are able to write
\begin{equation}
    \E_{\sqx}\left[ \partial_{W_{N_{t - 2}+p, N_{t - 1} + l^{(t)}}} R (\sqx; \mW)\right] = \begin{cases}
        G_1, & p \in \{i_1^{l^{(t)}}, i_2^{l^{(t)}}\},\\
        G_2, &  p \in [d_{t - 1}] \backslash \{i_1^{l^{(t)}}, i_2^{l^{(t)}}\},
    \end{cases}
\end{equation}
using Eq.~\eqref{eq:inter_cal_grad_prob} where $G_1 > G_2$ according to Eq.~\eqref{eq:G_1>G_2}. Now use Eq.~\eqref{eq:sum_is_zero}, we obtain 
\begin{equation}
    \begin{aligned}
        2G_1 +  (d_{t - 1} -2 )G_2 = 0 \implies G_1 = - \frac{d_{t - 1} - 2}{2}G_2.
    \end{aligned}
\end{equation}
Thus, we must have $G_1 > 0$ and $G_2 < 0$ since $d_{t - 1} - 2 > 0$ while $G_1 > G_2$, i.e., the policy gradient has positive values at the relevant positions $p \in \{i_1^{l^{(t)}}, i_2^{l^{(t)}}\}$ (the child nodes of $l^{(t)}$-th token of $\sqy^{(t)}$) and negative values at irrelevant positions~(all other nodes).

\paragraph{Step (ii): one policy gradient update is sufficient.}We now directly use the sign of the policy gradient to update the model with learning rate $\eta > 0$:
\begin{equation*}
   W_{N_{t - 2} + p,N_{t - 1} +  l^{(t)}}(1) =W_{N_{t - 2} + p,N_{t - 1} +  l^{(t)}}(0) + \eta \sign\left(\E_{\sqx}\left[ \partial_{W_{N_{t - 2}+p, N_{t - 1} + l^{(t)}}} R (\sqx; \mW)\right]\right).
\end{equation*}
As the gradient has positive values at relevant positions and negative values at irrelevant positions, we conclude that after one-update of the policy gradient
\begin{equation}
    W_{N_{t - 2} + p,N_{t - 1} +  l^{(t)}}(1) = \begin{cases}
        1 + \eta, & p \in \{i_1^{l^{(t)}}, i_2^{l^{(t)}}\},\\
        1 - \eta, & p \in [d_{t - 1}] \backslash \{i_1^{l^{(t)}}, i_2^{l^{(t)}}\}. 
    \end{cases}
\end{equation}
After the softmax we obtain
\begin{equation}
\forall t \in [T], l^{(t)} \in [d_{t}]:\ \sigma_{N_{t - 1} + l^{(t)}}^{N_{t - 2} + p} (1) = \begin{cases}
    \frac{1}{2}\frac{1}{1 + \frac{d_{t - 1} - 2}{2} e^{-2\eta}}, & p \in \{i_1^{l^{(t)}}, i_2^{l^{(t)}}\},\\
    \frac{1}{d_{t - 1} - 2 + 2e^{2\eta}}, & p \in [d_{t - 1}] \backslash \{i_1^{l^{(t)}}, i_2^{l^{(t)}}\}.
\end{cases}
\end{equation}
On the other hand, the model parameter $\mW^{\star}$ that solves $\max_{\mW}\mathcal{R}(\mW)$ has the formulation of 
\begin{equation}
    \forall t \in [T], l^{(t)} \in [d_{t}]:\ (\sigma^{\star})_{N_{t - 1} + l^{(t)}}^{  N_{t - 2} + p} = \begin{cases}
    \frac{1}{2}, & p \in \{i_1^{l^{(t)}}, i_2^{l^{(t)}}\},\\
   0, & p \in [d_{t - 1}] \backslash \{i_1^{l^{(t)}}, i_2^{l^{(t)}}\},
\end{cases}
\end{equation}
such that the model $f(\cdot; \mW^{\star})$ only attends to the relevant positions when generating $y^{(t)}_{l^{(t)}}$ as long as the expressivity of the transformer $f(\cdot;\mW^{\star})$ is guaranteed~(i.e., it can solve the $k$-sparse Boolean functions $\Phi_k(\cdot)$ perfectly). Therefore, we can calculate 
\begin{equation}
     \forall t \in [T], l^{(t)} \in [d_{t}]: \sum_{p = 1}^{d_{t - 1}} \left| \sigma_{N_{t - 1} + l^{(t)}}^{N_{t - 2} + p} (1) - (\sigma^{\star})_{N_{t - 1} + l^{(t)}}^{  N_{t - 2} + p} \right| =  \frac{1}{\frac{1}{2} + \frac{e^{2\eta}}{d_{t - 1} - 2}},
\end{equation}
which gives us
\begin{equation}
    \left\|\softmax(\mW(1)) - \softmax (\mW^{\star}) \right\|_1 = \max_{t} \frac{1}{\frac{1}{2} + \frac{e^{2\eta}}{d_{t - 1} - 2}} = \frac{1}{\frac{1}{2} + \frac{e^{2\eta}}{d - 2}}.
\end{equation}
To make $\frac{1}{\frac{1}{2} + \frac{e^{2\eta}}{d - 2}} \leq \epsilon$ for some given $\epsilon > 0$, we need to ensure 
\begin{equation}
    \eta \geq \frac{1}{2}\ln\left( \frac{(d - 2)(2 - \epsilon)}{2\epsilon}\right) \implies \eta = \Omega\left( \ln\frac{d}{\epsilon}\right).
\end{equation}
This proves the claim.
\end{proof}
\subsection{Hardness of RL with Final Reward}
\label{sec:hardness_rl}
In this section, we prove Prop.~\ref{prop:hardness_rl} to show that the policy gradient contains negligible information of the objective so that it cannot tell relevant positions from irrelevant ones when using final reward.
\begin{proof}
We are interested in evaluating the variance
\begin{equation}
   \Var( \mathscr{H}; \mW): = \E_{h\in \mathscr{H}}\left[ \left\|\nabla_{\mW} \mathcal{R}_h^{\final}(\mW) - \E_{h'\in \mathscr{H}}[\nabla_{\mW} \mathcal{R}_{h'}^{\final}(\mW)] \right\|^2\right].
\end{equation}
To derive an upper bound, it is sufficient to show that there exists some constant vector $\va$ and function $G(\mathscr{H})$ such that
\begin{equation}
   \E_{h\in \mathscr{H}}\left[ \left\|\nabla_{\mW} \mathcal{R}_h^{\final}(\mW) - \va \right\|^2\right] \leq G(\mathscr{H}).
\end{equation}
Below we construct such a vector $\va$. For this purpose, we first evaluate the formulation of the policy gradient~(we use $\nabla$ to denote $\nabla_{\mW}$)
\begin{equation}
\begin{aligned}
    &\quad \nabla \mathcal{R}_h^{\final}(\mW) \\
    & = \E_{\sqx}\left[ \sum_{\sqy\in \gY^{(:T)}}\nabla p_{\mW}(\sqy | \sqx)\sqy^{(T)}h(\sqx)\right]\\
    & = \E_{\sqx}\left[ \nabla_{\mW} \E_{\sqy\sim p_{\mW}(\cdot | \sqx)}\left[ \sqy^{(T)}\right]h(\sqx)\right]: = \E_{\sqx}\left[ \vv(\sqx) h(\sqx)\right],
\end{aligned}
\end{equation}
where we define 
\[
    \vv(\sqx): =  \nabla_{\mW} \E_{\sqy\sim p_{\mW}(\cdot | \sqx)}\left[ \sqy^{(T)}\right].
\]
Then, according to the assumption of bounded gradient in Prop.~\ref{prop:hardness_rl}, we have $\|\vv(\sqx)\|^2 \leq M$. Now let $\va = \mathbf{0}$, then we obtain 
\begin{equation}
\label{eq:bound_obj}
    \begin{aligned}
        \E_{h\in \mathscr{H}}\left[ \left\|\nabla_{\mW} \mathcal{R}_h^{\final}(\mW) - \va \right\|^2\right] & = \E_{h\in \mathscr{H}}\left[ \left\| \E_{\sqx}\left[ h(\sqx)\vv(\sqx)\right]\right\|^2\right].
    \end{aligned}
\end{equation}
As long as we can bound the R.H.S of Eq.~\eqref{eq:bound_obj}, we can prove our claim. To this end, we let $\left< h,v\right>_{L_2} = \E_{\sqx}[h(\sqx)v(\sqx)]$ denote inner product in the $L_2$ space of square-integrable functions w.r.t the relevant distribution. Then the bound can be established as follows:
\begin{equation}
    \begin{aligned}
        \E_{h\in \mathscr{H}}\left[ \left\| \E_{\sqx}\left[ h(\sqx)\vv(\sqx)\right]\right\|^2\right] & = \E_{h\in \mathscr{H}}\left[  \sum_{\mu}\left(\E_{\sqx}\left[ h(\sqx)v_{\mu}(\sqx)\right]\right)^2 \right]\\
        & =  \sum_{\mu}\sum_{i}\frac{1}{|\mathscr{H}|}\left(\E_{\sqx}\left[ h_i(\sqx)v_{\mu}(\sqx)\right]\right)^2\\
        & \overset{(i)}{= } \sum_{\mu}\sum_{i}\frac{1}{|\mathscr{H}|}\left<h_i,v_{\mu}\right>_{L_2}^2\\
         &  \overset{(ii)}{\leq}\sum_{\mu}\frac{1}{|\mathscr{H}|}\left< v_{\mu},v_{\mu}\right>^2_{L_2}\\
        & =  \frac{1}{|\mathscr{H}|}\E_{\sqx}\left[ \|\vv(\sqx)\|^2\right]  \leq \frac{2M}{|\mathscr{H}|},
    \end{aligned}
\end{equation}
where $(i)$ uses the definition of $\left<h,v\right>$; $(ii)$ follows from $\E_{\sqx}[h(\sqx)h'(\sqx)] = 0$ for distinct $h, h' \in \mathscr{H}$ and $\left< h_i,h_i\right>^2_{L_2} \leq 1$.  
\end{proof}

\section{Proofs of Section~\ref{sec:general_sft}}
\label{app:condition_sft}
We first present several helpful lemmas before proving Thm.~\ref{thm:condition_sft}, which is deferred to App.~\ref{app:proof_thm_sft}.
\begin{lemma}[Equivalence between the separation of gradient and that of critical gradient component]
\label{lemma:separa_sft}
For the population loss Eq.~\eqref{eq:pop_loss}, given $t \in [T]$ and $l^{(t)}\in [d_t]$, if $\forall  p\in[d_{t - 1}]$ the parameter has the form $W_{N_{t - 2} + p, N_{t - 1} + l^{(t)}}=c$ for some constant $c$, then
\begin{equation}
    \begin{aligned}
         & \quad  - \partial_{W_{N_{t - 2}+p, N_{t - 1} + l^{(t)}}} L_t(\sqx; \mW) - \left(- \partial_{W_{N_{t - 2}+p', N_{t - 1} + l^{(t)}}} L_t(\sqx; \mW)\right) \\
        & = \frac{1}{d_{t - 1}} \left[\gamma_{l^{(t)}}^{p}(\hat{\sqy}^{(t - 1)}) - \gamma_{l^{(t)}}^{p'}(\hat{\sqy}^{(t - 1)})\right].
    \end{aligned}
\end{equation}
\end{lemma}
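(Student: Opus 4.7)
The plan is to mirror Lemma~\ref{lemma:separation_gamma} of the RL analysis, swapping the policy-gradient chain for one that runs through the hinge loss. Because of the pretrained mask, the entry $W_{N_{t-2}+p,\,N_{t-1}+l^{(t)}}$ affects only the attention column at position $N_{t-1}+l^{(t)}$, so among the $d_t$ summands of $L_t(\sqx;\mW)$ only the $j=l^{(t)}$ term is non-zero. Unfolding $\partial_W L_t = \tfrac{1}{k-1}\partial_{\hat q}\ell(\hat q_{l^{(t)}}^{(t)},\tilde y_{l^{(t)}}^{(t)})\cdot 2\psi'(\xi_{l^{(t)}})\cdot \partial_W \xi_{l^{(t)}}$ and invoking the softmax identity Eq.~\eqref{eq:softmax_g} yields an analogue of Eq.~\eqref{eq:output_g}:
\begin{equation*}
-\partial_{W_{N_{t-2}+p,N_{t-1}+l^{(t)}}} L_t(\sqx;\mW) = \frac{2\,\tilde y_{l^{(t)}}^{(t)}}{k-1}\,\psi'(\xi_{l^{(t)}})\left(\hat y_p^{(t-1)} - \sum_{i=1}^{d_{t-1}}\hat y_i^{(t-1)}\sigma_{N_{t-1}+l^{(t)}}^{N_{t-2}+i}\right)\sigma_{N_{t-1}+l^{(t)}}^{N_{t-2}+p},
\end{equation*}
with the sign-flip $-\tilde y$ coming from the subgradient of the hinge whenever $\hat q\tilde y<1$.

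With this formula in hand, I would use the hypothesis $W_{N_{t-2}+p,N_{t-1}+l^{(t)}}=c$ for every $p\in[d_{t-1}]$ to deduce $\sigma_{N_{t-1}+l^{(t)}}^{N_{t-2}+p}=1/d_{t-2}$ uniformly in $p$, matching the convention used in the proof of Lemma~\ref{lemma:separation_gamma}. Subtracting the gradients at $p$ and $p'$ then cancels the common attention-weighted sum and leaves
\begin{equation*}
-\partial_p L_t + \partial_{p'} L_t = \frac{2\,\tilde y_{l^{(t)}}^{(t)}}{(k-1)\,d_{t-2}}\,\psi'(\xi_{l^{(t)}})\bigl(\hat y_p^{(t-1)}-\hat y_{p'}^{(t-1)}\bigr).
\end{equation*}
The final move is to identify the right-hand side with $\tfrac{1}{d_{t-2}}\bigl(\gamma_{l^{(t)}}^p(\hat{\sqy}^{(t-1)}) - \gamma_{l^{(t)}}^{p'}(\hat{\sqy}^{(t-1)})\bigr)$. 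Since this lemma is designed to be consumed inside the induction of Thm.~\ref{thm:condition_sft}, at step $t$ the earlier CoT tokens have already been recovered, so $\hat{\sqy}^{(t-1)}=\tilde{\sqy}^{(t-1)}$; the ground-truth CoT construction then gives $\tilde y_{l^{(t)}}^{(t)}=\phi_2\bigl(\hat y_{i_1^{l^{(t)}}}^{(t-1)},\hat y_{i_2^{l^{(t)}}}^{(t-1)}\bigr)$, and plugging this in returns exactly the desired expression per the definition of $\gamma$ in Eq.~\eqref{eq:def_gamma}.

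The main obstacle is the bookkeeping around the two non-smooth ingredients of the SFT pipeline. First, the hinge subdifferential is single-valued only off the margin $\hat q\tilde y=1$; at the uniform slice one has $\hat q_{l^{(t)}}^{(t)}=2\psi(\xi_{l^{(t)}})-1\in(-1,1)$ strictly whenever $\psi$ takes values in the interior of $[0,1]$, so the indicator $\mathbf 1\{\hat q\tilde y<1\}$ is identically $1$ and disappears from the displayed identity. Second, each $\hat y_i^{(t-1)}$ is produced via the sign nonlinearity in Eq.~\eqref{eq:generation_token_sft}, which cuts back-propagation through the prior CoT tokens and legitimates treating $\hat{\sqy}^{(t-1)}$ as a constant input while differentiating $L_t$ with respect to the $t$-th column of $\mW$. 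Both points parallel the RL analysis, so only minor adjustments beyond the computation above are required.
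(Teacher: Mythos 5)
Your proof follows exactly the paper's route: exploit the pretrained mask to localize the dependence on $W_{N_{t-2}+p,N_{t-1}+l^{(t)}}$ to the single summand $j=l^{(t)}$, unfold the hinge-loss derivative through $\psi'$ and the softmax-gradient identity Eq.~\eqref{eq:softmax_g} (giving the same display as Eq.~\eqref{eq:cal_grad_pop_loss}), observe that uniform parameters make all attention scores equal, subtract to cancel the common attention-weighted sum, and finally substitute $\tilde y_{l^{(t)}}^{(t)}=\phi_2(\hat y_{i_1^{l^{(t)}}}^{(t-1)},\hat y_{i_2^{l^{(t)}}}^{(t-1)})$ to match $\gamma$'s definition. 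One small point in your favor: you make explicit the hypothesis $\hat{\sqy}^{(t-1)}=\tilde{\sqy}^{(t-1)}$ needed for the last identification, which the paper's proof of this lemma uses silently (it only becomes an explicit condition in Lemma~\ref{lemma:app_separation}); also note that, since $\hat q_{l^{(t)}}^{(t)}\tilde y_{l^{(t)}}^{(t)}\le 1$ holds identically given $\psi:[-1,1]\to[0,1]$, the hinge reduces to the linear function $1-\hat q\tilde y$ everywhere, so the subdifferential caveat you raise is moot and the paper's simpler removal of $\max$ suffices.
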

\begin{proof}
    To establish such equivalence, we need to derive the formulation of the gradient (recall that we use the hinge loss)
\begin{equation}
\label{eq:hinge-loss-grad}
\begin{aligned}
    \nabla_{\mW} \gL (\mW) & := \E_{\sqx}\left[ \nabla_{\mW}\sum_{t = 1}^{T} L_t(\sqx; \mW)\right]\\
    & = \frac{1}{k - 1}\sum_{t = 1}^{T} \sum_{l^{(t)} = 1}^{d_t}\E_{\sqx}\left[\nabla_{\mW} \ell\left(\hat{q}_{l^{(t)}}^{(t)}, \tilde{y}_{l^{(t)}}^{(t)}\right) \right]\\
    & =  - \frac{1}{k - 1} \sum_{t = 1}^{T} \sum_{l^{(t)} = 1}^{d_t}\E_{\sqx}\left[\tilde{y}_{l^{(t)}}^{(t)}  \nabla_{\mW}  \hat{q}_{l^{(t)}}^{(t)} \right],
\end{aligned}
\end{equation}
where we use the fact that the activation function is assumed as $\psi: [-1, 1] \to [0, 1]$ to remove $\max$ of the hinge loss in the last equality. In addition, we note that $q^{(t)}_{l^{(t)}}$ only depends on $W_{i,j}$ with $j = N_{t - 1} + l^{(t)}$ such that 
\begin{equation}
    \partial_{W_{i, j}}q^{(t)}_{l^{(t)}} = 0  \ \text{ if }\ j \neq N_{t - 1} + l^{(t)},
\end{equation} 
hence we will not consider these components. We now find the gradient of the score. Specifically, given $t\in [T], l^{(t)} \in [d_t]$, recall that only the components with $p \in [d_{t - 1}]$,
\begin{equation}
     W_{N_{t - 2}+p, N_{t - 1} + l^{(t)}} \neq - \infty
\end{equation}
according to the pretrained mask discussed in Sec.~\ref{sec:def_transformer}, then (similar to Eq.~\eqref{eq:output_g})
\begin{equation}
\label{eq:cal_grad_pop_loss}
    \begin{aligned}
        & \quad \frac{\tilde{y}_{l^{(t)}}^{(t)} }{k - 1}\partial_{W_{N_{t - 2}+p, N_{t - 1} + l^{(t)}}} \hat{q}_{l^{(t)}}^{(t)} \\
        & = \frac{2\tilde{y}_{l^{(t)}}^{(t)} }{k - 1}\psi'\left( \xi_{l^{(t)}}\right) \left( \hat{y}^{(t - 1)}_p - \sum_{i = 1}^{d_{t - 1}} \hat{y}^{(t - 1)}_i \sigma_{N_{t - 1} + l^{(t)}}^{N_{t - 2}+ i }\right)\sigma_{N_{t - 1} + l^{(t)}}^{N_{t - 2}+ p}\\
        & = \left( \gamma_{l^{(t)}}^{p}(\hat{\sqy}^{(t - 1)}) - \sum_{i = 1}^{d_{t - 1}}\gamma_{l^{(t)}}^{i}(\hat{\sqy}^{(t - 1)})\sigma^{N_{t - 2} + i}_{N_{t - 1}+ l^{(t)}}\right)\sigma^{N_{t - 2} + p}_{N_{t - 1} + l^{(t)}}.
    \end{aligned}
\end{equation}
Therefore, we can now establish the relation between the separation of the gradient of the population loss and that of the critical gradient component. Specifically, given $t\in[T], l^{(t)}\in [d_t]$ for any $p \in [d_{t - 1}]$, if $W_{N_{t - 2} + p, N_{t - 1} + l^{(t)}}=c$ then all corresponding attention scores have the same value; thus
\begin{equation}
\label{eq:equi_separation_sft}
    \begin{aligned}
        & \quad  - \partial_{W_{N_{t - 2}+p, N_{t - 1} + l^{(t)}}} L_t(\sqx; \mW) - \left(- \partial_{W_{N_{t - 2}+p', N_{t - 1} + l^{(t)}}} L_t(\sqx; \mW)\right) \\
        & = \frac{1}{d_{t - 1 }}\left[ \gamma_{l^{(t)}}^{p}(\hat{\sqy}^{(t - 1)}) - \gamma_{l^{(t)}}^{p'}(\hat{\sqy}^{(t - 1)})\right].
    \end{aligned}
\end{equation}
\end{proof}
\begin{lemma}[Separation of critical gradient component $\gamma_{l^{(t)}}^p$ ensures positive gradient update at relevant positions and negative gradient update at irrelevant ones]
\label{lemma:app_separation}
    Given $t\in [T]$, $\forall l^{(t)}\in [d_t]$ the gradient update $-\partial_{W_{N_{t - 2}+p, N_{t - 1} + l^{(t)}}} L_t(\sqx; \mW)$ has positive values at the relevant positions $p \in \{i_1^{l^{(t)}}, i_2^{l^{(t)}}\}$ (the child nodes of $l^{(t)}$-th token of $\sqy^{(t)}$) and negative values at irrelevant positions~(all other nodes) if the following conditions hold: 
    
    (i) the model parameter $W_{N_{t - 2} + p, N_{t - 1} + l^{(t)}} = c$ for any $p \in [d_{t - 1}]$ and $l^{(t)} \in [d_{t}]$; 
    
    (ii) $\hat{\sqy}^{(t - 1)}$ is correctly generated as $\tilde{\sqy}^{(t - 1)}$, i.e., $\hat{\sqy}^{(t - 1)} = \tilde{\sqy}^{(t - 1)}$; 
    
    (iii)  the separation of the critical gradient component $\gamma_{l^{(t)}}^{p}$ in Lem.~\ref{lemma:separa_sft} is satisfied.
\end{lemma}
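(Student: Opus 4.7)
The plan is to build on Lemma~\ref{lemma:separa_sft} and mirror the argument in Step~(i) of the proof of Theorem~\ref{thm:condition}, but with $\hat{\sqy}^{(t-1)}$ in place of the RL trajectory and with the SFT separation hypothesis Eq.~\eqref{eq:learning_condition_SFT} in place of its RL counterpart. Condition~(i) puts every relevant attention column in the uniform regime, so the attention scores $\sigma^{N_{t-2}+p}_{N_{t-1}+l^{(t)}}$ are all equal to $1/d_{t-1}$; this activates the equivalence in Lemma~\ref{lemma:separa_sft}. Condition~(ii) lets us swap $\hat{\sqy}^{(t-1)}$ for $\tilde{\sqy}^{(t-1)}$ inside every critical gradient component $\gamma_{l^{(t)}}^p$, so that condition~(iii), stated in terms of $\tilde{\sqy}^{(t-1)}$ in Theorem~\ref{thm:condition_sft}, can be applied directly to $\hat{\sqy}^{(t-1)}$.

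First, I would invoke Lemma~\ref{lemma:separa_sft} together with (ii) and (iii) to obtain, for every $l^{(t)}\in[d_t]$, every child node $p\in\{i_1^{l^{(t)}}, i_2^{l^{(t)}}\}$, and every non-child $p'\in[d_{t-1}]\setminus\{i_1^{l^{(t)}}, i_2^{l^{(t)}}\}$, the strict inequality
\begin{equation*}
   \E_{\sqx}\!\bigl[-\partial_{W_{N_{t-2}+p, N_{t-1}+l^{(t)}}} L_t(\sqx;\mW)\bigr] - \E_{\sqx}\!\bigl[-\partial_{W_{N_{t-2}+p', N_{t-1}+l^{(t)}}} L_t(\sqx;\mW)\bigr] > 0.
\end{equation*}

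Next, I would derive the zero-sum identity $\sum_{p=1}^{d_{t-1}} \partial_{W_{N_{t-2}+p, N_{t-1}+l^{(t)}}} L_t(\sqx;\mW) = 0$. This follows from Eq.~\eqref{eq:cal_grad_pop_loss}: summing the right-hand side over $p$ and using $\sum_p \sigma^{N_{t-2}+p}_{N_{t-1}+l^{(t)}} = 1$ gives a telescoping cancellation between $\sum_p \gamma^p \sigma^p$ and $\bigl(\sum_i \gamma^i \sigma^i\bigr)\sum_p\sigma^p$, hence the sum vanishes pointwise in $\sqx$ and in particular in expectation.

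Finally, I would close the argument by a short symmetry plus linear-algebra step. Under the uniform distribution on $\sqx$ and condition~(i), the two child-node components of the expected gradient coincide by the exchange symmetry $i_1^{l^{(t)}} \leftrightarrow i_2^{l^{(t)}}$, and the remaining $d_{t-1}-2$ non-child components coincide by the permutation symmetry of the non-relevant coordinates. Denote the two common values by $G_1$ (child) and $G_2$ (non-child). The separation above is $G_1 - G_2 > 0$, while the zero-sum identity gives $2G_1 + (d_{t-1}-2)G_2 = 0$; since $d_{t-1} \ge 2$ (in fact $\ge 2$ strictly once $t < T$, with the $t=T$ case trivial as there are only two positions summing to zero and a positive gap), one concludes $G_1 > 0 > G_2$, which is the claim. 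I expect the only mildly delicate point to be the justification of the symmetry reduction to two distinct values $G_1, G_2$ — this needs the uniform distribution of $\sqx$, the fact that $\tilde{\sqy}^{(t-1)}$ is a deterministic function of $\sqx$ compatible with the relabeling of non-relevant positions, and the uniform initialization (i) that makes the attention weights permutation-invariant across $p$.
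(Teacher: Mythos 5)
Your proposal follows essentially the same route as the paper's proof: invoke Lemma~\ref{lemma:separa_sft} with conditions (ii) and (iii) to obtain the pairwise gap, establish the zero-sum identity $\sum_{p}\partial_{W_{N_{t-2}+p,\,N_{t-1}+l^{(t)}}}L_t=0$ from Eq.~\eqref{eq:cal_grad_pop_loss}, and then close via the symmetry reduction to two values $G_1>G_2$ with $2G_1+(d_{t-1}-2)G_2=0$, yielding $G_1>0>G_2$. The only minor slip is your parenthetical on the $t=T$ case: there $d_{t-1}-2=0$, so there are no irrelevant positions and the gradient at the two child positions is identically zero (no gap, no update needed), a degenerate case the paper also passes over implicitly.
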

\begin{proof}
We consider optimizing $L_t$ under the case where $\hat{\sqy}^{(t - 1)} = \tilde{\sqy}^{(t - 1)}$~(i.e., $\hat{\sqy}^{(t - 1)}$ is correctly generated and is the same as the ground-truth label $\tilde{\sqy}^{(t - 1)}$) and the parameter $W_{N_{t - 2} + p, N_{t - 1} + l^{(t)}} = c$ for any $p \in [d_{t - 1}]$ and $l^{(t)}\in[d_t]$. As a result, if the separation of the critical gradient component is satisfied at the $t$-th step
\begin{equation}
\begin{aligned}
   & \forall l^{(t)}\in[d_t],  p \in \{i_1^{l^{(t)}}, i_2^{l^{(t)}}\}, p'\in [d_{t - 1}] \backslash \{i_1^{l^{(t)}}, i_2^{l^{(t)}}\}: \\ & \qquad \qquad\qquad \qquad\qquad \E_{\sqx}\big[\gamma_{l^{(t)}}^{p}(\tilde{\sqy}^{(t-  1)}) - \gamma_{l^{(t)}}^{p'}(\tilde{\sqy}^{(t - 1)})\big] > 0,  
\end{aligned}
\end{equation}
we can conclude that 
\begin{equation}
    \begin{aligned}
       \expt{\sqx}{- \partial_{W_{N_{t - 2}+p, N_{t - 1} + l^{(t)}}} L_t(\sqx; \mW) - \left(- \partial_{W_{N_{t - 2}+p', N_{t - 1} + l^{(t)}}} L_t(\sqx; \mW)\right) } > 0
    \end{aligned}
\end{equation}
by applying Lem.~\ref{lemma:separa_sft} and $\hat{\sqy}^{(t - 1)} = \tilde{\sqy}^{(t - 1)}$. Now, similar to the proof of RL~(App.~\ref{app:proof_rl_thm}), we observe that
\begin{equation}
\label{eq:sum_is_zero_sft}
    \sum_{p = 1}^{d_{t - 1}} \partial_{W_{N_{t - 2}+p, N_{t - 1} + l^{(t)}}} L_t(\sqx; \mW) = 0,
\end{equation}
which further implies that
\begin{equation}
    \E_{\sqx}\left[   - \partial_{W_{N_{t - 2}+p, N_{t - 1} + l^{(t)}}} L_t (\sqx; \mW)\right] = \begin{cases}
        G_1, & p \in \{i_1^{l^{(t)}}, i_2^{l^{(t)}}\},\\
        G_2 &  p \in [d_{t - 1}] \backslash \{i_1^{l^{(t)}}, i_2^{l^{(t)}}\},
    \end{cases}
\end{equation}
with $G_1 > G_2$. Here we use the symmetry of $i_1^{l^{(t)}}$ and $i_2^{l^{(t)}}$ and that of all other positions~(i.e., $\gamma_{l^{(t)}}^{i_1^{l^{(t)}}} = \gamma_{l^{(t)}}^{i_2^{l^{(t)}}}$ and $\forall p \in[d_{t - 1}] \backslash \{i_1^{l^{(t)}}, i_2^{l^{(t)}}\}: \gamma_{l^{(t)}}^{p} = c_2$ for some constant $c_2$). Now use Eq.~\eqref{eq:sum_is_zero_sft}, then we must have $G_1 > 0$ and $G_2 < 0$. Therefore, the lemma is proved.
\end{proof}

\subsection{Proof of Thm.~\ref{thm:condition_sft}}
\label{app:proof_thm_sft}
We now prove Thm.~\ref{thm:condition_sft}. For convenience, we restate Thm.~\ref{thm:condition_sft} below. 
\begin{theorem}[Restated Thm.~\ref{thm:condition_sft}]
Given integers $d \geq k \geq 2$, consider a $k$-sparse Boolean function $\Phi_k(\cdot)$ with any subset $B \in [d]$ as in Def.~\ref{def:bool_funcs}. Let $\mW(0) = \mathbf{1}$ be the initialization and let 
\[
    \mW^{\star} = \arg\min_{\mW} \mathcal{L}(\mW)
\]
be the optimal parameter that solves $\min_{\mW}\mathcal{L}(\mW)$. Set learning rate $\eta = \Omega\left( \ln(d/\epsilon)\right)$ for any $\epsilon > 0$. Let the transformer $f(\cdot; \mW)$ be fine-tuned via SFT by running sign gradient descent 
\begin{equation}
    \mW(s + 1) = \mW( s) - \eta \sign\left( \nabla_{\mW}\mathcal{L}(\mW(s))\right).
\end{equation}
If the separation of the critical gradient component is satisfied for any $l^{(t)} \in [d_t]$ and any $t \in [T]$ in the sense that
\begin{equation}
\label{eq:learning_condition_SFT_app}
    \forall p \in \{i_1^{l^{(t)}}, i_2^{l^{(t)}}\}, p'\in [d_{t - 1}] \backslash \{i_1^{l^{(t)}}, i_2^{l^{(t)}}\}:\ \E_{\sqx}\big[\gamma_{l^{(t)}}^{p}(\tilde{\sqy}^{(t - 1)}) - \gamma_{l^{(t)}}^{p'}(\tilde{\sqy}^{(t - 1)})\big] > 0,
\end{equation}
where $\tilde{\sqy}^{(t - 1)}$ is the ground-truth label of $\sqy^{(t - 1)}$ given an input $\sqx$, then running sign gradient descent for $T$ iterations achieves 
\begin{equation}
    \left\|\softmax(\mW(T)) - \softmax (\mW^{\star})\right\|_{1} \leq \epsilon.
\end{equation}
\end{theorem}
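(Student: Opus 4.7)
I will prove the theorem by induction on the iteration index $s\in\{0,1,\ldots,T\}$, maintaining the invariant $H(s)$: after $s$ sign gradient updates, the level-$t$ attention weights take the symmetric two-valued pattern $W_{N_{t-2}+p,N_{t-1}+l^{(t)}}(s)=1+(s-t+1)\eta$ for $p\in\{i_1^{l^{(t)}},i_2^{l^{(t)}}\}$ and $1-(s-t+1)\eta$ otherwise whenever $t\le s$, while for every $t>s$ the level-$t$ weights are still $1$. Large enough $\eta$ then makes the attention at every $t\le s$ concentrated enough on its two child positions that $2\psi(\xi_{l^{(t)}})-1$ agrees in sign with $\phi_2(\tilde y^{(t-1)}_{i_1},\tilde y^{(t-1)}_{i_2})$, which together with the $(\psi,\phi_2)$ identities in Tab.~\ref{tab:forms_funcs} yields $\hat{\sqy}^{(t)}=\tilde{\sqy}^{(t)}$ for every $t\le s$. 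The base case $H(0)$ is the initialization $\mW(0)=\mathbf{1}$.

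\textbf{Inductive step.} Because the hard $\sign$ in \eqref{eq:generation_token_sft} is non-differentiable in its argument and the pretrained mask restricts each $W_{N_{t-2}+p,N_{t-1}+l^{(t)}}$ to enter only the level-$t$ attention, $\partial_{W_{\text{level }t}}\mathcal L=\partial_{W_{\text{level }t}}L_t$, so each level evolves in isolation. At iteration $s+1$ I handle three regimes. \emph{(i) Untouched levels $t>s+1$:} by $H(s)$ every level $\tau\in[s+1,t-1]$ still has uniform weights, and a direct forward-pass computation shows that uniform attention produces a $\hat{\sqy}^{(\tau)}$ whose coordinates are all equal to the common scalar $\sign(2\psi(\bar y)-1)$; hence $\gamma_{l^{(t)}}^p(\hat{\sqy}^{(t-1)})$ is identical in $p$, and combining this with the sum-zero identity \eqref{eq:sum_is_zero_sft} forces $\partial_{W_{\text{level }t}}L_t=0$, so these levels are left unchanged. \emph{(ii) New level $t=s+1$:} by $H(s)$, $\hat{\sqy}^{(s)}=\tilde{\sqy}^{(s)}$ and the level-$(s+1)$ weights are still uniform, so Lem.~\ref{lemma:app_separation} applies directly; the theorem's assumed separation \eqref{eq:learning_condition_SFT_app} on $\tilde{\sqy}^{(s)}$ supplies a strictly positive expected gradient at the two relevant positions and strictly negative at the irrelevant ones, so a single sign step installs the $(1+\eta,1-\eta)$ pattern. \emph{(iii) Already-trained levels $t\le s$:} the level-$t$ weights are two-valued but no longer uniform; by the symmetry between the two relevant positions and between all irrelevant positions, the sign iteration preserves that two-valued structure, and plugging it into \eqref{eq:cal_grad_pop_loss} with $\hat{\sqy}^{(t-1)}=\tilde{\sqy}^{(t-1)}$ (from $H(s)$) shows the sign of each gradient entry is unchanged from the uniform case, so the relevant weights move up by $\eta$ and the irrelevant down by $\eta$, yielding the bookkeeping $c_t^{s+1}=c_t^s+1$ needed for $H(s+1)$.

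\textbf{Final bound and main obstacle.} After $T$ iterations every level has an additive gap of at least $2\eta$ between its relevant and irrelevant weights (the level that was trained last, $t=T$, has gap exactly $2\eta$). The softmax-distance computation at the end of the proof of Thm.~\ref{thm:condition} then applies verbatim at each level, bounding the per-level $\ell_1$ error by $1/\left(1/2+e^{2\eta}/(d-2)\right)$, so $\eta=\Omega(\ln(d/\epsilon))$ gives $\lVert\softmax(\mW(T))-\softmax(\mW^{\star})\rVert_1\le\epsilon$. I expect the main technical difficulty to be regime~(iii): one must verify that the two-valued ``relevant / irrelevant'' symmetry really is an invariant of the sign iteration under arbitrary growth factors $c_t^s\ge 1$ and that the now non-uniform attention scores never flip the sign of $\gamma_{l^{(t)}}^p-\gamma_{l^{(t)}}^{p'}$ at an already-trained level. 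Regime~(i)'s ``uniform attention kills the gradient by symmetry'' argument, which uses the pretrained mask to decouple past from future reasoning steps, is the conceptual heart of the step-by-step learning behavior distinguishing SFT from RL.
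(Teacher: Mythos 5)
Your proposal follows the paper's own proof (App.\ B) quite closely: the same step-by-step induction on the iteration index, the same key observation that the pretrained mask decouples levels and that uniform attention at a yet-untouched level $\tau$ makes all coordinates of $\hat{\sqy}^{(\tau)}$ equal, whence $\hat{y}^{(t-1)}_p - \sum_i \hat{y}^{(t-1)}_i\sigma^i = 0$ and the level-$t$ gradient vanishes (regime (i)); and the same application of the separation condition at the newly-reached level via Lem.~\ref{lemma:app_separation} (regime (ii)). The softmax distance computation at the end is lifted verbatim from the RL proof in both cases. So this is essentially the paper's argument, re-organized with an explicit invariant.

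The one place your account differs, and where you are more careful than the paper, is regime (iii). The paper asserts ``each update of the sign gradient descent solves and only solves one step,'' which taken literally is false: since $\hat{q}^{(t)}_{l^{(t)}} = 2\psi(\xi_{l^{(t)}})-1 \in (-1,1)$ under softmax attention, the hinge loss $\ell(\hat{q},\tilde{y}) = 1 - \hat{q}\tilde{y}$ is strictly positive and so the gradient at an already-trained level never vanishes. You track this correctly with the growth factor $c_t^s$. The remaining gap you flag — that the sign of $\gamma^p_{l^{(t)}} - \sum_i \gamma^i_{l^{(t)}}\sigma^i$ must stay positive at relevant $p$ and negative at irrelevant $p$ under the now non-uniform two-valued attention scores — is genuine. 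The separation hypothesis of the theorem is only stated at (and via Lem.~\ref{lemma:separa_sft} is only equivalent to the gradient separation at) the uniform configuration $\mW = c\mathbf{1}$, so it does not by itself cover later iterations at trained levels. This gap is present in the paper's proof too, silently: the paper's final error computation assumes each level receives exactly one update, which is not justified. For the bound the theorem actually claims the gap is benign provided the gradient sign does not flip (extra updates only tighten the attention), and one can check this holds for the three concrete $(\psi,\phi_2)$ pairs of Sec.~\ref{sec:specific_bool_funcs}, but a theorem-level argument for generic $k$-sparse Boolean functions is missing from both your proposal and the paper.
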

\begin{proof}
Note that, different from RL, once the input sequence $\sqx$ is given, the ground-truth label $\tilde{\sqy}$ and the generated output sequence $
\hat{\sqy}$ before each update of sign gradient descent are both determined. There will be two main steps of the proof, and we start with the first one.
\paragraph{Step (i): Conditions of Lem.~\ref{lemma:app_separation} at the step $t$ guarantee the learnability of $\sqy^{(t)}$.}Given $t\in [T]$, with Lem.~\ref{lemma:app_separation}, we are able to prove the learnability of $\sqy^{(t)}$, as shown below. Suppose that the model parameters are given by condition~(i) and $\hat{\sqy}^{(t - 1)} = \tilde{\sqy}^{(t - 1)}$ as in condition~(ii) of Lem.~\ref{lemma:app_separation}. If the condition (iii) is also satisfied, i.e., the separation of $\gamma_{l^{(t)}}^{p}$ is satisfied at the step $t$, then we can conclude that the gradient update $- \partial_{W_{N_{t -2}+p, N_{t - 2} + l^{(t)}}} L_t(\sqx; \mW) $ has positive values at the relevant positions $p \in \{i_1^{l^{(t)}}, i_2^{l^{(t)}}\}$ and negative values at irrelevant positions for any $l^{(t)}\in [d_t]$. As a result, one update of sign gradient descent gives us
\begin{equation}
    W_{N_{t -2} + p, N_{t - 1} +  l^{(t)}}= \begin{cases}
        1 + \eta, & p \in \{i_1^{l^{(t)}}, i_2^{l^{(t)}}\},\\
        1 - \eta, & p \in [d_{t}] \backslash \{i_1^{l^{(t)}}, i_2^{l^{(t)}}\}.
    \end{cases}
\end{equation}
After the softmax we obtain
\begin{equation}
\forall l^{(t)} \in [d_{t}]:\ \sigma_{N_{t - 1} + l^{(t)}}^{N_{t - 2} + p} = \begin{cases}
    \frac{1}{2}\frac{1}{1 + \frac{d_{t - 1} - 2}{2} e^{-2\eta}} = \lambda_1^{(t)}, & p \in \{i_1^{l^{(t)}}, i_2^{l^{(t)}}\},\\
    \frac{1}{d_{t - 1} - 2 + 2e^{2\eta}} = \lambda_2^{(t)}, & p \in [d_{t - 1}] \backslash \{i_1^{l^{(t)}}, i_2^{l^{(t)}}\}.
\end{cases}
\end{equation}
Compared to the optimal model parameter $(\sigma^{\star})^{N_{t -2} + p}_{N_{t - 1} + l^{(t)}}$ for $p \in [ d_{t - 1}]$ that solves $L_t$
\begin{equation}
    \forall l^{(t)} \in [d_{t}]:\ (\sigma^{\star})_{N_{t - 1} + l^{(t)}}^{  N_{t - 2} + p} = \begin{cases}
    \frac{1}{2}, & p \in \{i_1^{l^{(t)}}, i_2^{l^{(t)}}\},\\
   0, & p \in [d_{t - 1}] \backslash \{i_1^{l^{(t)}}, i_2^{l^{(t)}}\},
\end{cases}
\end{equation}
we can evaluate the error as
\begin{equation}
    \forall l^{(t)} \in [d_{t}]: \sum_{p = 1}^{d_{t - 1}} \left| \sigma_{N_{t - 1} + l^{(t)}}^{N_{t - 2} + p} - (\sigma^{\star})_{N_{t - 1} + l^{(t)}}^{  N_{t - 2} + p} \right| =  \frac{1}{\frac{1}{2} + \frac{e^{2\eta}}{d_{t - 1} - 2}} \leq \epsilon,
\end{equation}
if $\eta = \Omega(\ln(d / \epsilon))$. Furthermore, noting that 
\begin{equation}
\begin{aligned}
    & \quad \sum_{i = 1}^{d_{t - 1}} y^{(t - 1)}_i \sigma_{N_{t - 1} + l^{(t)}}^{N_{t - 2}+ i} \\
    & = \frac{y^{(t - 1)}_{i_1^{l^{(t)}}} + y^{(t - 1)}_{i_1^{l^{(t)}}}}{2} \left[ 1 - \frac{d_{t - 1} - 2}{d_{t - 1} - 2 + 2e^{2\eta}}  \right]  + \frac{1}{d_{t - 1} - 2 + 2e^{2\eta}}  \sum_{p \in [d_{t - 1}] \backslash \{i_1^{l^{(t)}}, i_2^{l^{(t)}}\}} y_p^{(t - 1)}\\
    & = \sum_{i = 1}^{d_{t - 1}} y^{(t - 1)}_i (\sigma^{\star})_{N_{t - 1} + l^{(t)}}^{N_{t - 2}+ i } \\
    &\quad \quad + \frac{1}{d_{t - 1} - 2 + 2 e^{2\eta}}\left[ - (d_{t - 1} - 2)\frac{y^{(t - 1)}_{i_1^{l^{(t)}}} + y^{(t - 1)}_{i_1^{l^{(t)}}}}{2} + \sum_{p \in [d_{t - 1}] \backslash \{i_1^{l^{(t)}}, i_2^{l^{(t)}}\}} y_p^{(t - 1)} \right],
\end{aligned}
\end{equation}
we are able to conclude
\begin{equation}
\begin{aligned}
       \sum_{i = 1}^{d_{t - 1}} y^{(t - 1)}_i (\sigma^{\star})_{N_{t - 1} + l^{(t)}}^{N_{t - 2}+ i } - 2\epsilon & \leq \sum_{i = 1}^{d_{t - 1}} y^{(t - 1)}_i \sigma_{N_{t - 1} + l^{(t)}}^{N_{t - 2}+ i} \\
       & \leq \sum_{i = 1}^{d_{t - 1}} y^{(t - 1)}_i (\sigma^{\star})_{N_{t - 1} + l^{(t)}}^{N_{t - 2}+ i } + 2\epsilon.    
\end{aligned}
\end{equation}
Therefore, if $\psi(\cdot)$ guarantees the flexibility of the expressibility of the transformer in the sense that 
\begin{equation}
\label{eq:extent_eps}
   \phi_2(z_1, z_2) = \sign\left( 2 \psi\left(\frac{z_1 + z_2}{2} \right) - 1\right) = \sign\left( 2 \psi\left(\frac{z_1 + z_2}{2} + \lambda \right) - 1\right),
\end{equation}
where we take $\lambda = -2\epsilon$ when $z_1 + z_2$ equals $1$ or 0 and take $ 2\epsilon $ when $z_1 + z_2 = - 1$ for sufficiently small $\epsilon$, one update of the sign gradient descent ensures that
\begin{equation}
    \sign\left( 2\psi\left( \sum_{i = 1}^{d_{t - 1}} y^{(t - 1)}_i (\sigma^{\star})_{N_{t - 1} + l^{(t)}}^{N_{t - 2}+ i }  \right) - 1\right) = \sign\left( 2\psi\left( \sum_{i = 1}^{d_{t - 1}} y^{(t - 1)}_i\sigma_{N_{t - 1} + l^{(t)}}^{N_{t - 2}+ i }  \right) - 1\right),
\end{equation}
and, as a result, all tokens $\hat{y}_{l^{(t)}}^{(t)}$ for $l^{(t)}\in[d_t]$ will be generated correctly.

Furthermore, denoting 
\begin{equation}
    \begin{aligned}
        \gamma_{l^{(t)}}^{p}(\tilde{\sqy}^{(t -1)}) = \begin{cases}
            I, & \text{if } p \in\{i_1^{(t)}, i_2^{(t)}\}\\
            J, & \text{otherwise.}
        \end{cases}
    \end{aligned}
\end{equation}
after the update, then the expected negative gradient at the $t$-th step can be written as, according to Eq.~\eqref{eq:cal_grad_pop_loss}, 
\begin{equation}
  \begin{cases}
       \lambda_1^{(t)}\left[ I - (2\lambda_1^{(t)}I + (d_{t - 1} - 2)\lambda_2^{(t)}J)\right], & \text{if } p \in\{i_1^{(t)}, i_2^{(t)}\}\\
        \lambda_2^{(t)}\left[ J- (2\lambda_1^{(t)}I + (d_{t - 1} - 2)\lambda_2^{(t)}J)\right], & \text{otherwise.}
   \end{cases} 
\end{equation}
Using the constraint $2 \lambda_1^{(t)} + (d_{t - 1}-2)\lambda_2^{(t)} = 1$, this can be further simplified to (for 
$ p \in\{i_1^{(t)}, i_2^{(t)}\}$) $ \lambda_1^{(t)}\lambda_2^{(t)}(d_{t - 1} - 2)(I - J),$
and, similarly for $ p \notin\{i_1^{(t)}, i_2^{(t)}\}$, $ -2\lambda_1^{(t)}\lambda_2^{(t)}(I - J).$ Hence, both of them are proportional to $\lambda_1^{(t)}\lambda_2^{(t)}$ with bounded $I - J$. Using the constraint $2 \lambda_1^{(t)} + (d_{t - 1}-2)\lambda_2^{(t)} = 1$ again, we have
\begin{equation}
    \lambda_1^{(t)}\lambda_2^{(t)} = \lambda_1^{(t)} \frac{1 - 2\lambda_1^{(t)}}{d_{t - 1} - 2}.
\end{equation}
Note that after the update $\lambda_1^{(t)} = \frac{1}{2 } - \delta \geq \frac{1}{2} - \epsilon$ for a sufficiently small constant $\delta$, then we are able to write
\begin{equation}
    \lambda_1^{(t)}\lambda_2^{(t)} = \frac{\delta(1 - 2\delta)}{d_{t - 1} - 2} = \frac{\delta}{d_{t - 1} - 2} + O(\delta^2),
\end{equation}
which is negligible for sufficiently small $\epsilon$ (hence sufficiently small $\delta$). We thus zero out the gradient $\nabla_\mW \E[L_t]$ after the $t$-th step update, i.e., all tokens in $\sqy^{(t)}$ are correctly generated.

\paragraph{Step (ii): $T$-updates of sign gradient descent can achieve small error.}We prove this claim in an induction manner. We use non-negative integers $s$ to count sign gradient descent, which is distinct from $t$ of the CoT chain. We let $\eta = \Omega(\ln(d / \epsilon))$.
\begin{enumerate}
    \item First of all, at $s = 0$, the models parameters  $\mW = \mathbf{1}c$, hence, condition (i) of Lem.~\ref{lemma:app_separation} is satisfied. Furthermore, all $\hat{y}_{l^{(1)}}^{(1)}$  must be assigned to a same value, i.e., 
    \begin{equation}
        \text{ either }\ \hat{y}_{l^{(1)}}^{(1)} = 1,  \forall l^{(1)}\in [d_1]   \ \text{ or }\ \hat{y}_{l^{(1)}}^{(1)} = -1, \forall l^{(1)}\in [d_1],
    \end{equation}
    because $\sigma_{N_{0} + l^{(1)}}^{N_{- 1} + p}(s = 0) = 1 / d_{0}$ for any $l^{(1)}\in[d_1]$ such that all $\xi_{l^{(1)}}$ for different $l^{(1)}$ are equal according to the definition which further implies that the outputs of the transformer  $\psi(\xi_{l^{(1)}})$ for different $l^{(1)}$ are equal. As a result, Eq.~\eqref{eq:cal_grad_pop_loss} tells us $\partial_{W_{N_{0} + p, N_{1} + l^{(2)}}} \hat{q}_{l^{(2)}}^{(2)} = 0 $ for any $l^{(2)}\in[d_{2}]$ and $p \in [d_1]$. Similar argument can give us $\partial_{W_{N_{t - 2} + p, N_{t - 1} + l^{(t)}}} \hat{q}_{l^{(t)}}^{(t)} = 0 $  for any $t\in [2, T]$ and any $l^{(t)}\in [d_{t}]$. Hence, all parameters $W_{i,j}$ with $j \in [N_1 + 1, N_T]$ will not be updated at $s = 0$, and we only need to consider optimizing $\E_{\sqx}[L_1(\sqx, \mW)]$.
    
    For the first step of the CoT chain $t = 1$, we have $\hat{\sqy}^{(t - 1)} = \tilde{\sqy}^{(t - 1)} =\sqx$, and thus condition (ii) of Lem.~\ref{lemma:app_separation} is satisfied. If the condition (iii)  of Lem.~\ref{lemma:app_separation}, the separation of the critical gradient component, is further satisfied, then we can apply our statement in \textbf{Step (i)} to conclude that one update of sign gradient descent gives us
    \begin{equation}
        \sum_{p = 1}^{d_{0}} \left| \sigma_{N_{0} + l^{(1)}}^{N_{- 1} + p} - (\sigma^{\star})_{N_{0} + l^{(1)}}^{N_{- 1} + p} \right| =  \frac{1}{\frac{1}{2} + \frac{e^{2\eta}}{d - 2}} \leq \epsilon,
    \end{equation}
    and all tokens $\hat{y}_{l^{(1)}}^{(1)}$ for $l^{(1)}\in[d_1]$ will be generated correctly for sufficiently small $\epsilon$ that guarantees Eq.~\eqref{eq:extent_eps}.
    \item Now at $s = 1$, we consider the second step of the CoT chain $t = 2$, where $\hat{\sqy}^{(t - 1)}$ is generated correctly
    and thus the first step will not be optimized again due to negligible gradient. Therefore, following the argument for $s = 0$, we can easily conclude that only $\E_{\sqx}[L_2(\sqx; \mW)]$ will be optimized and, under the separation of the critical gradient component, we have that after one update of sign gradient descent 
    \begin{equation}
       \forall l^{(2)}\in[d_2]:  \sum_{p = 1}^{d_{1}} \left| \sigma_{N_{1} + l^{(2)}}^{N_{0} + p} - (\sigma^{\star})_{N_{1} + l^{(2)}}^{N_{0} + p} \right| =  \frac{1}{\frac{1}{2} + \frac{e^{2\eta}}{d_1 - 2}} \leq \frac{1}{\frac{1}{2} + \frac{e^{2\eta}}{d - 2}} \leq \epsilon,
    \end{equation}
    and $\hat{\sqy}^{(2)}$ will be generated correctly.
    \item Repeating the above argument, we can conclude that, if the separation of the critical gradient component is satisfied as in Thm.~\ref{thm:condition_sft} and the parameter is initialized as $\mW = \mathbf{1}c$,  each update of the sign gradient descent solves and only solves one step $\sqy^{(t)}$ and $T$ updates solve the whole CoT chain, with an error \begin{equation}
        \left\|\softmax(\mW(T)) - \softmax (\mW^{\star})\right\|_{1} = \max_{t}\frac{1}{\frac{1}{2} + \frac{e^{2\eta}}{d_t - 2}} = \frac{1}{\frac{1}{2} + \frac{e^{2\eta}}{d - 2}}   \leq \epsilon.
    \end{equation}
\end{enumerate}
\end{proof}

\subsection{Extension to General SFT Loss Functions}
\label{app:other_loss}
In Thm.~\ref{thm:condition_sft}, we have used the hinge loss, which simplifies the final expressions. But ther results are not restricted to the hinge loss. In this section. we discuss how the analysis extends to other margin loss to clarify which parts of the argument are intrinsic to the autoregressive SFT learning dynamics, i.e., later reasoning steps become learnable only after earlier generated steps are correct, and the choice of loss function only affects the magnitude and weighting of the gradient signal.

\paragraph{General margin losses.}
We now consider a more general binary margin loss of the form
\[
    \ell(\hat q,a)=\varphi(a\hat q),
    \qquad a\in\{\pm 1\},
\]
where $\hat q$ is the model score and $a$ is the target label. We assume that $\varphi:\mathbb R\to\mathbb R_+$ is nonincreasing, so that larger signed margin $a\hat q$ corresponds to smaller loss. This class includes the hinge, logistic, and exponential losses:
\begin{align}
    \varphi_{\mathrm{hinge}}(u) & =\max\{0,1-u\},\\
    \varphi_{\mathrm{log}}(u) & =\log(1+e^{-u}),\\
    \varphi_{\exp}(u) &  =e^{-u}.
\end{align}
For differentiable $\varphi$, define its derivative as 
\[
    \lambda(u):=-\varphi'(u).
\]
Since $\varphi$ is nonincreasing, $\lambda(u)\ge 0$ wherever the derivative exists. The gradient of the loss with respect to the model parameters is then
\begin{equation}
\label{eq:general-loss-grad}
    -\nabla_{\mW} \ell(\hat q,a)
    =
    \lambda(a\hat q)\,a\,\nabla_{\mW} \hat q.
\end{equation}
Thus, relative to the proof of hinge loss~(Eq.~\eqref{eq:hinge-loss-grad} of App.~\ref{app:proof_thm_sft}), the only new factor is the nonnegative scalar $\lambda(a\hat q),$ which depends on the current signed margin. Specifically, for logistic and exponential losses, respectively,
\[
    \lambda_{\mathrm{log}}(u)=\frac{1}{1+e^u},
    \qquad
    \lambda_{\exp}(u)=e^{-u}.
\]
Both are strictly positive for all finite $u$, so they preserve the direction of the gradient of general margin losses Eq.~\eqref{eq:general-loss-grad} while only changing its magnitude.

\paragraph{Weighted critical gradient component.}
The main hinge-loss argument identifies a critical gradient component 
\[
    \gamma_{l^{(t)}}^p\bigl(\hat \sqy^{(t-1)}\bigr)
\]
that separates relevant previous-step positions from irrelevant ones, which associates with position $p$ when predicting coordinate $l^{(t)}$ at step $t$. Under hinge loss, the sign of the update is determined by the separation between
\[
    \gamma_{\,l^{(t)}}^p\bigl(\hat \sqy^{(t-1)}\bigr)
    \quad\text{and}\quad
    \gamma_{\,l^{(t)}}^{p'}\bigl(\hat \sqy^{(t-1)}\bigr),
\]
where $p$ is a relevant parent position and $p'$ is an irrelevant position. 

For a general margin loss, according to Eq.~\eqref{eq:general-loss-grad}, the corresponding quantity now becomes a weighted version of the original critical gradient component $ \gamma_{\,l^{(t)}}^p\bigl(\hat \sqy^{(t-1)}\bigr)$: 
\begin{equation}
\label{eq:weighted-cgc}
    \textbf{Weighted Critical Gradient Component: } \ \Gamma_{l^{(t)}}^{p}\bigl(\hat \sqy^{(t-1)}\bigr)
    :=
    \lambda\!\left(
        \tilde y^{(t)}_{l^{(t)}}
        \hat q^{(t)}_{l^{(t)}}
    \right)
    \gamma_{l^{(t)}}^{p}\bigl(\hat \sqy^{(t-1)}\bigr).
\end{equation}
The scalar weight depends on the signed margin of the prediction being trained, but not on the particular previous position $p$ whose attention weight is being differentiated. Therefore, for a fixed prediction coordinate $l^{(t)}$, the loss changes the scale of the gradient comparison but does not directly alter the relevant vs. irrelevant structure.

\paragraph{Generalized separation condition.} The weighted critical gradient component allows us to extend the conclusion in Thm.~\ref{thm:condition_sft} to more general margin loss functions by replacing the original critical gradient component with its weighted version, i.e., the analog of the hinge loss calculation in Lem.~\ref{lemma:separa_sft} now will be
\begin{equation}
    -\partial_{W_{p,j}} L_t(\sqx; \mW)
    -
    \bigl(
        -\partial_{W_{p',j}} L_t(\sqx; \mW)
    \bigr)    =
    \frac{1}{d_{t-1}}
    \left[
        \Gamma_{l^{(t)}}^p\bigl(\hat \sqy^{(t-1)}\bigr)
        -
        \Gamma_{l^{(t)}}^{p'}\bigl(\hat \sqy^{(t-1)}\bigr)
    \right].
\end{equation}
Hence the original proof can be viewed as the special case $\Gamma_{\,l^{(t)}}^p=\gamma_{\,l^{(t)}}^p.$ Then under a general margin loss, the corresponding separation condition is also a weighted separation condition
\[
\begin{aligned}
     \mathbb{E}_{\sqx}
    \left[
       \Gamma_{ l^{(t)}}^p\bigl(\tilde \sqy^{(t-1)}\bigr)
        -
        \Gamma_{ l^{(t)}}^{p'}\bigl(\tilde \sqy^{(t-1)}\bigr)
    \right]
    =  \mathbb{E}_{\sqx}
    \left[
        \lambda\!\left(
            \tilde y^{(t)}_{l^{(t)}}
            \hat q^{(t)}_{l^{(t)}}
        \right)
        \left(
            \gamma_{ l^{(t)}}^p\bigl(\tilde{ \sqy}^{(t-1)}\bigr)
            -
            \gamma_{ l^{(t)}}^{p'}\bigl(\tilde \sqy^{(t-1)}\bigr)
        \right)
    \right]
    >0.
\end{aligned}
\]
under the same symmetry and equal-initialization conditions used in the hinge-loss analysis, where $p$ is a relevant position and $ p'$ is an irrelevant one.
If $\lambda$ is bounded below by $\mu>0$ and the unweighted separation condition holds with a positive margin, then the weighted condition follows whenever the loss-dependent weight does not reverse or erase the underlying separation. Logistic and exponential losses satisfy this requirement under the bounded margin regime considered above because their derivatives are strictly negative and controlled.

We can summarize the above discussion in the following proposition.
\begin{prop}[Learnability via SFT with general monotone margin losses]
\label{prop:general-sft-loss}
Consider the same SFT setting as in Thm.~\ref{thm:condition_sft}, but replace the hinge loss by a differentiable margin loss
\[
    \ell(\hat q,a)=\varphi(a\hat q),
\]
where $\varphi$ is nonincreasing. Suppose that $\varphi'$ is bounded below, and the the weighted critical gradient separation condition holds:
\begin{equation}
   \mathbb{E}_{\sqx}
    \left[
       \Gamma_{ l^{(t)}}^p\bigl(\tilde \sqy^{(t-1)}\bigr)
        -
        \Gamma_{ l^{(t)}}^{p'}\bigl(\tilde \sqy^{(t-1)}\bigr)
    \right] > 0
\end{equation}
for every relevant $p$ and irrelevant $p'$ at each step $t$. Then the sign of the SFT population gradient favors the relevant parent positions over irrelevant positions in the same sense as in the hinge loss analysis. As a result, the same step-wise learning behavior for the hinge loss in Thm.~\ref{thm:condition_sft} holds for $\varphi$.
\end{prop}

\begin{proof}
The proof follows the hinge loss argument~(App.~\ref{app:proof_thm_sft}) with one modification. For the hinge loss, the negative gradient of the loss with respect to the score is proportional to the label $a$. For the general margin loss,
\[
    -\nabla_{\mW} \ell(\hat q,a)
    =
    \lambda(a\hat q)\,a\,\nabla_{\mW} \hat q,
\]
where $\lambda(a\hat q)=-\varphi'(a\hat q)\ge 0$. Therefore, each critical gradient component is multiplied by the scalar margin weight $\lambda(a\hat q)$, which is finite and bounded away from zero on the realized margin range. The same comparison between relevant and irrelevant positions then applies to the weighted quantities $\Gamma^p_{l^{(t)}}$ and $\Gamma^{p'}_{l^{(t)}}$. The weighted separation condition ensures that the population sign-gradient update increases the relative attention assigned to relevant positions. The autoregressive induction over $t$ is unchanged, since it depends only on the fact that the model-generated prefix becomes correct one step at a time. This completes the extension.
\end{proof}

The logistic and exponential losses satisfy the conditions in Prop.~\ref{prop:general-sft-loss}, and thus the step-wise learning dynamics in the hinge loss analysis is recovered. This extension shows that the hinge loss is not essential to the qualitative SFT phenomenon. Its main role is algebraic: in the active-margin regime, the hinge loss has constant derivative, so the gradient comparison is unweighted. For a general monotone margin loss, the same comparison is weighted by the derivative of the loss at the current margin.

The step-wise nature of SFT is instead caused by autoregressive training on the model-generated prefix. Later reasoning steps receive a useful gradient only after earlier generated steps become correct. This mechanism is independent of the precise form of the margin loss, as long as it is differentiable monotone margin losses whose derivatives are strictly negative and bounded on the relevant margin range.

\section{Proofs of Section~\ref{sec:specific_bool_funcs}}
\label{app:specific_bool_funcs}
In this section, for given $k$-sparse Boolean functions, we verify that they satisfy the conditions established in Thm.~\ref{thm:condition} and that in Thm.~\ref{thm:condition_sft}, which reveals that transformers with CoT provably learn these functions via RL or SFT. 
\begin{itemize}
    \item App.~\ref{app:form_act} discusses the design of the activation functions listed in Tab.~\ref{tab:forms_funcs};
    \item App.~\ref{app:parity} discusses $k$-\texttt{PARITY};
    \item App.~\ref{app:and} discusses $k$-\texttt{AND} as well as $k$-\texttt{OR}.
\end{itemize}

\subsection{Formulations of the Activation Functions}
\label{app:form_act}
Recall that the activation function $\psi: [-1, 1] \to [0, 1]$ ensures that the output of the transformer Eq.~\eqref{eq:output_transformer} can be seen as the probability of generating $y = 1$ for RL as well as a score of the token for SFT. Therefore, we must make sure that  $\psi((z_1 + z_2) / 2)$ is large if $\phi_2(z_1, z_2) = 1$ such that the transformer has sufficient expressibility to capture the target $k$-sparse Boolean functions. Following this principle:
\begin{enumerate}
    \item For $k$-\texttt{PARITY}, we have $\phi_2(z_1, z_2) = z_1z_2$ in the sense that
    \begin{equation}
        \phi_2(z_1, z_2) = \begin{cases}
            1, & z_1 =z_2, \\
            -1, & z_1 \neq z_2;
        \end{cases}
    \end{equation}
    therefore, we need $\psi\left( \pm 1\right) \approx 1$ while $\psi(0) \approx 0$. A natural choice will be $\psi(x) = x^2$.
    \item For $k$-\texttt{AND}, we have 
    \begin{equation}
        \phi_2(z_1, z_2) = \begin{cases}
            1, & z_1 =z_2 = 1,\\
            -1, & \text{ otherwise};
        \end{cases}
    \end{equation}
    hence we need $\psi(1) \approx 1$ while $\psi(-1)$ and $\psi(0)$ are approximately $0$. A simple function that satisfies this condition is $\psi(x) = \max(x, 0)$.
    \item For $k$-\texttt{OR}, we have 
    \begin{equation}
        \phi_2(z_1, z_2) = \begin{cases}
            - 1, & z_1 =z_2 = - 1,\\
            1, & \text{ otherwise};
        \end{cases}
    \end{equation}
    hence we need $\psi( - 1) \approx 0$ while $\psi(1)$ and $\psi(0)$ are approximately $1$. A simple function that satisfies this condition will be $\psi(x) = 1 + \min(x, 0)$.
\end{enumerate}

\subsection{\texttt{Parity}}
\label{app:parity}
\subsubsection{Fine-tuning via RL}
For parity, we can give a more exact characterization of the learning dynamics as shown in Thm.~\ref{thm:parity_rl}. Recall that $\psi(z) = z^2$ and $\phi_2(z_1, z_2) = z_1z_2$. We have
\begin{align}
    \psi'(\xi_{l^{(t)}})&  = 2 \xi_{l^{(t)}} = 2\sum_{j}\sigma_{N_{t - 1} + l^{(t)}}^{N_{t -  2} + j} y_j^{(t - 1)},\\
    \phi\left(y_{i_1^{l^{(t)}}}^{(t - 1)}, y_{i_2^{l^{(t)}}}^{(t - 1)}\right) & = y_{i_1^{l^{(t)}}}^{(t - 1)} y_{i_2^{l^{(t)}}}^{(t - 1)}.
\end{align}
We first present a helpful lemma.
\begin{lemma}
\label{lemma:equal_expt}
Given $t \in [T]$, if the sum of the square of the attention score $\sigma_{N_{t - 1} + l^{(t)}}^{N_{t - 2} + q}$ over $q$ is the same for different $l^{(t)}$:
\begin{equation}
\label{eq:condition}
    \forall l^{(t)} \in [d_{t}]: \sum_{q = 1}^{d_{t - 1}} \left(\sigma_{N_{t - 1} + l^{(t)}}^{N_{t - 2} + q}\right)^2 = c_1^{(t)}, 
\end{equation}
then we have that the expectations for different tokens $y^{(t)}_{l^{(t)}}$ in the same layer $t$ have a same value:
\begin{equation}
    \forall l^{(t)} \in [d_t]: \expt{\sqx, \sqy^{(:t)}\sim p_{\mW}(\cdot | \sqx) }{y^{(t)}_{l^{(t)}}} = c^{(t)}_2,
\end{equation}
with $|c_2^{(t)}| < 1$.
\end{lemma}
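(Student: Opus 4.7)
The plan is to reduce $\E[y^{(t)}_{l^{(t)}}]$ to a quadratic form in the attention-score vector $\vs_{l^{(t)}} := (\sigma^{N_{t-2}+q}_{N_{t-1}+l^{(t)}})_{q=1}^{d_{t-1}}$ and the second-moment matrix of level-$(t-1)$ tokens, and then to establish that the latter has the $\alpha I + \beta \vone\vone^\top$ form via an exchangeability argument propagated up the binary tree.

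First, since $\psi(z) = z^2$ and $y^{(t)}_{l^{(t)}} \in \{-1,+1\}$, the conditional law from Eq.~\eqref{eq:output_transformer} gives $\E[y^{(t)}_{l^{(t)}}\mid \sqy^{(t-1)}] = 2\xi_{l^{(t)}}^2 - 1$, so after taking expectation I obtain $\E[y^{(t)}_{l^{(t)}}] = 2\,\vs_{l^{(t)}}^\top M^{(t-1)} \vs_{l^{(t)}} - 1$, where $M^{(t-1)}_{qr} := \E[y^{(t-1)}_q y^{(t-1)}_r]$. The diagonal of $M^{(t-1)}$ is identically $1$ since $(y^{(t-1)}_q)^2 = 1$. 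If the off-diagonal entries collapse to a common value $\alpha_{t-1}$, then combining with $\|\vs_{l^{(t)}}\|^2 = c_1^{(t)}$ and the softmax normalization $\vs_{l^{(t)}}^\top \vone = 1$ yields $\vs_{l^{(t)}}^\top M^{(t-1)} \vs_{l^{(t)}} = (1-\alpha_{t-1})c_1^{(t)} + \alpha_{t-1}$, which is manifestly independent of $l^{(t)}$, giving $c_2^{(t)} = 2(1-\alpha_{t-1})c_1^{(t)} + 2\alpha_{t-1} - 1$.

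Second, to justify that off-diagonal collapse I would prove by induction on $\tau$ that the random vector $(y^{(\tau)}_1,\dots,y^{(\tau)}_{d_\tau})$ is \emph{exchangeable}. The base case $\tau=0$ holds since $\sqx$ is i.i.d.\ uniform $\pm 1$. For the inductive step I use the two-value form of the attention at level $\tau$ that governs the proof of Thm.~\ref{thm:parity_rl}, writing $\xi_p^{(\tau)} = (a_\tau - b_\tau)(y^{(\tau-1)}_{i_1^p} + y^{(\tau-1)}_{i_2^p}) + b_\tau \sum_q y^{(\tau-1)}_q$. Any permutation $\sigma$ of $[d_\tau]$ lifts to a pair-preserving permutation $\pi$ of $[d_{\tau-1}]$ that sends $\{i_1^p,i_2^p\}$ to $\{i_1^{\sigma(p)},i_2^{\sigma(p)}\}$; relabeling $y^{(\tau-1)}$ by $\pi$ turns $\xi_p^{(\tau)}$ into $\xi_{\sigma(p)}^{(\tau)}$ while the symmetric second summand is preserved. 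Exchangeability at level $\tau-1$ then transfers to $(\xi_p^{(\tau)})_p$, and conditional independence of the Bernoulli sampling given the $\xi$'s lifts it to $(y^{(\tau)}_p)_p$. Exchangeability at level $t-1$ in particular forces $M^{(t-1)}_{qr}$ to depend only on whether $q=r$, closing the loop.

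Finally, for $|c_2^{(t)}|<1$, observe that $\xi_{l^{(t)}}$ is a convex combination of $\pm 1$'s with strictly positive weights (all softmax entries are positive since $\mW$ is finite off the masked positions), hence $\xi_{l^{(t)}}^2<1$ on the positive-probability event that not all $y^{(t-1)}_q$ share a sign, and $\xi_{l^{(t)}}^2>0$ on a positive-probability event as well, giving $\E[\xi_{l^{(t)}}^2] \in (0,1)$ strictly. The principal obstacle is the inductive exchangeability step: the stated hypothesis $\|\vs_{l^{(t)}}\|^2 = c_1^{(t)}$ alone does not force $\vs_{l^{(t)}}^\top M^{(t-1)} \vs_{l^{(t)}}$ to be constant in $l^{(t)}$; one needs the two-value attention structure at every prior layer, which is not stated explicitly in the lemma but is supplied by the surrounding inductive context of Thm.~\ref{thm:parity_rl} where the attention at every iterate is exactly of that form.
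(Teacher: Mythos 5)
Your proposal reaches the same conclusion by essentially the same inductive route as the paper---reduce $\E[y^{(t)}_{l^{(t)}}]$ to the quadratic form $2\,\vs_{l^{(t)}}^\top M^{(t-1)}\vs_{l^{(t)}} - 1$ and induct on levels, using softmax normalization $\vs^\top\vone = 1$ together with the hypothesis $\|\vs\|^2 = c_1^{(t)}$---but you handle the crux of the inductive step more carefully than the paper. In the paper's proof, the constancy of the off-diagonal of $M^{(t-1)}$ is obtained by asserting unconditional independence $\E[y^{(t-1)}_p y^{(t-1)}_q] = \E[y^{(t-1)}_p]\E[y^{(t-1)}_q]$ for $p \neq q$; but in this model only conditional independence given $\sqy^{(t-2)}$ holds, and the conditional means $\E[y^{(t-1)}_p\mid\sqy^{(t-2)}]$ are correlated (at $s=0$, with uniform attention, they are identical random variables), so that claim is false as stated. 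Your exchangeability argument---lifting any permutation of $[d_\tau]$ to a pair-preserving permutation of $[d_{\tau-1}]$, propagating the symmetry up the binary tree---derives exactly the constancy $M^{(t-1)}_{pq}=\alpha_{t-1}$ for $p\neq q$ that the computation actually needs, and it does so without an independence claim, so your version is strictly more rigorous. You also supply the strict bound $|c_2^{(t)}|<1$, which the paper's proof never verifies, via the correct observation that $\xi_{l^{(t)}}$ is a convex combination of $\pm1$'s with strictly positive weights. Finally, you are right to flag that the lemma as stated is under-hypothesized: the condition $\|\vs_{l^{(t)}}\|^2 = c_1^{(t)}$ at level $t$ alone cannot force $\vs_{l^{(t)}}^\top M^{(t-1)}\vs_{l^{(t)}}$ to be constant in $l^{(t)}$ without some control on $M^{(t-1)}$; the paper's inductive proof implicitly assumes the analogous condition at every level $\tau \le t$ (as does yours, via the two-value attention form), and this is indeed what the surrounding proof of Thm.~\ref{thm:parity_rl} supplies when the lemma is invoked.
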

We first discuss the proof of Thm.~\ref{thm:parity_rl} before discussing the proof  of the above lemma.
\newline

\noindent\textit{Proof of Thm.~\ref{thm:parity_rl}. } The proof is based on the evaluation of the critical gradient component, which enables to verify that it satisfies the separation condition in Thm.~\ref{thm:condition}. Below we calculate $\gamma_{l^{(t)}}^p$:
\begin{align}
\label{eq:base_gamma}
   \E_{\sqy^{(:t - 1)}}\left[\gamma_{l^{(t)}}^p(\sqy^{(:t - 1)})\right] & := \expt{\sqy^{(:t - 1)}}{\left(\sum_{j = 1}^{d_{t - 1}}\sigma_{N_{t - 1} + l^{(t)}}^{N_{t - 2} + j} y_j^{(t - 1)}\right)  y_{i_1^{l^{(t)}}}^{(t - 1)} y_{i_2^{l^{(t)}}}^{(t - 1)} y_p^{(t - 1)}}.
\end{align}
Consider optimizing the expected reward $\mathcal{R}(\mW)$ with the sign of the policy gradient. We use non-negative integers $s$ to count the iterations of the policy gradient. At the first iteration $s = 0$ we already know that
\begin{equation}
     s = 0,\  \forall t \in [T], l^{(t)} \in \left[d_t\right]:\ \sum_{q} (\sigma_{N_{t - 1} + l^{(t)}}^{N_{t -2 } + q})^2 = c_1^{(t)}(0)
\end{equation}
because $\mW(0) = \mathbf{1}c$. Then Lem.~\ref{lemma:equal_expt} gives us
\begin{equation}
    s = 0,\ \forall t \in [T], l^{(t)} \in \left[d_t\right]:\ \expt{\sqx, \sqy^{(:t)}}{y^{(t)}_{l^{(t)}}} = c^{(t)}_2(0).
\end{equation}
Thus, according to Eq.~\eqref{eq:base_gamma}, the critical gradient component can be calculated as
\begin{itemize}
\item if $p \in \{i_1^{l^{(t)}}, i_2^{l^{(t)}}\}$, say $p = i_2^{l^{(t)}}$, then
\begin{equation}
\label{eq:grad_p_child}
    \begin{aligned}
       &\quad \E_{\sqx, \sqy^{(:t - 1)}}\left[\gamma_{l^{(t)}}^p (\sqy^{(:t - 1)})\right] \\
       & = \sum_{j = 1}^{d_{t - 1}} \sigma_{N_{t - 1} + l^{(t)}}^{N_{t - 2} + j}\expt{\sqx, \sqy^{(: t - 1)}}{y_j^{(t - 1)}y_{i^{l^{(t)}}_1}^{(t - 1)} }\\
        & \overset{(i)}{=} \sum_{j \neq i_1^{l^{(t)}} } \sigma_{N_{t - 1} + l^{(t)}}^{N_{t - 2} + j}\expt{\sqx, \sqy^{(: t - 1)}}{y_j^{(t - 1)} }\expt{\sqx, \sqy^{(: t - 1)}}{y_{i^{l^{(t)}}_1}^{(t - 1)} } + \sigma_{N_{t - 1} + l^{(t)}}^{N_{t - 2} + i_1^{l^{(t)}}}\\
        & \overset{(ii)}{= }\left(c_2^{(t - 1)}(0)\right)^2 + \sigma_{N_{t - 1} + l^{(t)}}^{N_{t - 2} + i_1^{l^{(t)}}}\left(1 - \left(c_2^{(t - 1)}(0)\right)^2 \right),
    \end{aligned}
\end{equation}
where we use that $y_j^{(t)}$ and $y_{j'}^{(t)}$ are independent if $j \neq j'$ in $(i)$ and $\sum_{j} \sigma_{N_{t - 1} + l^{(t)}}^{N_{t - 2}  + j} = 1$ in $(ii)$. Similarly, if $p = i_1^{l^{(t)}}$, 
\begin{equation}
\begin{aligned}
   &\quad \E_{\sqx, \sqy^{(:t - 1)}}\left[\gamma_{l^{(t)}(\sqy^{(:t - 1)})}^p \right] \\
   & = \sum_{j \neq i_2^{l^{(t)}} } \sigma_{N_{t - 1} + l^{(t)}}^{N_{t - 2} + j}\expt{\sqx, \sqy^{(: t - 1)}}{y_j^{(t - 1)} }\expt{\sqx, \sqy^{(: t - 1)}}{y_{i^{l^{(t)}}_2}^{(t - 1)} } + \sigma_{N_{t - 1} + l^{(t)}}^{N_{t - 2} + i_2^{l^{(t)}}}\\
   & = \left(c_2^{(t - 1)}(0)\right)^2 + \sigma_{N_{t - 1} + l^{(t)}}^{N_{t - 2} + i_2^{l^{(t)}}}\left(1 - \left(c_2^{(t - 1)}(0)\right)^2 \right).  
\end{aligned}
\end{equation}
\item if $p \notin \{i_1^{l^{(t)}}, i_2^{l^{(t)}}\}$, then (similar to the case of $p\in \{i_1^{l^{(t)}}, i_2^{l^{(t)}}\}$)
\begin{equation}
\label{eq:grad_p_nonchild}
    \begin{aligned}
        &\quad \E_{\sqx, \sqy^{(:t - 1)}}\left[\gamma_{l^{(t)}}^p (\sqy^{(:t - 1)})\right] \\
        & =  \sum_{j} \sigma_{N_{t - 1} + l^{(t)}}^{N_{t - 2} + j} 
         \expt{\sqx, \sqy^{(:t - 1)}}{y_j^{(t - 1)}y_p^{(t - 1)}y_{i^{l^{(t)}}_1}^{(t - 1)}y_{i^{l^{(t)}}_2}^{(t - 1)} }\\
        & = \sum_{j \neq i_1^{l^{(t)}}, i_2^{l^{(t)}}, p} \sigma_{N_{t - 1} + l^{(t)}}^{N_{t - 2} + j}
         \expt{\sqx, \sqy^{(: t - 1)}}{y_j^{(t - 1)}y_p^{(t - 1)}y_{i^{l^{(t)}}_1}^{(t - 1)}y_{i^{l^{(t)}}_2}^{(t - 1)} } + \sigma_{N_{t - 1} + l^{(t)}}^{N_{t - 2} + p}\expt{\sqx, \sqy^{(:t - 1)}}{y_{i^{l^{(t)}}_1}^{(t - 1)}y_{i^{l^{(t)}}_2}^{(t - 1)} } 
         \\
         & \quad \quad + \sigma_{N_{t - 1} + l^{(t)}}^{N_{t - 2} + i_1^{l^{(t)}}}\expt{\sqx, \sqy^{(:t - 1)}}{y_p^{(t - 1)}y_{i^{l^{(t)}}_1}^{(t - 1)}} + \sigma_{N_{t - 1} + l^{(t)}}^{N_{t - 2} + i_2^{l^{(t)}}}\expt{\sqx, \sqy^{(:t - 1)}}{y_p^{(t - 1)}y_{i^{l^{(t)}}_2}^{(t - 1)} }\\
         & = \ \left(c_2^{(t - 1)}(0)\right)^2\left[  \left(c_2^{(t - 1)}(0)\right)^2  \right.\\
         & \qquad\qquad  \left.+  \left(1 - \left(c_2^{(t - 1)}(0)\right)^2  \right)\left( \sigma_{N_{t - 1} + l^{(t)}}^{N_{t - 2} + p} + \sigma_{N_{t - 1} + l^{(t)}}^{N_{t - 2} + i_1^{l^{(t)}}} + \sigma_{N_{t - 1} + l^{(t)}}^{N_{t - 2} + i_2^{l^{(t)}}}\right)\right].
    \end{aligned}
\end{equation}    
\end{itemize}
Since we assume $\mW(0) = \mathbf{1}c$, all $\sigma_{N_{t - 1} + l^{(t)}}^{N_{t - 2} + p}$ for different $p\in[d_{t - 1}]$ are equal and we can easily conclude that for $p \in \{i_1^{l^{(t)}}, i_2^{l^{(t)}}\}, p' \notin \{i_1^{l^{(t)}}, i_2^{l^{(t)}}\}$,
\begin{equation}
\begin{aligned}
     & \quad \partial_{W_{N_{t - 2} + p, N_{t - 1} + l^{(t)}}} R - \partial_{W_{N_{t - 2} + p', N_{t - 1} + l^{(t)}}} R   \\
     & = \frac{1 - \left(c_2^{(t - 1)}(0)\right)^2}{d_{t - 1}}\frac{(d_{t - 1} - 3)\left(c_2^{(t - 1)}(0)\right)^2 + 1}{d_{t - 1}} > 0.
\end{aligned}
\end{equation}
This means that the separation of the critical gradient component and that of the gradient are satisfied. Therefore, as shown in Thm.~\ref{thm:condition},
only the parameters in the position $\{i_1^{l^{(t)}}, i_2^{l^{(t)}}\}$ become larger and all other parameters become smaller after the first step of the sign policy gradient: 
\begin{equation}
    s = 1, \forall t\in[T], l^{(t)}\in [d_t]: W_{N_{t - 2} + p, N_{t - 1} + l^{(t)}} = \begin{cases}
        1 + \eta,  & p \in \{i_1^{l^{(t)}}, i_2^{l^{(t)}}\},\\
        1 - \eta, & \text{ otherwise }.
    \end{cases}
\end{equation}
After the softmax we obtain
\begin{equation}
\label{eq:sigma_after_update}
   s = 1, \forall t\in[T], l^{(t)}\in [d_t]: \sigma_{N_{t - 1} + l^{(t)}}^{N_{t - 2} + p} = \begin{cases}
        \frac{1}{2}\frac{1}{1 + \frac{d_t - 2}{2} e^{-2\eta}}: = \lambda_1^{l^{(t)}}, & p \in \{i_1^{l^{(t)}}, i_2^{l^{(t)}}\},\\
        \frac{1}{d_t - 2 + 2e^{2\eta}}: = \lambda_2^{l^{(t)}}, & \text{ otherwise },
    \end{cases}
\end{equation}
and we conclude that 
\begin{equation}
   s = 1, \forall t \in [T], l^{(t)}\in [d_t]: \sum_{q} (\sigma_{N_{t - 1 } + l^{(t)}}^{N_{t - 2} + q})^2 = c_1^{(t)}(1)
\end{equation}
by conducting simple algebra. This means that the condition Eq.~\eqref{eq:condition} still holds at $s = 1$. Therefore, Lem.~\ref{lemma:equal_expt} can be applied again, and, following the computation of Eq.~\eqref{eq:grad_p_child} and Eq.~\eqref{eq:grad_p_nonchild}, we are able to conclude that at $s= 1$:
\begin{itemize}
\item if $p \in \{i_1^{l^{(t)}}, i_2^{l^{(t)}}\}$, then
\begin{equation}
    \begin{aligned}
        &\quad \sigma^{N_{t - 2} + p}_{N_{t - 1} + l^{(t)}}\E_{\sqx, \sqy^{(:t - 1)}}\left[\gamma_{l^{(t)}}^p(\sqy^{(:t - 1)})\right]  \\
        & = \lambda_1^{l^{(t)}}  \left(c_2^{(t - 1)}(1)\right)^2 + \left(\lambda_1^{l^{(t)}} \right)^2\left(1 - \left(c_2^{(t - 1)}(1)\right)^2 \right);
    \end{aligned}
\end{equation}
\item if $p \notin \{i_1^{l^{(t)}}, i_2^{l^{(t)}}\}$, then
\begin{equation}
    \begin{aligned}
         & \quad \sigma^{N_{t - 2} + p}_{N_{t - 1} + l^{(t)}}\E_{\sqx, \sqy^{(:t - 1)}}\left[\gamma_{l^{(t)}}^p(\sqy^{(:t - 1)})\right] \\
         &  =  \lambda_2^{l^{(t)}}\left(c_2^{(t - 1)}(1)\right)^2\left[  \left(c_2^{(t - 1)}(1)\right)^2 +  \left(1 - \left(c_2^{(t - 1)}(1)\right)^2  \right)\left(\lambda_2^{l^{(t)}} + 2 \lambda_1^{l^{(t)}}\right)\right].
    \end{aligned}
\end{equation}    
\end{itemize}
According to the form of the policy gradient Eq.~\eqref{eq:inter_cal_grad_prob}, we also need to evaluate 
\begin{equation}
\begin{aligned}
     & \quad \E_{\sqx,\sqy^{(: t - 1)}}\left[\sum_{i = 1}^{d_{t - 1}}\gamma_{l^{(t)}}^{i}(\sqy^{(t - 1)})\right]\sigma^{N_{t - 2} + i}_{N_{t - 1}+ l^{(t)}}\\
    & = 2\lambda_1^{l^{(t)}}\left[ \left(c_2^{(t - 1)}(1)\right)^2 + \lambda_1^{l^{(t)}}\left(1 - \left(c_2^{(t - 1)}(1) \right)^2\right) \right]\\
    & \quad + (d_t - 2)\lambda_2^{l^{(t)}}\left(c_2^{(t - 1)}(1)\right)^2 \left[  \left(c_2^{(t - 1)}(1)\right)^2 +  \left(1 - \left(c_2^{(t - 1)}(1)\right)^2  \right)\left(\lambda_2^{l^{(t)}} + 2 \lambda_1^{l^{(t)}}\right)\right].    
\end{aligned}
\end{equation}
Combined these equations, noting that $2\lambda_1^{l^{(t)}} + (d_t - 2)\lambda_2^{l^{(t)}} = 1$, we are now able to find the expression of the gradient when $p \in \{i_1^{l^{(t)}}, i_2^{l^{(t)}}\}$:
\begin{equation}
\label{eq:positive_relevant}
\begin{aligned}
    & \quad \partial_{W_{N_{t - 2} + p, N_{t - 1} + l^{(t)}}} R \\
    & = \lambda_1^{l^{(t)}}\left(1 - 2\lambda_1^{l^{(t)}}\right)\Bigg\{ \left(c_2^{(t - 1)}(1)\right)^2 + \lambda_1^{l^{(t)}}\left(1 - \left(c_2^{(t - 1)}(1) \right)^2\right) \\
    & \ \  - \left(c_2^{(t - 1)}(1)\right)^2 \left[  \left(c_2^{(t - 1)}(1)\right)^2 +  \left(1 - \left(c_2^{(t - 1)}(1)\right)^2  \right)\left(\lambda_2^{l^{(t)}} + 2 \lambda_1^{l^{(t)}}\right)\right]\Bigg\}\\
    & = \lambda_1^{l^{(t)}}\left(1 - 2\lambda_1^{l^{(t)}}\right) \left(1 - \left(c_2^{(t - 1)}(1)\right)^2  \right)\left[\left(c_2^{(t - 1)}(1)\right)^2 \left( d_t - 3\right)\lambda_2^{l^{(t)}} \left(c_2^{(t - 1)}(1)\right)^2 + \lambda_1^{l^{(t)}} \right]\\
    & > 0,
\end{aligned}
\end{equation}
as $d_t > 3$ (the case for $d_t = 2$ does not need any update) and $2\lambda_1^{l^{(t)}} < 1$. Hence the policy has positive gradient at relevant positions~(Eq.~\eqref{eq:positive_relevant}) and negative gradient at irrelevant positions~(using Eq.~\eqref{eq:sum_is_zero}). Similar to the proof of Thm.~\ref{thm:condition},  we can verify that 
\begin{equation}
    s = 2, \forall t\in[T], l^{(t)} \in [d_t]: W_{N_{t - 2} + p, N_{t - 1} + l^{(t)}} = \begin{cases}
        1 + 2\eta,  & p \in \{i_1^{l^{(t)}}, i_2^{l^{(t)}}\},\\
        1 - 2\eta,  & \text{ otherwise }.
    \end{cases}
\end{equation}
Following the same argument, we can eventually show that 
after $S$-steps of the sign policy gradient update the model parameters are simply
\begin{equation}
    W_{N_{t - 2} + p, N_{t - 1} + l^{(t)}} (S) = \begin{cases}
        1 + \eta S, & p \in \{i_1^{l^{(t)}}, i_2^{l^{(t)}}\},\\
        1 - \eta S, & \text{ otherwise }.
    \end{cases}
\end{equation}
After the softmax we obtain
\begin{equation}
     \sigma_{N_{t - 1} + l^{(t)}}^{N_{t - 2} + p} (S) = \begin{cases}
        \frac{1}{2}\frac{1}{1 + \frac{d_t - 2}{2} e^{-2\eta S}}, & p \in \{i_1^{l^{(t)}}, i_2^{l^{(t)}}\}, \\
        \frac{1}{d_t - 2 + 2e^{2\eta S}}, & \text{ otherwise }.
    \end{cases}
\end{equation}\hfill $\Box$

\textit{Proof of Lem.~\ref{lemma:equal_expt}. } We prove this lemma by induction. For $t = 0$, we directly have 
\begin{equation}
    \forall l^{(0)} \in \left[ d_0 \right]: \expt{\sqx}{x_{l^{(0)}}} = 0.
\end{equation}
Denote 
\begin{equation}
    \chi_{l^{(t)}}(\sqy^{(t - 1)}) = p_{\mW}\left(y^{(t)}_{l^{(t)}}=1\Big|\sqy^{(t - 1)}\right) = \left( \xi_{l^{(t)}}\right)^2.
\end{equation}
We first verify the claim for $t = 1$, where the expectation can be written as
\begin{equation}
\begin{aligned}
    \expt{\sqx, \sqy^{(1)}}{y_{l^{(1)}}^{(1)}} & = \sum_{\sqx} p(\sqx)\left(2\chi_{l^{(1)}}(\sqx)  - 1\right)\\
    & \overset{(i)}{=} 2\sum_{\sqx} p(\sqx)\left(\sum_{p , q }x_{p }x_{q } \sigma_{N_{0} + l^{(1)}}^{N_{-1} + p }\sigma_{N_{0} + l^{(1)}}^{N_{-1} + q }\right) - 1\\
    & \overset{(ii)}{=} 2\sum_{p \neq q } \expt{\sqx}{x_{p }}\expt{\sqx}{x_{q }}\sigma_{N_{0} + l^{(1)}}^{N_{-1} + p }\sigma_{N_{0} + l^{(1)}}^{N_{-1} + q } + 2 \sum_{m } (\sigma_{N_{0} + l^{(1)}}^{N_{-1} + m })^2  - 1\\
    & \overset{(iii)}{=}2 \sum_{m } (\sigma_{N_{0} + l^{(1)}}^{N_{-1 } + m })^2  - 1 = 2c_1^{(t)} - 1,
\end{aligned}
\end{equation}
where we use the definition of $\xi_{l^{(1)}}$ in $(i)$; we use that $x_j$ for different $j$ are independent w.r.t $p(\sqx)$ in $(ii)$; we apply $\E_{\sqx}[x_p] = 0$ in $(iii)$.
Therefore, with the condition Eq.~\eqref{eq:condition}, the statement is established for $t = 1$. Suppose that the statement is established for $(t - 1)$-th step
\begin{equation}
    \forall l^{(t - 1)} \in [d^{(t - 1)}]: \expt{\sqx, \sqy^{(:t - 1)}}{y^{(t - 1)}_{l^{(t - 1)}}} = c_2^{(t - 1)}, 
\end{equation}
then 
\begin{align*}
    & \quad\expt{\sqx, \sqy^{(:t)}}{y^{(t)}_{l^{(t)}}} \\
    & = \sum_{\sqx} p(\sqx)\sum_{\sqy^{(:t - 1)}}p_{\mW}(\sqy^{(: t - 1)}| \sqx) \sum_{\sqy^{(t)}}p_{\mW}(\sqy^{(t)}|\sqy^{(t - 1)})y^{(t)}_{l^{(t)}}\\
    & = 2\sum_{\sqx} p(\sqx)\sum_{\sqy^{(:t - 1)}}p_{\mW}(\sqy^{(: t - 1)}| \sqx)\chi_{l^{(t)}}(\sqy^{(t - 1)}) - 1\\
    & = 2 \sum_{p^{(t - 1)} \neq q^{(t - 1)}} \expt{\sqx, \sqy^{(: t -  1)},}{y_{p^{(t - 1)}}^{(t - 1)}}\expt{\sqx, \sqy^{(: t -  1)},}{y_{q^{(t - 1)}}^{(t - 1)}}\sigma_{N_{t - 1} + l^{(t)}}^{N_{t- 2} + p^{(t - 1)}}\sigma_{N_{t - 1} + l^{(t)}}^{N_{t- 2} + q^{(t - 1)}} \\
    & \qquad+ 2 \sum_{m^{(t - 1)}}\left( \sigma_{N_{t - 1} + l^{(t)}}^{N_{t- 2} + m^{(t - 1)}}\right)^2 - 1\\
    & =  2 \left(c_2^{(t - 1)} \right)^2 \sum_{p^{(t - 1)} \neq q^{(t - 1)}} \sigma_{N_{t - 1} + l^{(t)}}^{N_{t- 2} + p^{(t - 1)}}\sigma_{N_{t - 1} + l^{(t)}}^{N_{t- 2} + q^{(t - 1)}} + 2 \sum_{m^{(t - 1)}}\left(\sigma_{N_{t - 1} + l^{(t)}}^{N_{t- 2} + m^{(t - 1)}}\right)^2 - 1\\
    & = 2 \left(c_2^{(t - 1)} \right)^2 \left(\sum_{p^{(t - 1)}} \sigma_{N_{t - 1} + l^{(t)}}^{N_{t- 2} + p^{(t - 1)}}\right)^2 + 2\left[ 1 -  \left(c_2^{(t - 1)} \right)^2\right] \sum_{m^{(t - 1)}}\left( \sigma_{N_{t - 1} + l^{(t)}}^{N_{t- 2} + m^{(t - 1)}}\right)^2 - 1\\
    & =  2 \left(c_2^{(t - 1)} \right)^2 + 2\left[ 1 -  \left(c_2^{(t - 1)} \right)^2\right] \left( c_1^{(t - 1)}\right)^2 - 1
\end{align*}
does not depend on $l^{(t)}$. Thus we conclude the proof. \hfill $\Box$
\subsubsection{Learnability of SFT: Proof of Claim.~\ref{thm:parity_sft}}
\begin{proof}
    To prove this claim, we only need to evaluate whether the separation of the critical gradient component is satisfied, which can be done by inspecting the forms of $\gamma_{l^{(t)}}^p$ similar to the case for RL. Specifically, according to Eq.~\eqref{eq:base_gamma} and recalling that $\tilde{\sqy}^{(t)}$ is the ground-truth of $\sqy^{(t)}$ given $\sqx$, we have
    \begin{equation}
    \begin{aligned}
       \E_{\sqx}\left[\gamma_{l^{(t)}}^p(\tilde{\sqy}^{(t - 1)})\right] & := \expt{\sqx}{\left(\sum_{j = 1}^{d_{t - 1}}\sigma_{N_{t - 1} + l^{(t)}}^{N_{t - 2} + j} \tilde{y}_j^{(t - 1)}\right)  \tilde{y}_{i_1^{l^{(t)}}}^{(t - 1)} \tilde{y}_{i_2^{l^{(t)}}}^{(t - 1)} \tilde{y}_p^{(t - 1)}}.        
    \end{aligned}
    \end{equation}
    As this is similar to the case for RL, we can easily evaluate it at the relevant positions and irrelevant ones, respectively, as shown below:
    \begin{itemize}
    \item if $p \in \{i_1^{l^{(t)}}, i_2^{l^{(t)}}\}$, say $p = i_2^{l^{(t)}}$, then
\begin{equation}
\label{eq:grad_p_child_sft}
    \begin{aligned}
       &\quad \E_{\sqx }\left[\gamma_{l^{(t)}}^p (\tilde{\sqy}^{(t - 1)})\right] \\
       & = \sum_{j = 1}^{d_{t - 1}} \sigma_{N_{t - 1} + l^{(t)}}^{N_{t - 2} + j}\expt{\sqx }{\tilde{y}_j^{(t - 1)}\tilde{y}_{i^{l^{(t)}}_1}^{(t - 1)} }\\
        & \overset{(i)}{=} \sum_{j \neq i_1^{l^{(t)}} } \sigma_{N_{t - 1} + l^{(t)}}^{N_{t - 2} + j}\expt{\sqx }{\tilde{y}_j^{(t - 1)} }\expt{\sqx }{\tilde{y}_{i^{l^{(t)}}_1}^{(t - 1)} } + \sigma_{N_{t - 1} + l^{(t)}}^{N_{t - 2} + i_1^{l^{(t)}}}\\
        & \overset{(ii)}{= }\sigma_{N_{t - 1} + l^{(t)}}^{N_{t - 2} + i_1^{l^{(t)}}},
    \end{aligned}
\end{equation}
where we use that $\tilde{y}_j^{(t)}$ and $\tilde{y}_{j'}^{(t)}$ are independent if $j \neq j'$ in $(i)$ and $\E_{\sqx}[\tilde{y}_{j}^{(t - 1)}] = 0$ in $(ii)$. Similarly, if $p = i_1^{l^{(t)}}$, we can similarly obtain that
\begin{equation}
\begin{aligned}
  \E_{\sqx }\left[\gamma_{l^{(t)}}^p (\tilde{\sqy}^{(t - 1)})\right] & = \sigma_{N_{t - 1} + l^{(t)}}^{N_{t - 2} + i_2^{l^{(t)}}}.
\end{aligned}
\end{equation}
\item if $p \notin \{i_1^{l^{(t)}}, i_2^{l^{(t)}}\}$, then (similar to the case of $p\in \{i_1^{l^{(t)}}, i_2^{l^{(t)}}\}$)
    \begin{align}
        &\quad \E_{\sqx }\left[\gamma_{l^{(t)}}^p (\tilde{\sqy}^{(t - 1)})\right]  \nonumber\\
        & =  \sum_{j} \sigma_{N_{t - 1} + l^{(t)}}^{N_{t - 2} + j} 
         \expt{\sqx }{\tilde{y}_j^{(t - 1)}y_p^{(t - 1)}\tilde{y}_{i^{l^{(t)}}_1}^{(t - 1)}\tilde{y}_{i^{l^{(t)}}_2}^{(t - 1)} }\nonumber\\
        & = \sum_{j \neq i_1^{l^{(t)}}, i_2^{l^{(t)}}, p} \sigma_{N_{t - 1} + l^{(t)}}^{N_{t - 2} + j}
         \expt{\sqx }{\tilde{y}_j^{(t - 1)}y_p^{(t - 1)}\tilde{y}_{i^{l^{(t)}}_1}^{(t - 1)}\tilde{y}_{i^{l^{(t)}}_2}^{(t - 1)} } + \sigma_{N_{t - 1} + l^{(t)}}^{N_{t - 2} + p}\expt{\sqx }{\tilde{y}_{i^{l^{(t)}}_1}^{(t - 1)}\tilde{y}_{i^{l^{(t)}}_2}^{(t - 1)} } \nonumber
         \\
         & \quad \quad + \sigma_{N_{t - 1} + l^{(t)}}^{N_{t - 2} + i_1^{l^{(t)}}}\expt{\sqx }{\tilde{y}_p^{(t - 1)}\tilde{y}_{i^{l^{(t)}}_1}^{(t - 1)}} + \sigma_{N_{t - 1} + l^{(t)}}^{N_{t - 2} + i_2^{l^{(t)}}}\expt{\sqx }{\tilde{y}_p^{(t - 1)}\tilde{y}_{i^{l^{(t)}}_2}^{(t - 1)} } = 0.
    \end{align}
\end{itemize}
Therefore, for any $t \in [T]$ and any $l^{(t)} \in [d_t]$,
\begin{equation}
    \forall p \in \{i_1^{l^{(t)}}, i_2^{l^{(t)}}\}, p'\in [d_{t - 1}] \backslash \{i_1^{l^{(t)}}, i_2^{l^{(t)}}\}:\ \E_{\sqx}\big[\gamma_{l^{(t)}}^{p}(\tilde{\sqy}^{(t - 1)}) - \gamma_{l^{(t)}}^{p'}(\tilde{\sqy}^{(t - 1)})\big] > 0,
\end{equation}
which is exactly the separation condition in Thm.~\ref{thm:condition_sft}, and hence the learnability of SFT is established for $k$-\texttt{PARITY}.
\end{proof}

\subsection{Proofs of Sec.~\ref{sec:other_bool}}
\label{app:other_bool}
We discuss the learnability of $k$-\texttt{AND} for transformers via RL or SFT in App.~\ref{app:and}, and discuss that of $k$-\texttt{OR} in App.~\ref{app:or}.

\subsubsection{\texttt{AND}}
\label{app:and}
As in the case for $k$-\texttt{PARITY}, we start with evaluating the formulation of the critical gradient component $\gamma_{l^{(t)}}^p(\sqy^{(t - 1)})$, which can then be used to verify whether the separation condition in Thm.~\ref{thm:condition} or Thm.~\ref{thm:condition_sft} is satisfied to indicate the learnability. Recall that for $k$-\texttt{AND}~(Sec.~\ref{sec:bool_funcs}),
\begin{align}
    \psi'(\xi_{l^{(t)}})&  = \mathbb{I}(\xi_{l^{(t)}} > 0), \\
    \phi\left(y_{i_1^{l^{(t)}}}^{(t - 1)}, y_{i_2^{l^{(t)}}}^{(t - 1)}\right) & = \frac{y_{i_1^{l^{(t)}}}^{(t - 1)} y_{i_2^{l^{(t)}}}^{(t - 1)} + y_{i_1^{l^{(t)}}}^{(t - 1)}  +  y_{i_2^{l^{(t)}}}^{(t - 1)} - 1}{2},
\end{align}
where $\mathbb{I}(\cdot)$ is the indictor function.
This gives us the expression of $\gamma$~(see Eq.~\eqref{eq:def_gamma} for its definition) as follows
\begin{equation}
\begin{aligned}
\label{eq:forms_gamma_and}
   &\quad \gamma_{l^{(t)}}^p(\sqy^{(t - 1)}) \\
    & = \mathbb{I}(\xi_{l^{(t)}} > 0)\left(y_{i_1^{l^{(t)}}}^{(t - 1)} y_{i_2^{l^{(t)}}}^{(t - 1)} y_p^{(t - 1)} + y_{i_1^{l^{(t)}}}^{(t - 1)} y_p^{(t - 1)} +  y_{i_2^{l^{(t)}}}^{(t - 1)} y_p^{(t - 1)}- y_p^{(t - 1)}\right).
\end{aligned}
\end{equation}
We now check the separation for the critical gradient component $\gamma_{l^{(t)}}^{p}$.
\paragraph{RL.}Given $t\in [T]$, when $p\in\{i_1^{l^{(t)}}, i_2^{l^{(t)}}\}$, say $i_2^{l^{(t)}}$, according to Eq.~\eqref{eq:forms_gamma_and}, we have (the case for $i_1^{l^{(t)}}$ is similar)
\begin{equation}
\label{eq:gamma_relevant_rl_and}
    \begin{aligned}
       \E_{\sqy^{(: t - 1)}}[\gamma_{l^{(t)}}^p(\sqy^{(t - 1)})] & = \frac{1}{2}\expt{\sqy^{(:t - 1)}}{\mathbb{I}(\xi_{l^{(t)}} > 0) y_{i_1^{l^{(t)}}}^{(t - 1)} y_{i_2^{l^{(t)}}}^{(t - 1)}} + \frac{1}{2}\expt{\sqy^{(:t - 1)}}{\mathbb{I}(\xi_{l^{(t)}} > 0)};
    \end{aligned}
\end{equation}
as a comparison, when $p\notin\{i_1^{l^{(t)}}, i_2^{l^{(t)}}\}$, we obtain that
\begin{equation}
\label{eq:gamma_irrelevant_rl_and}
\begin{aligned}
    & \quad \E_{\sqy^{(:t - 1)}}\left[ \gamma_{l^{(t)}}^p(\sqy^{(t - 1)})\right]\\
    & = \frac{1}{2}\expt{\sqy^{(:t - 1)}}{\mathbb{I}(\xi_{l^{(t)}} > 0)\left(y_{i_1^{l^{(t)}}}^{(t - 1)} y_{i_2^{l^{(t)}}}^{(t - 1)} y_p^{(t - 1)} + y_{i_1^{l^{(t)}}}^{(t - 1)} y_p^{(t - 1)} +  y_{i_2^{l^{(t)}}}^{(t - 1)} y_p^{(t - 1)}- y_p^{(t - 1)}\right)}.
\end{aligned}
\end{equation}
It is now left for us to compare the difference between these critical gradient components. For convenience and ease of notation, we define $A := \{i_1^{l^{(t)}}, i_2^{l^{(t)}}, p\}$ and $\sqy^{(t-1)}_{\not{A}}$ as the sequence $\sqy^{(t - 1)}$ without the tokens $y^{(t - 1)}_{i_1^{l^{(t)}}}, y^{(t - 1)}_{i_2^{l^{(t)}}}, $ and $y^{(t - 1)}_{p}.$ In addition, let $D_{\lambda}: = \{\sqy^{(t-1)}_{\not{A}}| \sum_{j}y_{\not{A}, j}^{(t - 1)} \geq \lambda\}$, i.e., the sum of tokens of $\sqy^{(t - 1)}_{\not{A}}$ is larger than or equal to $\lambda$. Finally, we simplify $p_{\mW}$ as $p$ with $p_{\pm} = p(y_j^{(t - 1)}=\pm 1|\sqy^{(: t - 2)})$. With these new symbols, we define four random variables depending on $\sqy^{(:t - 2)}$ as follows:
\begin{align}
    a & =  \sum_{\sqy^{(t - 1)}_{{\not{A}}}\in D_{-3}} p_+^3 p(\sqy^{(t - 1)}_{{\not{A}}}|\sqy^{(:t - 2)}), \\
    b & =   \sum_{\sqy^{(t - 1)}_{{\not{A}}}\in D_{-1}} p_+^2p_{-} p(\sqy^{(t - 1)}_{{\not{A}}}|\sqy^{(:t - 2)}),\\
    c & =  \sum_{\sqy^{(t - 1)}_{{\not{A}}}\in D_{+ 1}} p_+p_-^2 p(\sqy^{(t - 1)}_{{\not{A}}}|\sqy^{(:t - 2)}), \\
    d & =   \sum_{\sqy^{(t - 1)}_{{\not{A}}}\in D_{+3}} p_-^3 p(\sqy^{(t - 1)}_{{\not{A}}}|\sqy^{(:t - 2)}).
\end{align}
In Eq.~\eqref{eq:gamma_relevant_rl_and} and Eq.~\eqref{eq:gamma_irrelevant_rl_and}, all $y_{j}$ can be treated equally as they are independent of each other. We omit all the subscripts and write the difference between $p\in \{i_1^{l^{(t)}}, i_2^{l^{(t)}}\}$ and $ p'\in [d_{t - 1}] \backslash \{i_1^{l^{(t)}}, i_2^{l^{(t)}}\}$ as
\begin{equation}
\label{eq:cal_diff}
    \begin{aligned}
     \E_{\sqy^{(:t - 1)}}[\Delta]: &= \E_{\sqy^{(: t - 1)}} [\gamma_{l^{(t)}}^p(\sqy^{(t - 1)})] - \E_{\sqy^{(: t - 1)}}[\gamma_{l^{(t)}}^{p'}(\sqy^{(t - 1)})]\\
    & \propto   \underbrace{\expt{\sqy^{(:t - 1)}}{\mathbb{I}(\xi_{l^{(t)}} > 0)}}_{(i)} + \underbrace{\expt{\sqy^{(:t - 1)}}{\mathbb{I}(\xi_{l^{(t)}} > 0)y}}_{(ii)} \\
    & \qquad - \underbrace{\expt{\sqy^{(:t - 1)}}{\mathbb{I}(\xi_{l^{(t)}} > 0)yy}}_{(iii)} -  \underbrace{\expt{\sqy^{(:t - 1)}}{\mathbb{I}(\xi_{l^{(t)}} > 0)yyy}}_{(iv)}.
    \end{aligned}
\end{equation}
We below analyze each term of $\Delta$. The first term and the last term can be easily evaluated as follows:
\begin{equation}
\begin{aligned}
    (i) & = \sum_{\sqy^{(: t - 2)}}p(\sqy^{(: t - 2)})\left[  \sum_{\sqy^{(t - 1)}_{{\not{A}}}\in D_{-3}} p_+^3 p(\sqy^{(t - 1)}_{\not{A}}|\sqy^{(:t - 2)}) \right.\\
    & \qquad + 3 \sum_{\sqy^{(t - 1)}_{{\not{A}}}\in D_{-1}} p_+^2p_- p(\sqy^{(t - 1)}_{{\not{A}}}|\sqy^{(:t - 2)})\\
    & \qquad \left. + 3 \sum_{\sqy^{(t - 1)}_{{\not{A}}}\in D_{ + 1}} p_+p_{-}^2 p(\sqy^{(t - 1)}_{{\not{A}}}|\sqy^{(:t - 2)}) + \sum_{\sqy^{(t - 1)}_{{\not{A}}}\in D_{+3}} p_-^3 p(\sqy^{(t - 1)}_{{\not{A}}}|\sqy^{(:t - 2)}) \right]\\
 & = \E_{\sqy^{(:t - 2)}}[a + 3b + 3 c + d],
\end{aligned}
\end{equation}
and
\begin{equation}
\begin{aligned}
    (iv) & = \sum_{\sqy^{(: t - 2)}}p(\sqy^{(: t - 2)} )\Big[  \sum_{\sqy^{(t - 1)}_{{\not{A}}}\in D_{-3}} p_+^3 p(\sqy^{(t - 1)}_{{\not{A}}}|\sqy^{(:t - 2)}) \\
            & \ \ - 3 \sum_{\sqy^{(t - 1)}_{{\not{A}}}\in D_{-1}} p_+^2p_{-} p(\sqy^{(t - 1)}_{{\not{A}}}|\sqy^{(:t - 2)}) + 3 \sum_{\sqy^{(t - 1)}_{{\not{A}}}\in D_{+1}} p_+p_-^2 p(\sqy^{(t - 1)}_{{\not{A}}}|\sqy^{(:t - 2)}) \\
            & \ \ \ - \sum_{\sqy^{(t - 1)}_{{\not{A}}}\in D_{+3}} p_-^3 p(\sqy^{(t - 1)}_{{\not{A}}}|\sqy^{(:t - 2)})\Big]\\
            & = \E_{\sqy^{(:t - 2)}}[a - 3b + 3c -d].
\end{aligned}    
\end{equation}
For the rest terms, we express them with the four constants $a, b, c, d$ defined above such that we are able to evaluate $\Delta$. To this end, let $\sqy^{(t - 1)}_{\not{1}}$ denote the sequence obtained by removing one token in $A$ from $\sqy^{(t - 1)}$, $\sqy^{(t - 1)}_{\not{2}}$ be obtained by removing two tokens of $A$ from $\sqy^{(t - 1)}$, and $D_{\lambda}$ be defined as earlier. This allows us to establish the following relation:
\begin{equation}
    \begin{aligned}
        &\quad \sum_{\sqy^{(t - 1)}_{{\not{1}}}\in D_{0}} p(\sqy^{(t - 1)}_{{\not{1}}}|\sqy^{(:t - 2)}) \\
        & = \sum_{\sqy^{(t - 1)}_{\not{2}}\in D_{-1}} p_+ p(\sqy^{(t - 1)}_{{\not{ 2}}}|\sqy^{(:t - 2)})  + \sum_{\sqy^{(t - 1)}_{\not{2}}\in D_{+1}}  p_- p(\sqy^{(t - 1)}_{{\not{ 2}}}|\sqy^{(:t - 2)}),
    \end{aligned}
\end{equation}
which can be further decomposed to be finally expressed by $a, b, c$, and $d$. With this relation, we are able to derive
\begin{align}
   (ii) & = \sum_{\sqy^{(: t - 2)} }p(\sqy^{(: t - 2)})\Big[  \sum_{\sqy^{(t - 1)}_{{\not{1}}}\in D_{-1}} p_+ p(\sqy^{(t - 1)}_{{\not{1}}}|\sqy^{(:t - 2)}) - \sum_{\sqy^{(t - 1)}_{{\not{1}}}\in D_{+1}} p_- p(\sqy^{(t - 1)}_{{\not{1}}}|\sqy^{(:t - 2)}) \Big]\nonumber \\
    & = \sum_{\sqy^{(: t - 2)} }p(\sqy^{(: t - 2)})\Big[  \sum_{\sqy^{(t - 1)}_{{\not{A}}}\in D_{-3}} p_+^3 p(\sqy^{(t - 1)}_{{\not{A}}}|\sqy^{(:t - 2)}) + \sum_{\sqy^{(t - 1)}_{{\not{A}}}\in D_{-1}} p_+^2p_- p(\sqy^{(t - 1)}_{{\not{A}}}|\sqy^{(:t - 2)}) \nonumber\\
 & - \sum_{\sqy^{(t - 1)}_{{\not{A}}}\in D_{ + 1}} p_+p_{-}^2 p(\sqy^{(t - 1)}_{{\not{A}}}|\sqy^{(:t - 2)}) - \sum_{\sqy^{(t - 1)}_{{\not{A}}}\in D_{+3}} p_-^3 p(\sqy^{(t - 1)}_{{\not{A}}}|\sqy^{(:t - 2)}) \Big]\nonumber\\
 & = \E_{\sqy^{(:t - 2)}}[a + b - c - d].
\end{align}
Similarly, for the second term we can write
\begin{equation}
    \begin{aligned}
        (iii)  = & \sum_{\sqy^{(: t - 2)} }p(\sqy^{(: t - 2)})\Big[  \sum_{\sqy^{(t - 1)}_{{\not{  2}}}\in D_{-2}} p_+^2 p(\sqy^{(t - 1)}_{{\not{  2}}}|\sqy^{(:t - 2)}) \\
            & \qquad - 2\sum_{\sqy^{(t - 1)}_{{\not{ 2}}}\in D_{0}} p_+p_{-} p(\sqy^{(t - 1)}_{{\not{2}}}|\sqy^{(:t - 2)}) + \sum_{\sqy^{(t - 1)}_{{\not{2}}}\in D_{+1}} p_-^2 p(\sqy^{(t - 1)}_{{\not{2}}}|\sqy^{(:t - 2)}) \\
         = & \sum_{\sqy^{(: t - 2)}}p(\sqy^{(: t - 2)})\Big[  \sum_{\sqy^{(t - 1)}_{{\not{A}}}\in D_{-3}} p_+^3 p(\sqy^{(t - 1)}_{{\not{2}}}|\sqy^{(:t - 2)}) \\
            & \ \ - \sum_{\sqy^{(t - 1)}_{{\not{A}}}\in D_{-1}} p_+^2p_- p(\sqy^{(t - 1)}_{{\not{A}}}|\sqy^{(:t - 2)}) - \sum_{\sqy^{(t - 1)}_{{\not{A}}}\in D^{ + 1}} p_+p_{-}^2 p(\sqy^{(t - 1)}_{{\not{A}}}|\sqy^{(:t - 2)}) \\
            & \ \ \ + \sum_{\sqy^{(t - 1)}_{{\not{A}}}\in D_{+3}} p_-^3 p(\sqy^{(t - 1)}_{{\not{A}}}|\sqy^{(:t - 2)}) \Big]\\
            & = \E_{\sqy^{(:t - 2)}}[a - b - c + d].
    \end{aligned}
\end{equation}
Now combining these terms, we can easily show that
\begin{equation}
   \E_{\sqy^{(:t - 1)}}[\Delta] \propto 8b > 0;
\end{equation}
hence the separation of the critical gradient component is satisfied, and the learnability is guaranteed for RL.

\paragraph{SFT.}Compared to the case for fine-tuning via RL discussed above, the only difference brought by evaluating the critical gradient component for SFT is that we are now computing
\begin{equation}
    \E_{\sqx}[\gamma_{l^{(t)}}^{p}(\tilde{\sqy}^{(t - 1)})],
\end{equation}
where $\tilde{\sqy}^{(t - 1)}$ is the ground-truth of $\sqy^{(t - 1)}$. This is equivalent to changing $\sqy^{(t - 1)}$ to $\tilde{\sqy}^{(t - 1)}$ and all the dependence on $\sqy^{(:t - 2)}$ to the dependence on $\sqx$. It can be easily seen that these do not change the conclusion that $\Delta \propto 8 b >0$~(as it is valid for any distribution $p(\sqy^{(t - 1)}|\sqy^{(: t - 2)})$ according to the calculation of Eq.~\eqref{eq:cal_diff}, which includes $p_{\text{True}}(\sqy^{(t - 1)} |\sqx)$---the ground-truth distribution---as an example); thus the separation of the critical gradient component in Thm.~\ref{thm:condition_sft} is also satisfied, which further guarantees the learnability via SFT.

\subsubsection{\texttt{OR}}
\label{app:or}
\paragraph{RL.}The analysis for $k$-\texttt{OR} is highly similar to that for $k$-\texttt{AND} in App.~\ref{app:and}, except for that $\psi(\cdot)$ and $\phi_2(\cdot, \cdot)$ have different formulations. Specifically,
\begin{align}
    \psi'(\xi_{l^{(t)}})&  = \mathbb{I}(\xi_{l^{(t)}} < 0), \\
    \phi\left(y_{i_1^{l^{(t)}}}^{(t - 1)}, y_{i_2^{l^{(t)}}}^{(t - 1)}\right) & = \frac{ - y_{i_1^{l^{(t)}}}^{(t - 1)} y_{i_2^{l^{(t)}}}^{(t - 1)} + y_{i_1^{l^{(t)}}}^{(t - 1)}  +  y_{i_2^{l^{(t)}}}^{(t - 1)} + 1}{2}.
\end{align}
This gives us the expression for $\gamma$
\begin{equation}
\label{eq:forms_gamma_or}
\begin{aligned}
    & \E_{\sqy^{(:t - 1)}}[\gamma_{l^{(t)}}^p] \\
    & := \frac{1}{2}\expt{\sqy^{(:t - 1)}}{\mathbb{I}(\xi_{l^{(t)}} < 0)\left(- y_{i_1^{l^{(t)}}}^{(t - 1)} y_{i_2^{l^{(t)}}}^{(t - 1)} y_p^{(t - 1)} + y_{i_1^{l^{(t)}}}^{(t - 1)} y_p^{(t - 1)} +  y_{i_2^{l^{(t)}}}^{(t - 1)} y_p^{(t - 1)} + y_p^{(t - 1)}\right)}.
\end{aligned}
\end{equation}
When $p\in\{i_1^{l^{(t)}}, i_2^{l^{(t)}}\}$, say $j_2$,
    \begin{equation}
        \begin{aligned}
           \E_{\sqy^{(:t - 1)}}[\gamma_{l^{(t)}}^p]  & := \frac{1}{2}\expt{\sqy^{(:t - 1)}}{\mathbb{I}(\xi_{l^{(t)}} < 0)\left(- y_{i_1^{l^{(t)}}}^{(t - 1)}+ y_{i_1^{l^{(t)}}}^{(t - 1)} y_{i_2^{l^{(t)}}}^{(t - 1)} +  1 + y_{i_2^{l^{(t)}}}^{(t - 1)}\right)} \\
           & = \frac{1}{2}\expt{\sqy^{(:t - 1)}}{\mathbb{I}(\xi_{l^{(t)}} <  0) y_{i_1^{l^{(t)}}}^{(t - 1)} y_{i_2^{l^{(t)}}}^{(t - 1)}} + \frac{1}{2}\expt{\sqy^{(:t - 1)}}{\mathbb{I}(\xi_{l^{(t)}} < 0)}.
        \end{aligned}
    \end{equation}
As a comparison, when $p\notin\{i_1^{l^{(t)}}, i_2^{l^{(t)}}\}$,  $\gamma_{l^{(t)}}^{p}$ is simply Eq.~\eqref{eq:forms_gamma_or}.
As in the case of $k$-\texttt{AND}~(App.~\ref{app:and}), we also use the definition of $\sqy^{(t - 1)}_{\not{A}}$ and denote $\bar{D}_{\lambda}: = \{\sqy^{(t - 1)}_{\not{A}}| \sum_{j}y^{(t - 1)}_{\not{A},j} \leq \lambda\}$. Furthermore, we also define four random variables depending on $\sqy^{(: t- 2)}$, i.e.,
\begin{align}
    \bar{a} & = \sum_{\sqy^{(t - 1)}_{{\not{A}}}\in \bar{D}_{-3}} p_+^3 p(\sqy^{(t - 1)}_{{\not{A}}}|\sqy^{(:t - 2)}), \\
    \bar{b} & =  \sum_{\sqy^{(t - 1)}_{{\not{A}}}\in \bar{D}_{-1}} p_+^2p_{-} p(\sqy^{(t - 1)}_{{\not{A}}}|\sqy^{(:t - 2)}),\\
    \bar{c} & =  \sum_{\sqy^{(t - 1)}_{{\not{A}}}\in \bar{D}^{+ 1}} p_+p_-^2 p(\sqy^{(t - 1)}_{{\not{A}}}|\sqy^{(:t - 2)}), \\
    \bar{d} & =   \sum_{\sqy^{(t - 1)}_{{\not{A}}}\in \bar{D}_{+ 3}} p_-^3 p(\sqy^{(t - 1)}_{{\not{A}}}|\sqy^{(:t - 2)}).
\end{align}
In this case, the difference of the critical gradient component has the form of
\begin{equation}
    \begin{aligned}
        \E_{\sqy^{(:t - 1)}}[\bar{\Delta}] & \propto \underbrace{\expt{\sqy^{(:t - 1)}}{\mathbb{I}(\xi_{l^{(t)}} < 0)}}_{(i)} - \underbrace{\expt{\sqy^{(:t - 1)}}{\mathbb{I}(\xi_{l^{(t)}} < 0)y}}_{(ii)} \\
        & \qquad \qquad - \underbrace{\expt{\sqy^{(:t - 1)}}{\mathbb{I}(\xi_{l^{(t)}} < 0)yy}}_{(iii)} + \underbrace{\expt{\sqy^{(:t - 1)}}{\mathbb{I}(\xi_{l^{(t)}} < 0)yyy}}_{(iv)}.
    \end{aligned}
\end{equation}
Each term is computed as follows~(similar to App.~\ref{app:and}):
\begin{align}
    (i)& = \sum_{\sqy^{(: t - 2)}}p(\sqy^{(: t - 2)})\Big[  \sum_{\sqy^{(t - 1)}_{{\not{A}}}\in \bar{D}_{-3}} p_+^3 p(\sqy^{(t - 1)}_{{\not{2}}}|\sqy^{(:t - 2)})  + 3 \sum_{\sqy^{(t - 1)}_{{\not{A}}}\in \bar{D}_{-1}} p_+^2p_- p(\sqy^{(t - 1)}_{{\not{A}}}|\sqy^{(:t - 2)})\nonumber\\
     & \qquad\qquad + 3 \sum_{\sqy^{(t - 1)}_{{\not{A}}}\in \bar{D}_{ + 1}} p_+p_{-}^2 p(\sqy^{(t - 1)}_{{\not{A}}}|\sqy^{(:t - 2)}) + \sum_{\sqy^{(t - 1)}_{{\not{A}}}\in \bar{D}_{+ 3}} p_-^3 p(\sqy^{(t - 1)}_{{\not{A}}}|\sqy^{(:t - 2)}) \Big] \nonumber\\
     & = \E_{\sqy^{(:t - 2)}}[\bar{a} + 3\bar{b} + 3 \bar{c} + \bar{d}];
\end{align}
\begin{align}
    (ii) & = \sum_{\sqy^{(: t - 2)}}p(\sqy^{(: t - 2)})\Big[  \sum_{\sqy^{(t - 1)}_{{\not{A}}}\in \bar{D}_{-3}} p_+^3 p(\sqy^{(t - 1)}_{{\not{2}}}|\sqy^{(:t - 2)})  + \sum_{\sqy^{(t - 1)}_{{\not{A}}}\in \bar{D}_{-1}} p_+^2p_- p(\sqy^{(t - 1)}_{{\not{A}}}|\sqy^{(:t - 2)})\nonumber\\
         & \qquad\qquad - \sum_{\sqy^{(t - 1)}_{{\not{A}}}\in \bar{D}_{ + 1}} p_+p_{-}^2 p(\sqy^{(t - 1)}_{{\not{A}}}|\sqy^{(:t - 2)}) - \sum_{\sqy^{(t - 1)}_{{\not{A}}}\in \bar{D}_{+ 3}} p_-^3 p(\sqy^{(t - 1)}_{{\not{A}}}|\sqy^{(:t - 2)}) \Big]\nonumber\\
         & = \E_{\sqy^{(:t - 2)}}[\bar{a} + \bar{b} - \bar{c} - \bar{d}];
\end{align}
\begin{equation}
    \begin{aligned}
        (iii) & = \sum_{\sqy^{(: t - 2)}}p(\sqy^{(: t - 2)})\Big[  \sum_{\sqy^{(t - 1)}_{{\not{A}}}\in \bar{D}_{-3}} p_+^3 p(\sqy^{(t - 1)}_{{\not{2}}}|\sqy^{(:t - 2)}) \\
    & \ \ - \sum_{\sqy^{(t - 1)}_{{\not{A}}}\in \bar{D}_{-1}} p_+^2p_- p(\sqy^{(t - 1)}_{{\not{A}}}|\sqy^{(:t - 2)}) - \sum_{\sqy^{(t - 1)}_{{\not{A}}}\in \bar{D}_{ + 1}} p_+p_{-}^2 p(\sqy^{(t - 1)}_{{\not{A}}}|\sqy^{(:t - 2)}) \\
    & \ \ \ + \sum_{\sqy^{(t - 1)}_{{\not{A}}}\in \bar{D}_{+ 3}} p_-^3 p(\sqy^{(t - 1)}_{{\not{A}}}|\sqy^{(:t - 2)}) \Big]\\
    & = \E_{\sqy^{(:t - 2)}}[\bar{a} - \bar{b} -\bar{c} + \bar{d}];
    \end{aligned}
\end{equation}
and 
\begin{equation}
    \begin{aligned}
        (iv) & = \sum_{\sqy^{(: t - 2)}}p(\sqy^{(: t - 2)})\Big[  \sum_{\sqy^{(t - 1)}_{{\not{A}}}\in \bar{D}_{-3}} p_+^3 p(\sqy^{(t - 1)}_{{\not{A}}}|\sqy^{(:t - 2)}) \\
            & \ \ - 3 \sum_{\sqy^{(t - 1)}_{{\not{A}}}\in \bar{D}_{-1}} p_+^2p_{-} p(\sqy^{(t - 1)}_{{\not{A}}}|\sqy^{(:t - 2)}) + 3 \sum_{\sqy^{(t - 1)}_{{\not{A}}}\in \bar{D}_{+ 1}} p_+p_-^2 p(\sqy^{(t - 1)}_{{\not{A}}}|\sqy^{(:t - 2)}) \\
            & \ \ \ - \sum_{\sqy^{(t - 1)}_{{\not{A}}}\in \bar{D}_{+ 3}} p_-^3 p(\sqy^{(t - 1)}_{{\not{A}}}|\sqy^{(:t - 2)})\Big]\\
            & = \E_{\sqy^{(:t - 2)}}[\bar{a}- 3\bar{b} + 3\bar{c} - \bar{d}].
    \end{aligned}
\end{equation}
Combining these expressions gives us the final result
\begin{equation}
   \E_{\sqy^{(:t - 1)}}[\bar{\Delta}]\propto 8c > 0;
\end{equation}
hence the separation of the critical gradient in Thm.~\ref{thm:condition} is satisfied such that $k$-\texttt{OR} can also be learned perfectly via RL.

\paragraph{SFT.}As discussed earlier in App.~\ref{app:and}, compared to the case for fine-tuning via RL, the only difference brought by evaluating the critical gradient component for SFT is that we are now computing
\begin{equation}
    \E_{\sqx}[\gamma_{l^{(t)}}^{p}(\tilde{\sqy}^{(t - 1)})],
\end{equation}
where $\tilde{\sqy}^{(t - 1)}$ is the ground-truth of $\sqy^{(t - 1)}$, which does not change the conclusion that $\bar{\Delta} \propto 8 c >0$; thus the separation of the critical gradient component in Thm.~\ref{thm:condition_sft} is also satisfied, which further guarantees the learnability via SFT.

\section{Exact dynamics of RL CoT with  SFT-like ground truth labels}\label{app:exact-solution}
In the \textbf{Intuitive analysis} paragraphs of the main text, it is argued that the difference in the learning curve of RL and SFT stems from how the ground truths are set:   in RL the ground truth labels for the $t$-th step are computed from the output of the $(t-1)$-step,  whereas in SFT all the ground truth labels are directly computed from the initial inputs.  However, RL and SFT also differ in how the CoT outputs are generated---probabilistically for RL and deterministically for SFT.  To eliminate this confounding factor, we consider a model where outputs are generated like RL and ground truths are computed like SFT, and demonstrate that the learning curve is still step-wise just like the SFT behavior discussed in the main text.

Let us consider RL generation probability for $k$-parity (table \ref{tab:forms_funcs} and Eq.\,\eqref{eq:output_transformer}):
\begin{equation}
    p_{\mW}(y^{(t)}_{l^{(t)}}|\sqy^{(t-1)}) = \left( \sum_{i = 1}^{d_{t - 1}} {y^{(t - 1)}_i } \sigma_{ N_{t - 1} + l^{(t)}}^{N_{t - 2} + i} \right)^2
\end{equation}
where $\sigma_{j}^i:= \exp(W_{i,j})/ \sum_{m}\exp(W_{m,j})$.

Let us use  a  reward function: 
\begin{equation}
    \begin{aligned}
    R( \mW) :&= \E_{\sqx \sim \mathrm{uniform}} \E_{\sqy\sim p_{\mW}(\cdot|\sqx)}[r(\sqx, \sqy)], \\
    r(\sqx, \sqy):&= \sum_{t = 1}^{T} r_t\left(\sqy^{(t)}, \bar \sqy^{(t  )}\right),\\
     r_t\left(\sqy^{(t)}, \bar \sqy^{(t)}\right) : & = \frac{1}{k - 1}\sum_{j = 1}^{d_t} y_j^{(t)}\bar{y}_{j}^{(t)},
    \end{aligned}
\end{equation}
which is almost identical to the RL reward Eq.\,\eqref{eqn:RL_reward}, except here we use $\tilde{y}_j^{(t)}$, the correct label of $y_j^{(t)}$ computed from the initial input $\sqx$ as in SFT~(Sec.~\ref{sec:general_sft}), whereas in the RL reward we use $\bar y_j^{(t)}$, the correct label computed from $\mathbf{y}^{(:t-1)}$ (generated tokens at $(t-1)$th CoT step). We will compute the exact dynamics of 1-step and 2-step CoT in terms of simple differential equations. For more CoT steps, the same technique applies but the equations get progressively more complex.  We are going to see how step-wise learning behaviour arises from an ODE perspective. We stress that there is no teacher forcing in this setup.
\subsection{One-step CoT}
Consider a one-step chain, i.e., the bottom layer is the $d$ input tokens $\sqx$, $k$ of which are used to compute ground truth parities and the second layer consists of $k/2$ generated tokens. In this simple case, 
\begin{equation}
    p_{\mW}(y^{(1)}_{l^{(1)}} =1|\sqx) = \left( \sum_{j = 1}^{d} x_j \sigma_{d+l}^{j} \right)^2
\end{equation}
and the ground truth is 
\begin{equation}
    \tilde y_l^{(1)} = x_{i_1^l}x_{i_2^l}, \qquad l = 1,2,\ldots,k/2,
\end{equation}
where $i_1^l, i_2^l$ are the two positions of $\sqx$ tokens which $y_l$ truly depends on, and we have abbreviated $l^{(1)}$ simply as $l$.  Therefore
\begin{equation}
    \E_{\sqx,\sqy}[r(\sqx,\sqy)] = \frac{1}{k-1}\E_x\left(\sum_{l=1}^{k/2} x_{i_1^l}x_{i_2^l}\left[2\left(\sum_jx_j\sigma^j_{d+l}\right)^2 - 1\right]\right).
\end{equation}
Now use 
\begin{equation}\label{eqn:pairing_delta}
    \E_{\sqx\sim \mathrm{uniform}}[x_{i_1}x_{i_2} x_{j_1}x_{j_2}] =\delta_{i_1 j _1}\delta_{i_2 j _2} +  \delta_{i_2 j _1}\delta_{i_1 j _2}  \qquad i_1\neq i_2
\end{equation}

we obtain 
\begin{equation}\label{eqn:averaged_reward_one_step}
         \E_{\sqx,\sqy}[r(\sqx,\sqy)]  =\frac{4}{k-1} \sum_l \sigma_{d+l}^{i_1^l} \sigma_{d+l}^{i_2^l}.
\end{equation}
 Gradient decent with small learning rate easily follows:
\begin{equation}\label{eqn:eq_of_motion}
    \dot{W}_{p,l}(t) = \begin{cases}
        \frac{4 \sigma^{j_1}_{d+l}  \sigma^{j_2}_{d+l}  }{k-1}\left[1 - 2\sigma^{p}_{d+l} \right] & p \in \{i_1^l, i_2^l\},\\
        - \frac{8 \sigma^{p}_{d+l}  \sigma^{i_1^l}_{d+l} \sigma^{i_2^l}_{d+l} }{k-1} & \text{ otherwise. }
    \end{cases}
\end{equation}
This set of ODE has a simple solution for equal weight initializations 
\begin{equation}
    e^{{W}_{p,l}(0)} = \frac{1}{z_0},
\end{equation}
which is 
\begin{equation}
   \sigma_{d+l}^{p}(t) =  \begin{cases}
         \frac{(z(t)/z_0)^{\frac{d}{2}}}{(d-2) + 2(z(t)/z_0)^{\frac{d}{2}}},\qquad\qquad p \in \{i_1^l, i_2^l\}\\
         \\
       \frac{1}{(d-2) + 2(z(t)/z_0)^{\frac{d}{2}}}, \qquad\qquad p \notin\{ i_1^l, i_2^l\},
    \end{cases}
\end{equation}
where $z(t)$ is solved as 
\begin{equation}
    \frac{t}{k-1} = \frac{(2 - d)^3}{8d} \left[\left(\frac{z_0}{z}\right)^{d} -1\right]  - \frac{3(2 - d)^2}{2d}  \left[\left(\frac{z_0}{z}\right)^{\frac{d}{2}} -1\right]  + \frac{2}{d} \left[\left(\frac{z}{z_0}\right)^{\frac{d}2} -1\right] + 
\frac{3}{2}  (d-2) \log \frac{z}{z_0}.
\end{equation}
Since $d\geq 2$,   $t\to +\infty$ implies that $z \to +\infty$. The right-hand side is dominated by the third term and $z/z_0\sim \left[\frac{d}{2(k-1)}t\right]^{2/d}$ Therefore as $t\to\infty$, 
\begin{equation}
      \sigma_{d+l}^{i_1^l} =     \sigma_{d+l}^{i_2^l} \sim \frac{1}{2}\left(1- \frac{d-2}{d} \frac{k-1}{t}\right),  \sigma_l^{p} \sim \frac{k-1}{d}\frac{1}{t}.
\end{equation}
The dynamics indeed converges to the desired outcome.
\subsection{Two-step CoT and the step-wise learning behavior}
The two-step reward is 
\begin{equation}
    \E[r(\sqx,\sqy)] =  \E[r_{t=1}(\sqx,\sqy)] + \E[r_{t=2}(\sqx,\sqy)].
\end{equation}
For ease of notation we denote $l^{(1)} = l_1$ and $l^{(2)} = l_2$, and $y$.
The first term is just what we computed in the one-step case:
\begin{equation}
    \E[R_{t=1}(\sqx,\sqy)] = \frac{4}{k-1} \sum_{l^{(1)} =1 }^{k/2} \sigma_{d+l_1}^{i_1^{l_1}}\sigma_{d+l_1}^{i_2^{l_1}}.
\end{equation}
We only need to compute $\E[R_{t=2}(\sqx,\sqy)]$:
\begin{equation}
\begin{split}
     &\E[r_{t=2}(x,y)] = \frac{1}{k-1}\E\left[\sum_{l_2=1}^{k/4} \tilde y^{(2)}_{l_2}  y^{(2)}_{l_2}\right] \\
     =& \frac{2}{k-1}\sum_{l_2}\E\left[x_{i_1^{l_2}} x_{i_2^{l_2}}x_{i_3^{l_2}}x_{i_4^{l_2}}\left(\sum_{l_1 =1}^{k/2} y_{l_1}^{(1)}\sigma_{d+k/2+l_2}^{l_1}\right)^2\right]\\
     =& \frac{2}{k-1}\sum_{l_2}
    \sum_{l_1, \tilde{ l_1}} \sigma_{l_2}^{l_1} \sigma_{l_2}^{\tilde{l}_1} \E[x_{i_1^{l_2}} x_{i_2^{l_2}}x_{i_3^{l_2}}x_{i_4^{l_2}}   y_{l_1}^{(1)}  y_{\tilde{l}_1}^{(1)}] 
     \end{split}
\end{equation} 
where $x_{i_1^{l_2}}, x_{i_2^{l_2}},x_{i_3^{l_2}},x_{i_4^{l_2}} $ are the four bits at $t=0$ that $y_{l_2}$ truly depends on (so $i_1^{l_2}, i_2^{l_2},  i_3^{l_2}, i_4^{l_2}$are all different),  and $\hat y_{\tilde{l}_1}$ are the sampled output of first step of the CoT. Note since $i_1, i_2, i_3, i_4$ are all different,  only the  quartic term in $x$ of the probablity of $  y_{l_1}^{(1)}  y_{\tilde{l}_1}^{(1)}$  can contribute a nonzero answer, that is, it must be that $l_1 \neq \tilde{l}_1$. 
So we can further write 
\begin{equation}
       \E[r_{t=2}(x,y)] = \frac{8}{k-1}\sum_{l_2=1}^{k/4} \sum_{l_1\neq \tilde{l}_1}^{k/2} \sigma_{d+k/2+l_2}^{l_1} \sigma_{d+k/2+l_2}^{\tilde{l}_1} \E\left[x_{i_1^{l_2}} x_{i_2^{l_2}}x_{i_3^{l_2}}x_{i_4^{l_2}} (\sum_{i}^dx_i  \sigma^i_{d+l_1})^2  (\sum_{ \tilde i}^d  x_{\tilde i}\sigma^{\tilde i}_{d+\tilde l_1})^2 \right].
\end{equation}
To compute this we can again use the pairing identity Eq.\,\eqref{eqn:pairing_delta}, and obtain 
\begin{equation}
    \E[r_{t=1}]+ \E[r_{t=2}] = \frac{4}{k-1}\sum_{l_1 = 1}^{k/2}\sigma_{d+l_1}^{i_1^{l_1}}\sigma_{d+l_1}^{i_2^{l_1}} +  \frac{8}{k-1}\sum_{l_2=1}^{k/4} \sum_{l_1\neq \tilde{l}_1}^{k/2}\sigma_{d+k/2+l_2}^{l_1} \sigma_{d+k/2+l_2}^{\tilde{l}_1}  \sigma_{d+l_1}^{\{i_1^{l_2}} \sigma_{d+l_1}^{i_2^{l_2}}\sigma_{d+\tilde{l}_1}^{i_3^{l_2}}\sigma_{d+\tilde{l}_1}^{i_4^{l_2}\}}
\end{equation}
where $\sigma_{l_1}^{\{i_1} \sigma_{l_1}^{i_2}\sigma_{\tilde{l}_1}^{i_3}\sigma_{\tilde{l}_1}^{i_4\}}$  denotes the sum over all 24 permutations of the four superscripts. 

We can see how step-wise learning emerges when $d\gg 1$:
at random intitialization, each of $\sigma_{d+l_1}^i$ roughly has a size of $1/d$ and each of $\sigma_{d+k/2+l_2}^{l_1}$ has a size of $1/k$. Therefore the first term is of order $1/d^2$ and the second term is of order $1/d^4$,  so there is a scale separation for large $d$.  Because of this,  at earlier time the dynamics is dominated by the first term, and only at late time the second term becomes important. That is, the system will first learn the first layer of CoT until the first layer is very close to the final correct weights:
\begin{equation}\label{eqn:ground_truth_first_step}
    \sigma_{d+l_1}^i \approx\begin{cases}
        1/2, \qquad \text{if } i = i_1^{l_1}, i_2^{l_1}  \\
        0, \qquad\quad \text{otherwise}.
    \end{cases} 
\end{equation}
At this point the gradient contribution of the first term is very close to zero, and the second term becomes non-negligible, and essentially becomes 
\begin{equation}
    \E[r_{t=2}] \propto \sum_{l_2=1}^{k/4} \sum_{l_1\neq \tilde{l}_1}^{k/2}\sigma_{d+k/2+l_2}^{l_1} \sigma_{l_2}^{d+k/2+\tilde{l}_1} = \text{constant}\times \sum_{l_2=1}^{k/4}\sigma_{d+k/2+l_2}^{l_1} \sigma_{d+k/2+l_2}^{\tilde{l}_1}.
\end{equation}
Note that the sum over $l_1, \tilde l_1$ disappears because the Eq.\,\eqref{eqn:ground_truth_first_step} determines $l_1, \tilde l_1$ as a function of $l_2$ (for nonzero $\sigma_{l_2}^{l_1}, \ \sigma_{l_2}^{\tilde{l}_1} $). This becomes essentially the same form of reward as $\E[r_{t=1}]$, so at late time the dynamics basically repeats the first step and the desired weights will be learned.

This scale separation behavior continues for more CoT steps, as each new step brings out a reward term with higher powers in $\sigma_{d+l_1}^i$. Therefore the system learns step by step.
\section{Extension to Serial CoT}
\label{app:serial_cot}
In this section, we analyze the masked serial CoT extension~(Fig.~\ref{fig:serial_cot}) beyond the complete binary-tree decomposition of the sparse Boolean function. For simplicity, we focus on $k$-\texttt{PARITY}. $k$-\texttt{AND} and \texttt{OR} should have similar formulations. Our goal is to show that, under uniform initialization and sign-gradient updates, the two true parent coordinates receive positive gradient updates, while all non-parent coordinates receive negative updates, which leads to the learnability results in Thm.~\ref{thm:condition} and \ref{thm:condition_sft}. Overall, our theoretical conclusions will be 
\begin{itemize}
    \item For both RL and SFT with teacher forcing, the transformer can provably learn $k$-\texttt{PARITY} in one sign gradient update, exhibiting simultaneous learning as in Thm.~\ref{thm:condition};
    \item For SFT without teacher forcing, the direct learning fails due to noisy later-CoT-token update; however, with a filtering that suppresses the noisy later-CoT-token update when learning earlier tokens, the transformer exhibits step-wise learning that learns one CoT token per sign-gradient update. 
\end{itemize}

\paragraph{Setup.}Given $\sqx = (x_1, \ldots, x_d)$, where $x_1,\ldots,x_d \overset{\mathrm{i.i.d.}}{\sim} \mathrm{uniform}\{\pm1\}$, and the relevant bits set $B = \{i_1, \ldots, i_k\} \subseteq [d]$ as in Sec.~\ref{sec:bool_funcs}, let the $k$-sparse Boolean function be solved by following a CoT-style reasoning chain~(Fig.~\ref{fig:serial_cot_copy})
\begin{equation}
\label{eq:def_serial_cot_decomp}
    \textbf{Serial CoT decomposition: } \ y_1=\phi_2(x_{i_1},x_{i_2}),\ y_{j}=\phi_2(y_{j-1},x_{i_{j+1}}),\ j = 2, \ldots, k - 1,
\end{equation} where $\sqx$ is the input, $\sqy$ is a serial CoT, and we do not require $k = 2^{T}$ for some integer $T$. For the transformer, we still use the transformer defined in Sec.~\ref{sec:def_transformer} but only replace the  tree mask there with a more flexible and weaker serial mask~(Fig.~\ref{fig:serial_cot_mask}) that, when generating $y_j$, allows the model to attend to all input bits of $\sqx$ but only to the most recent CoT step $y_{j-1}$, i.e., at step \(t+1\), the model predicts \(y_{j+1}\) from the visible set $\{x_1, \ldots, x_d, y_{j}\}$. This matches the serial recursion and no longer strictly enforces the exact binary tree in Fig.~\ref{fig:bool_funcs}. Formally, given the activation function $\psi(\cdot)$ and the input sequence $\sqx$ along with the CoT reasoning chain $\sqy$ such that $\sqz=(\sqx, \sqy)$, the transformer has the following output:
\begin{equation}
\label{eq:def_transformer_serial_cot}
    [f(\sqz; \mW)]_{d + j} = \psi\left( \xi_j\right), \quad \xi_j: =y_{j - 1}\sigma_{d + j}^{d + j - 1} +  \sum_{i = 1}^{d}x_i\sigma_{d + j}^{i}
\end{equation}
For convenience, define the sum of all bits of $\sqx$
\begin{equation}
\label{eq:def_S}
    S:=\sum_{m=1}^d x_m, \qquad M:=d+1,
\end{equation}

then at uniform initialization, every visible source has attention weight \(\sigma_{j}^{i} = 1/M\). Therefore the the transformer output Eq.~\eqref{eq:def_transformer_serial_cot} will be 
\begin{equation}
\label{eq:sa_serial_cot}
\xi_j =  \frac{S+y_j}{M}.
\end{equation}
Below we let $\phi_2(a,b)=ab$ to specifically consider $k$-\texttt{PARITY}, so $\psi(a)= a^2$. 
\begin{figure}
    \centering
    \begin{subfigure}{0.55\linewidth}
    \centering
        \includegraphics[width=0.99\linewidth]{figs/serial_cot.pdf}
        \caption{Serial CoT-style decomposition of learning a $k$-sparse Boolean function $\Phi_k(\sqx)$ with a random set $B\subseteq [d]$~(shaded boxes in $\sqx$) into solving sub-tasks by following a reasoning chain $\sqy$ (green boxes).}
        \label{fig:serial_cot_copy}
    \end{subfigure}
    \hfill
    \begin{subfigure}{0.44\linewidth}
    \centering
        \includegraphics[width=0.44\linewidth]{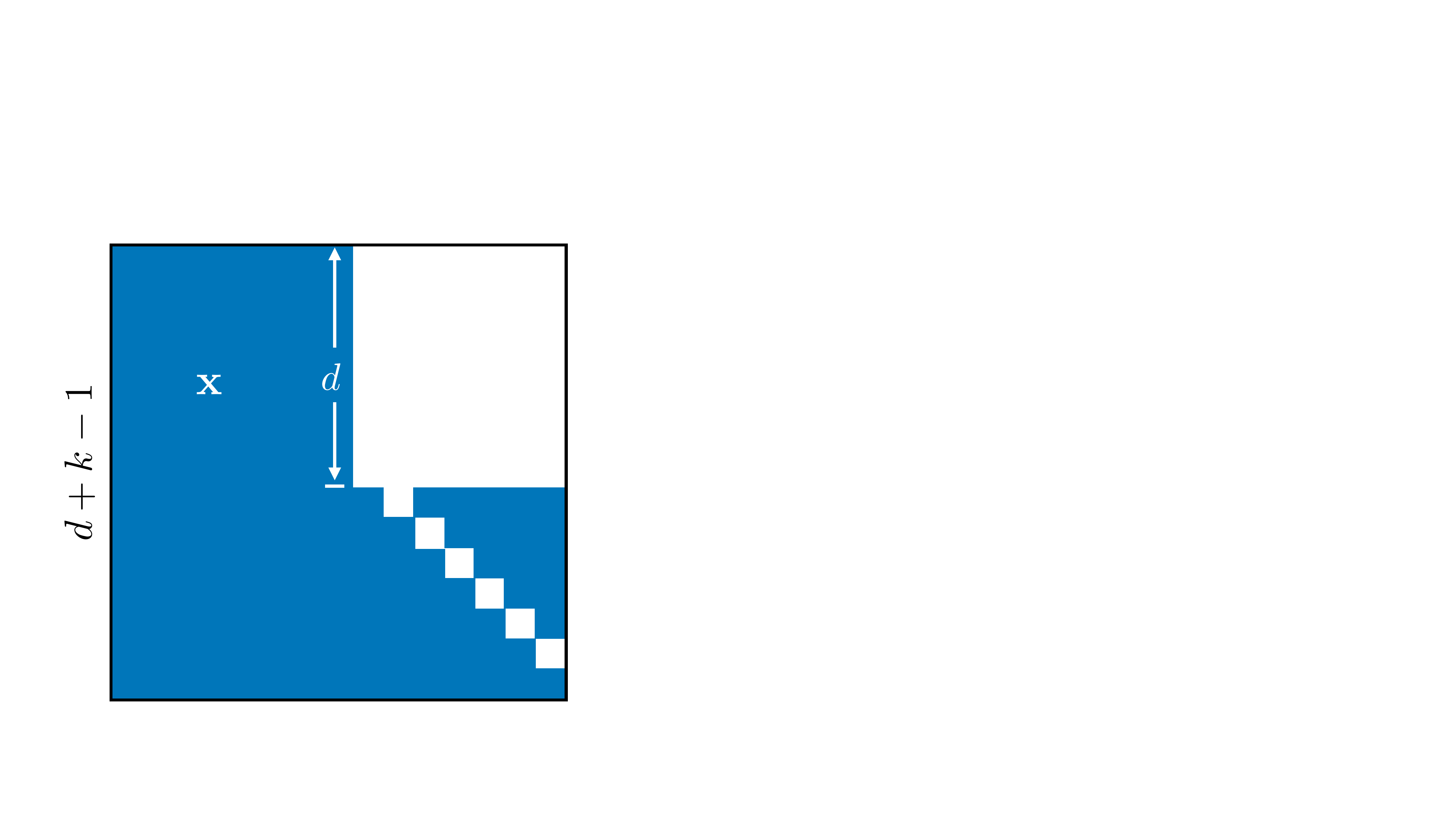}
    \caption{``Pretrained'' mask for the transformer in the serial CoT setting. Each CoT token can attend to all input bits and its last CoT step.}
    \label{fig:serial_cot_mask}
    \end{subfigure}
    \caption{Learning $k$-sparse Boolean functions in a serial CoT manner.}
\end{figure}
\subsection{RL Fine-tuning}
We first introduce the problem setup for RL fine-tuning.  Then, similar to Sec.~\ref{sec:general_rl}, we still view each input $\sqz = (\sqx, \sqy)$ as a state but now each output of the transformer is a distribution of the action $y_j$~(a reasoning step of the CoT chain) under the state $\sqz$, i.e., $p_{\mW} ( y_{j}  = 1| \sqz)  =  \left[f(\sqz; \mW)\right]_{d + j}$. Thus, now the distribution over one entire CoT reasoning chain $\sqy$ conditioned on the initial state $\sqx$ is
\begin{equation}
    p_{\mW}(\sqy|\sqx) = \prod_{j = 1}^{k - 1}p_{\mW}(y_j|\sqx,  y_{j - 1}).
\end{equation}
Then, different from Sec.~\ref{sec:general_rl}, starting from $\sqz = (\sqx, {\color{gray}\sqy})$ with ${\color{gray}\sqy}$ initialized as 0, the transformer solves $\Phi_k(\sqx)$ by generating a reasoning sequence where each sampled action $\hat{y}_j$ updates the state from $\hat{\sqz} = (\sqx, \hat{\sqy}_{:j - 1}, {\color{gray} \sqy^{(t:)}})$ to $(\sqx, \hat{\sqy}_{:j - 1}, \hat{y}_j, {\color{gray} \hat{\sqy}_{:j + 1}})$ in a CoT manner. And now RL
still aims to maximize the expected reward but with a new $t$-th step process reward:
\begin{equation}\label{eqn:RL_reward_serial_cot}
    \begin{aligned}
    \max_{\mW} \mathcal{R}(\mW): &= \max_{\mW} \expt{\sqx}{R(\sqx; \mW)},\\ 
    R(\sqx; \mW) :&= \E_{\sqy\sim p_{\mW}(\cdot | \sqx) }[r(\sqx, \sqy)], \\
    r(\sqx, \sqy):&= \sum_{t = 1}^{k - 1} r_t  = \sum_{t = 1}^{k - 1} \frac{1}{k - 1}  y_t \bar{y}_{t},
    \end{aligned}
\end{equation}
where $r: \{-1, + 1\}^{d + k - 1} \to [-1, 1]$ is the reward function,  
$\bar{y}_{t} $ is the correct label of $y_t $
given $\sqy_{:t - 1}$. We still consider RL optimized by policy gradient to approximate the optimal parameters. We summarize the learnability result for this setup as follows.
\begin{prop}[Learnability via RL for serial CoT]
\label{prop:rl-masked-serial-parity}
 Consider the serial CoT decomposition of $k$-\texttt{PARITY} and RL with process reward. Let \(W(0)=\mathbf 1\) be the uniform initialization, let the parity activation function be $\psi(z)=z^2$, and let \(\mW^\star\) be the optimal serial attention parameters. Then the separation condition for the gradient is satisfied in the sense that, for the current token $y_{j}$, the gradient always has positive value at relevant tokens ($x_{i_{j}}$ and $ y_{j - 1}$) while having negative values at irrelevant tokens (all other $x_i$'s for $i\neq i_{j}$). As a result, one sign-policy-gradient update
\[
 \mW(1)
=
    \mW(0)+\eta\,\sign(\nabla_{\mW} \mathcal{R}(\mW(0)))
\]
with $\eta=\Omega(\log(d/\epsilon))$ satisfies

\[
\|\softmax(\mW(1))-\softmax(\mW^\star)\|_1\le \epsilon.
\]

Consequently, RL with process reward learns all serial CoT steps simultaneously in one sign-gradient update.
\end{prop}
\begin{proof}
    We start with the learning dynamics. In this setup, according to Eq.~\eqref{eq:sa_serial_cot}, the model samples $y_{j + 1} =1$ with the probability (recall that $S = \sum_{i}x_{i}$)
\begin{equation}
    p_{\mW}(y_{j + 1}=1\mid \sqx,y_j) = \psi(\xi_j) = \left(\frac{S+y_j}{M}\right)^2.
\label{eq:prob_serial_cot}
\end{equation}
Since \(y_{j + 1}\in\{\pm1\}\), we can derive the expectation of $y_{j + 1}$ given $(\sqx, y_j)$ under the model $p_{\mW}(\cdot \mid \sqx, y_j)$:
\[
\mathbb E[y_{j + 1}\mid \sqx, y_j]
=
2\left(\frac{S+y_j}{M}\right)^2-1,
\]
which only depends on $\sqx$ via the sum $S$, and thus we simplify the notation as $ \mathbb E[y_{j + 1}\mid S, y_j] $ such that
\begin{equation}
\begin{aligned}
    \mathbb E[y_{j + 1}\mid S = s, y_j] & = 2\left(\frac{s + y_j}{M}\right)^2-1\\
    & = \left(\frac{2(s^2+1)}{M^2}-1\right) + \frac{4s}{M^2}y_{j-1}\\
    & = A(s) + B(s)y_{j - 1},
    \label{eq:def_As_Bs}
\end{aligned}
\end{equation}
where we define \begin{equation}
    A(s):=\frac{2(s^2+1)}{M^2}-1,
\qquad
B(s):=\frac{4s}{M^2}.
\end{equation}
As a result, we can calculate the expectation of $y_j$ given $\sqx$ with $S=s$ using the following recursive relation:
\begin{equation}
    m_j(s) := \mathbb E[y_{j}\mid S = s] =
\mathbb E[
A(s)+B(s)y_{j-1}
\mid S=s
] = A(s)+B(s)m_{j-1}(s).
\end{equation}
Now consider the prediction of \(y_{j + 1}\). For ease of notation, let the current relevant input parent be
\[
x:=x_{i_j}.
\]
The local parity target is then $\bar y_{j + 1} = y_j x.$ At uniform initialization, the centered softmax-column gradient for a visible token \(z\) has the form (computed from Eq.~\eqref{eq:prob_serial_cot}) 
\begin{equation}
   \textbf{Gradient direction: } \ G_z \propto \mathbb E\!\left[ 2\xi_j\,\bar y_{j + 1}\,(z-\xi_j) \right].
\end{equation}
Therefore
\begin{align}
    G_x & \propto \mathbb E\!\left[ 2\xi_j\,y_j x\,(x-\xi_j) \right], \label{eq:G_x_def}\\
    G_y & \propto \mathbb E\!\left[  2\xi_j\,y_j x\,(y_j-\xi_j) \right],\label{eq:G_y_def}
\end{align}
where \(G_y\) denotes the gradient on the previous CoT parent \(y_j\) (we are focusing on $y_{j + 1}$ so we ignore $j$). To prove the separation condition of the gradient, we need to show that
\[ 
    G_x>0, \quad G_y>0, \quad \text{ and } g_{n} < 0 \ \text{ for all other tokens }.
\]
We prove these three conditions in the following.

\paragraph{Positivity of the previous CoT token gradient: $g_{y} > 0$.}
We first analyze \(G_y\). Conditional on \(S=s\), by exchangeability of the input coordinates,
\begin{equation}
    \mathbb E[x\mid S=s]=\frac{s}{d}.\label{eq:expct_x}    
\end{equation}
Using the definition $\xi_j=\frac{s+y_j}{M},$
and averaging over $y_j$ conditioning on $S = s$, one obtains (by direct expansion of Eq.~\eqref{eq:G_y_def})
\[ 
    G_y = \frac{C}{dM^2} \mathbb E\!\left[ (d-1)S^2+S(d-S^2)m_j(S) \right],
\]
for some constant \(C>0\) independent of the sign analysis. Fix $s$ and $j$, we abbreviate the notation as
\begin{align}
    m & :=m_j(s)\in[-1,1],\\
    F_y(s,m) & :=(d-1)s^2+s(d-s^2)m.
\end{align}
To claim $G_y >0$, we need to show
\[ 
    F_y(s,m)\ge0 \quad \text{for all }s\in[-d,d],\ m\in[-1,1].
\]

Specifically, let \(a:=|s|\). Since \(|m|\le1\),
\[ 
    F_y(s,m) \ge (d-1)a^2-a|d-a^2|.
\]
If \(a^2\le d\), then
\[
    (d-1)a^2-a(d-a^2) = a(a-1)(a+d)\ge0.
\]
If \(a^2\ge d\), then
\[
    (d-1)a^2-a(a^2-d) = a(d-a)(a+1)\ge0.
\]
Therefore
\[
    F_y(s, m) \ge 0
\]
for every admissible \(s,m\). Hence
\[
    G_y \ge 0.
\]
Moreover, for \(d\ge2\), the inequality is strict on a set of positive probability. Thus
\begin{equation}
\label{eq:a6}
    G_y>0.
\end{equation}

\paragraph{Positivity of $G_x$.}
We now analyze $G_x$. Starting from Eq.~\eqref{eq:G_x_def} 
and again conditioning on $S=s$, using Eq.~\eqref{eq:expct_x},
we obtain
\begin{equation}
\label{eq:a7}
    G_x = \frac{C}{dM^2} \mathbb E \left[ d(d+1)-2S^2 + S(d^2+d-1-S^2)m_j(S) \right],
\end{equation}
for some constant $C>0$. Thus the sign of $G_x$ is the sign of
\begin{equation}
\label{eq:a8}
    \mathbb E \left[ d(d+1)-2S^2 + S(d^2+d-1-S^2)m_j(S) \right].
\end{equation}
We prove that this expectation is strictly positive for every $j\ge0$, which guarantees $G_x > 0$. For $a>0$, define
\begin{align}
    \alpha(a) & := d(d+1)-2a^2,\\
    \beta(a) & := a(d^2+d-1-a^2).
\end{align}
The contribution to the expectation in Eq.~\eqref{eq:a8} at $S=a$ are
\begin{equation}
    \begin{cases}
        \alpha(a)+\beta(a)m_j(a), & S = a\\
        \alpha(a)-\beta(a)m_j(-a), & S = -a.
    \end{cases}
\end{equation}
We lower-bound their sum uniformly over $j$. From the recursion Eq.~\eqref{eq:def_As_Bs} and the fact that $m_{j-1}(s)\in[-1,1]$, for $a > 0$,
\begin{equation}
    \begin{cases}
        m_j(a) \ge A(a)-B(a) = \frac{2(a-1)^2}{M^2}-1,\\
        m_j(-a) \le A(a)+B(a) = \frac{2(a+1)^2}{M^2}-1.
    \end{cases}
\end{equation}
Therefore,
\begin{equation}
\begin{aligned}
    \left[\alpha(a)+\beta(a)m_j(a)\right] + \left[\alpha(a)-\beta(a)m_j(-a)\right] & \ge 2\alpha(a) + \beta(a)\left(
                    \frac{2(a-1)^2}{M^2}-1
            \right) - \beta(a) \left( \frac{2(a+1)^2}{M^2}-1 \right)\\
        & = 2\alpha(a) + \frac{2\beta(a)}{M^2} \left((a-1)^2-(a+1)^2\right)\\
        & = \frac{2}{M^2} \left[ 4a^4 - (6d^2+8d-2)a^2 + d^4+3d^3+3d^2+d \right].
\end{aligned}
\label{eq:bound_alpha_beta}
\end{equation}
This bound is uniform in \(j\). Now average over the symmetric law of $S$. Since $S$ and $ - S$ have the same probability, Eq.~\eqref{eq:bound_alpha_beta} implies
\begin{equation}
\label{eq:a12}
\begin{aligned}
     \mathbb E \left[ d(d+1)-2S^2 + S(d^2+d-1-S^2)m_r(S) \right]  \ge \frac{1}{M^2} \mathbb E \left[ 4S^4 - (6d^2+8d-2)S^2 + d^4+3d^3+3d^2+d \right].
\end{aligned}
\end{equation}
Because  $S=\sum_{m=1}^d x_m$, with i.i.d. Rademacher $x_m$, we have
\begin{equation}
    \mathbb E[S^2]=d, \qquad \mathbb E[S^4]=3d^2-2d.
\label{eq:a13}
\end{equation}
Substituting Eq.~\eqref{eq:a13} into \eqref{eq:a12}, we obtain
\[
\begin{aligned} 
    & \mathbb E \left[ d(d+1)-2S^2 + S(d^2+d-1-S^2)m_j(S) \right] \\
    & \ge \frac{1}{M^2} \left[ 4(3d^2-2d) - (6d^2+8d-2)d + d^4+3d^3+3d^2+d \right] \\
    & = \frac{1}{M^2} \left[ d^4-3d^3+7d^2-5d \right] \\
    & = \frac{d(d-1)(d^2-2d+5)}{M^2}. 
\end{aligned}
\]
Since $d\ge2$,
\[
    d(d-1)(d^2-2d+5)>0.
\]
Therefore the expectation in Eq.~\eqref{eq:a8} is strictly positive and therefore 
\begin{equation}
\label{eq:a15}
    G_x>0 \qquad \text{for every }j\ge0.
\end{equation}

\paragraph{Negativity of irrelevant tokens gradients}
Let \(G_n\) denote the gradient on a non-parent raw input coordinate \(x_n\), \(n\neq i_j\). By exchangeability of the non-parent coordinates, all such gradients are equal. Moreover, the softmax column gradient sums to zero as in Eq.~\eqref{eq:inter_cal_grad_prob}:
\[
    G_x+G_y+\sum_{n\neq i_j}G_n=0.
\]
There are \(d-1\) non-parent raw-input coordinates, so
\[
    G_x+G_y+(d-1)G_n=0.
\]
By Eq.~\eqref{eq:a6} and Eq.~\eqref{eq:a15},
\[
    G_x>0,
    \qquad
    G_y>0.
\]
Therefore
\begin{equation}
(d-1)G_n=-(G_x+G_y)<0 \implies G_n < 0.    
\end{equation}

\paragraph{Conclusion}

For the masked serial parity model at uniform initialization, for every serial step \(j\ge0\), 
\begin{equation}
    G_x>0, \qquad  G_y>0, \qquad G_n<0.
\end{equation}
Thus the two true parent coordinates receive positive sign-gradient updates, while every non-parent raw-input coordinate receives a negative sign-gradient update. As a result, one step of sign gradient update enables the transformer to learn all the CoT steps simultaneously by directly following the proof of Thm.~\ref{thm:condition} (\textbf{Step (ii)} of App.~\ref{app:proof_rl_thm}). 
\end{proof}

\subsection{SFT}
\label{app:filtered-sft-masked-serial}
In this section, we analyze SFT without teacher forcing, and we stress that the results for SFT with teacher forcing can be easily generalized. We show that unfiltered SFT suffers from noisy gradients on later CoT tokens, whereas a filtered SFT procedure learns the serial CoT step by step. More precisely, after the previous CoT token has been learned, the filtered SFT gradient for the next token has the correct sign pattern: the two true parent coordinates receive positive sign-gradient updates, and all non-parent coordinates receive negative sign-gradient updates.

Consider the serial CoT decomposition of \(k\)-parity. Define the ground truth label of the CoT chain given $\sqx$ for $s = 1, \ldots, k - 1$
\[
    \tilde{y}_s=\prod_{r=1}^{s+1}x_{i_r}.
\]
For the ``pretrained'' transformer Eq.~\eqref{eq:def_transformer_serial_cot}, when predicting $y_1$, the visible set is $\{x_1,\ldots,x_d\},$ whereas when predicting $y_s$ for $s\ge2$, the visible set is
\[
    \{x_1,\ldots,x_d,\hat{y}_{s - 1}\}.
\]
Similar to Sec.~\ref{sec:general_sft}, for a generated token $y_s$, write the model score as
\[
    \hat{q}_s=2[f(\hat z;W)]_{d + s}-1, \quad \hat{y}_s=\sign(\hat{q}_s).
\]
Using the hinge loss $\ell_s(\mW) = \max\{0,1-\hat{q}_s\tilde{y}_s\},$ we write the token-level SFT objective as
\[
    L_s(\mW)=\mathbb E_x[\ell_s(\mW)],
\]
which gives us the objective for SFT without teacher forcing:
\[
    L(\mW)=\sum_{s=1}^{k-1}L_s(\mW),
\]
where each $\hat{q}_s$ is evaluated on the model-generated prefix. In contrast, filtered SFT uses a phase-wise objective: at phase $s$, we optimize only $ L_s(\mW)$ and suppress all later-token losses
\[
    L_{s+1}(\mW),\ldots,L_{k-1}(\mW).
\]
We analyze sign-gradient descent, and we will study three cases:
\begin{enumerate}
    \item SFT without teacher forcing and without filtering, where the learning is hard.
    \item SFT without teacher forcing but with filtering that suppresses noisy later token updates, where the CoT is learned step by step.
    \item SFT with teacher forcing, where the CoT is learned simultaneously. 
\end{enumerate}

\subsubsection{\textbf{SFT without Teacher Forcing Has Noisy Later-token Updates}} For $s\ge2$, the model predicts $y_s$ from the generated previous token $\hat{y}_{s - 1}.$
However, the supervised target is
\begin{equation}
\begin{aligned}
    \tilde{y}_s & =\tilde{y}_{s-1}x_{i_{s+1}} \\
    & = \Delta_{s-1}\hat{y}_{s - 1}x_{i_{s+1}},
\end{aligned}
\end{equation}
where we define the prefix-correctness sign 
\[
    \Delta_{s-1}:=\hat{y}_{s - 1}\tilde{y}_{s-1}\in\{\pm1\}.
\]
Therefore, relative to the clean local target 
\[
    \hat{y}_{s - 1}x_{i_{s+1}}
\]
given the generated $\hat{y}_{s- 1}$,  the SFT label is multiplied by the noise sign $\Delta_{s-1}$. Consequently, the later-token SFT update has the form
\[
    \text{SFT direction for token }s = \Delta_{s-1} \cdot \text{clean local direction}.
\]
If $\hat{y}_{s - 1}$ has not yet been learned, then $\Delta_{s-1}$ can be weakly correlated with the clean direction or even point in the wrong direction. Thus SFT without teacher forcing and without filtering tries to train later CoT tokens before their conditioning prefixes are reliable, producing noisy later-token gradients. Therefore, the learning is hard.

\subsubsection{\textbf{SFT without Teacher Forcing But with the Clean Local Signal under Filtering}} Filtered SFT avoids the noisy later token update issue by suppressing token $s$ until token $s - 1$ has been learned. At phase \(s\), suppose inductively that the previous token has already been learned in the sense that
\[
    \hat{y}_{s - 1}=\tilde{y}_{s-1} \implies  \tilde{y}_s = \tilde{y}_{s-1}x_{i_{s+1}} = \hat{y}_{s - 1}x_{i_{s+1}}.
\]
Thus, at phase $s$, the ground truth label $ \tilde{y}_s$  agrees exactly with the clean local  target $\hat{y}_{s - 1}x_{i_{s+1}}$. To prove the learnability, we need to show that the $s$-step SFT gradient has positive value at these two positions $\hat{y}_{s - 1}$ and $x_{i_{s+1}}$ (i.e., the relevant bits) and has negative value at all other positions. Then applying the same analysis as in Thm.~\ref{thm:condition_sft}, we will be able to confirm that CoT chain is learned step by step. Define the descent direction
\begin{equation}
\label{eq:def_grad_sft_serial_cot}
    g_{p, s}:=-\frac{\partial L_s(\mW)}{\partial W_{p,s}}
\end{equation}

Thus $g_{p,s}>0$ means that sign-gradient descent increases the attention weight (and thus the logit) $W_{p,s}$. We summarize our conclusion in the following theorem. The proof is deferred to App.~\ref{app:proof_sft_serial}.
\begin{prop}[SFT without teacher forcing but with filtering learns $k$-\texttt{PARITY} step by step]
\label{thm:filtered-sft-serial} Consider the serial CoT parity decomposition Eq.~\eqref{eq:def_serial_cot_decomp}. At phase $s$, suppose that all later-token losses are suppressed by an appropriate filtering. If the previous CoT token has been learned $\hat{y}_{s - 1}=\tilde{y}_{s-1}$ for $s \ge 2$, then the sign-gradient direction satisfies the following.
\begin{itemize}
    \item For $s = 1$, $g_{i_1,1}>0, \ g_{i_2,1}>0, \  g_{p,1}<0 \quad \text{for all }p\notin\{i_1,i_2\}.$
    \item For $s \ge 2$, $g_{i_{s+1},s}>0, \  g_{d + s - 1,s}>0, \  g_{p,s}<0 \quad \text{for every irrelevant input token } x_p.$ Moreover, $g_{i_{s+1},s}=g_{d + s - 1,s}.$
\end{itemize}
Consequently, one sign-gradient update increases exactly the two relevant positions and decreases all the other irrelevant positions, and thus the CoT chain is learned step by step.
\end{prop}
\subsubsection{\textbf{SFT with  Teacher Forcing} }The condition of learning the $s$-th token in Prop.~\ref{thm:filtered-sft-serial} is that its previous token $\hat{y}_{s - 1}$ is correctly learned, which is automatically satisfied by SFT with teacher forcing across the whole CoT chain. This is because SFT with teacher forcing only learns $y_{s}$ based on the correct previous token $\tilde{y}_{s - 1}$. Therefore, all steps are learned simultaneously.
\subsubsection{\textbf{Proof of Theorem~\ref{thm:filtered-sft-serial}}}
\label{app:proof_sft_serial}
\textit{Poof: } The proof consists of two parts: the learning of the first CoT token $y_1$ and that of later CoT tokens with $s \ge 2$, which are studied as follows.
\paragraph{Learning of the first CoT token $y_1$.}We first study how the first CoT token  $\tilde{y}_1=x_{i_1}x_{i_2}$ is learned, where the visible set is $\{x_1,\ldots,x_d\}.$
Recall that  $S:=\sum_{m=1}^d x_m.$ At uniform initialization,
\[
    \xi_1=\frac{S}{d}.
\]
For a visible bit $x_p$, the SFT descent direction is, up to a positive constant,
\begin{equation}
\begin{aligned}
    g_{p,1}  \propto \mathbb E\left[ 2\xi_1\tilde{y}_1(x_p-\xi_1) \right]  = \mathbb E\left[ 2\xi_1x_{i_1}x_{i_2}(x_p-\xi_1) \right].
\end{aligned}
\end{equation} 
First consider \(p=i_1\): 
\begin{equation}
\begin{aligned}
    g_{i_1,1}  & \propto    \mathbb E[\xi_1 x_{i_2}] - \mathbb E[\xi_1^2x_{i_1}x_{i_2}]\\
                &  = \frac{1}{d} - \frac{\mathbb E[S^2x_{i_1}x_{i_2}]}{d^2}.
\end{aligned}
\end{equation}
The only terms in \(S^2\) that survive multiplication by \(x_{i_1}x_{i_2}\) and expectation are the two cross terms \(x_{i_1}x_{i_2}\) and \(x_{i_2}x_{i_1}\). Hence $\mathbb E[S^2x_{i_1}x_{i_2}]=2,$ and therefore
\begin{equation}
    g_{i_1,1} \propto \frac1d-\frac2{d^2} = \frac{d-2}{d^2} > 0 \  \text{ for  } d> 2.
\end{equation} 
By symmetry, we can easily conclude that
\[
    g_{i_2,1}>0.
\]
Now take \(p\notin\{a,b\}\). Then, by parity orthogonality,
\[
\mathbb E[\xi_1x_{i_1}x_{i_2}x_p]=0 \implies g_{p,1} \propto -\frac2{d^2}<0.
\]
Thus,
\begin{equation}
\label{eq:phase_1_conclusion}
    g_{i_1,1}>0,\quad g_{i_2,1}>0,\quad g_{p,1}<0 \quad \text{for all }p\notin\{i_1,i_2\}.
\end{equation}
Therefore one sign-gradient step increases the two relevant logits and decreases every irrelevant logit for the first serial CoT token.

\paragraph{Learning of the later CoT tokens $y_s$ for $s \ge 2$.}Assume inductively that the $(s - 1)$-th step CoT token is perfectly learned in the sense that
\[
    \hat{y}_{s - 1}=\tilde{y}_{s-1} = \prod_{r=1}^{s}x_{i_r}.
\]
The target token for $y_{s}$ is thus
\[
    \tilde{y}_s=\hat{y}_{s-1}x_{i_{s+1}} = \tilde{y}_{s-1}x_{i_{s+1}},
\]
and the visible set is $\{x_1,\ldots,x_d, \tilde{y}_{s - 1}\}.$ In the following we use $\tilde{y}_{s-1}$
and $\hat{y}_{s - 1}$ interchangeably. Recall Eq.~\eqref{eq:def_S}. At uniform initialization,
\[
    \xi_s=\frac{S+\tilde{y}_{s-1}}{M}.
\]

For a visible $z$, again, according to Eq.~\eqref{eq:def_grad_sft_serial_cot}, the SFT descent direction is, up to a positive constant,
\begin{equation}
\label{eq:form_g_serial_cot}
    G_z \propto \mathbb E\left[ 2\xi_s\,\tilde{y}_{s-1}x_{i_{s+1}}\,(z-\xi_s) \right],
\end{equation}
where we directly use $G_z$ to denote the gradient of the position of $z$ for simplicity (i.e., if $z$ is the $j$-th token of $(\sqx, \sqy)$ when generating the $s$-th token, then $G_z$ represents $g_{j,s}$). The second term is not related to any specific $z$:
    \begin{equation}
        \begin{aligned}
            \mathbb E[\xi_s^2\tilde{y}_{s-1}x_{i_{s+1}}] & =  \frac1{M^2} \mathbb E[(S+\tilde{y}_{s- 1})^2\tilde{y}_{s-1}x_{i_{s+1}}]\\
            & = \frac{1}{M^2}\left( \mathbb E[S^2\tilde{y}_{s-1}x_{i_{s+1}}] + 2\mathbb E[Sx_{i_s + 1}] + \mathbb E[\tilde{y}_{s-1}x_{i_{s+1}}] \right).
        \end{aligned}
    \end{equation}
    Noting that $  \mathbb E[Sx_{i_s + 1}]=1, \ \mathbb E[\tilde{y}_{s-1}x_{i_{s+1}}]=0,$ and 
    \[
        \mathbb E[S^2\tilde{y}_{s-1}x_{i_{s+1}}]=0
    \]
    as $S^2$ contains only monomials of degree 0 and 2, whereas $\tilde{y}_{s-1}x_{i_{s+1}}$ has degree at least 3, and thus no term in $S^2\tilde{y}_{s-1}x_{i_{s+1}}$ reduces to the constant monomial. Therefore, 
    \[
        \mathbb E[\xi_s^2\tilde{y}_{s-1}x_{i_{s+1}}]=\frac2{M^2}.
    \]
    In fact, the first term
    \begin{equation}
        \E\left[ 2\xi_s \tilde{y}_{s - 1}x_{i_{s + 1}} z\right]
    \end{equation}
    is closely related to the critical gradient component in Sec.~\ref{sec:general_rl} and \ref{sec:general_sft}, so we in fact only need to examine whether it satisfies the separation condition to show the step by step learning dynamics.
    
    In the following, we discuss three cases: $z$ is the relevant input token $x_{i_{s + 1}}$; $z$ is the previous token $y_{s - 1}$; and $z$ is any irrelevant token from $\sqx$. 
\begin{enumerate}
    \item \textbf{Relevant input token $x_{i_{s + 1}}$.}  For the relevant input token $z = x_{i_s + 1}$,
    \begin{equation}
    \label{eq:G_xis}
    \begin{aligned}
        G_{x_{i_s + 1}} & \propto \mathbb E\left[ 2\xi_s\tilde{y}_{s-1}x_{i_{s+1}}(x-\xi_s) \right]\\
                        & \propto  \mathbb E[\xi_s\tilde{y}_{s-1}] - \mathbb E[\xi_s^2\tilde{y}_{s-1}x_{i_{s+1}}].
    \end{aligned}
    \end{equation} 
    For the first term,
    \[
        \mathbb E[\xi_s\tilde{y}_{s-1}]
        = \frac1M\mathbb E[(S+\tilde{y}_{s-1})\tilde{y}_{s-1}]  = \frac{1}{M}
    \]
    because $\mathbb E[S\tilde{y}_{s - 1}]=0.$ 
    Combined with Eq.~\eqref{eq:G_xis}, we obtain 
    \begin{equation}
        G_{x_{i_s + 1}} \propto \frac1M-\frac2{M^2} = \frac{d-1}{(d+1)^2}>0 
    \end{equation}as desired. 
    \item \textbf{Previous CoT token $y_{s - 1}$.} 
        For the previous CoT token $z = \tilde{y}_{s - 1}$,
        \begin{equation}
        \begin{aligned}
             G_{\tilde{y}_{s - 1}} &  \propto \mathbb E\left[ 2\xi_s\tilde{y}_{s-1}x_{i_{s+1}}( \tilde{y}_{s - 1}-\xi_s) \right]  \\
             & \propto \mathbb E[\xi_sx_{i_{s + 1}}] -   \frac{2}{M^2} 
        \end{aligned}
        \end{equation}
        up to a positive constant. Each term has already been calculated in last case. Therefore
        \begin{equation}
            G_{\tilde{y}_{s - 1}} \propto \frac1M-\frac2{M^2} = \frac{d-1}{(d+1)^2}>0.
        \end{equation}
        More precisely, we in fact have
        \[
        G_{x_{i_s + 1}}=G_{\tilde{y}_{s - 1}}.
        \]
    \item \textbf{Irrelevant tokens from the input $\sqx$.} According to the form of $G_{p,s}$ in Eq.~\eqref{eq:form_g_serial_cot}, the zero-sum softmax gradient relation easily follows
    \begin{equation}
        \sum_{z\in\{x_1, \dots, x_d, \tilde{y}_{s - 1}\}} G_z = 0.
    \end{equation}
    Now let $x_n$ be an input token with $n\neq  i_{s+1}.$ By symmetry, all $G_{x_n}$ have the same value. As $G_{x_{i_s + 1}} = G_{\tilde{y}_{s - 1}}$, we must conclude that
    \[
        G_{x_n}<0 \ \text{ for all } \ n\neq  i_{s+1}.
    \]
\end{enumerate}
\paragraph{Conclusion.}Now we have the separation of the gradient (also the critical gradient component), following the analysis in Thm.~\ref{thm:condition_sft}, the stepwise learning dynamics of SFT without teacher forcing but with filtering under the serial CoT emerges. Specifically, the base case $s = 1$ follows directly from the Eq.~\eqref{eq:phase_1_conclusion}, where the two relevant positions  $i_1, i_2$ have positive directions, while every irrelevant input position has negative direction. Now let $s \ge 2$, and assume inductively that
\[
    \hat{y}_{s - 1}=\tilde{y}_{s-1}.
\]
Then 
\[
    g_{i_{s+1},s}=G_{d + s - 1,s} > 0
\]
and all other $g_{p, s} < 0$ for $p\notin\{i_{s + 1}, d+s-1\}$. Thus, under sign-gradient descent from symmetric initialization, the two relevant logits remain tied while all non-parent logits are decreased. 
This completes the induction following the argument in App.~\ref{app:proof_thm_sft}. \hfill \qed


\end{document}